\newtheorem{theorem}{Theorem}
\newtheorem{lemma}[theorem]{Lemma}
\newtheorem{corollary}[theorem]{Corollary}
\newtheorem{definition}[theorem]{Definition}
\newtheorem{remark}[theorem]{Remark}
\title{Randomized-MLP Regularization Improves Domain Adaptation and Interpretability in DINOv2}
\author{
  Joel Valdivia Ortega \(^{1,2,3,}\)\footnotemark[1]
  \And
  Lorenz Lamm  \(^{1,3,4}\)\\
  \And
  Franziska Eckardt \(^5\)\\
  \And
  Benedikt Schworm \(^5\)\\
  \And 
  Marion Jasnin \(^{2,6,}\)\thanks{Corresponding authors}
  \And
  Tingying Peng \(^{1,}\)\footnotemark[1]
  \And
  \\
   \(^1\)Helmholtz AI, Helmoltz Munich, Neuherberg, Germany\\
   \(^2\)Helmholtz Pioneer Campus, Helmholtz Munich, Neuherberg, Germany\\
   \(^3\)School of Computation, Information and Technology, TUM, Garching, Germany\\
   \(^4\)Biozentrum, University of Basel, Basel, Switzerland\\
   \(^5\)Department of Ophthalmology, LMU University Hospital, LMU Munich, Munich, Germany\\
   \(^6\)Department of Chemistry, TUM, Garching, Germany.
   \\
   \\
     \{\texttt{joel.valdiviaortega}, \texttt{lorenz.lamm}, \\ \texttt{marion.jasnin}, \texttt{tingying.peng}\}\texttt{@helmholtz-munich.de}\\
     \{\texttt{franziska.eckardt},\texttt{benedikt.schworm}\}\texttt{@med.uni-muenchen.de}
}
\begin{document}

\maketitle

\begin{abstract}
\setcounter{footnote}{0}
Vision Transformers (ViTs), such as DINOv2, achieve strong performance across domains but often repurpose low-informative patch tokens in ways that reduce the interpretability of attention and feature maps. This challenge is especially evident in medical imaging, where domain shifts can degrade both performance and transparency. In this paper, we introduce Randomized-MLP (RMLP) regularization, a contrastive learning-based method that encourages more semantically aligned representations. We use RMLPs when fine-tuning DINOv2 to both medical and natural image modalities, showing that it improves or maintains downstream performance while producing more interpretable attention maps. We also provide a mathematical analysis of RMLPs, offering insights into its role in enhancing ViT-based models and advancing our understanding of contrastive learning.\footnote{Code and pre-trained models are available at \url{https://github.com/peng-lab/rmlp}.}

\end{abstract}

\begin{figure}[ht]
    \centering

    % Row with group labels
    \begin{tabular}{>{\centering\arraybackslash}m{0.02\textwidth}
                >{\centering\arraybackslash}m{0.12\textwidth}  % Column 1
                @{\hspace{0.03\textwidth}}                     % Gap between Col 1 and 2
                >{\centering\arraybackslash}m{0.12\textwidth}  % Column 2
                @{\hspace{0.01\textwidth}}
                >{\centering\arraybackslash}m{0.12\textwidth}  % Column 3
                @{\hspace{0.01\textwidth}}
                >{\centering\arraybackslash}m{0.12\textwidth}  % Column 4
                @{\hspace{0.03\textwidth}}                     % Gap between Col 4 and 5
                >{\centering\arraybackslash}m{0.12\textwidth}  % Column 5
                @{\hspace{0.01\textwidth}}
                >{\centering\arraybackslash}m{0.12\textwidth}  % Column 6
                @{\hspace{0.01\textwidth}}
                >{\centering\arraybackslash}m{0.12\textwidth}}
        % Empty for first column, then group labels
        && \multicolumn{3}{c}{\hspace{-0.03\textwidth}\textbf{Attention Maps}} & \multicolumn{3}{c}{\hspace{-0.02\textwidth}\textbf{PCA Visualization}} \\
        
        % Row with column titles
        % &\textbf{Input} & \textbf{DINOv2-S} & \textbf{DINOv2-S} & \textbf{DINOv2-S} & \textbf{DINOv2-S} & \textbf{DINOv2-S} & \textbf{DINOv2-S} \\

        &\textbf{Input} & \textbf{DINOv2-S} & \textbf{DINOv2-S + MLP} & \textbf{DINOv2-S + R\(_{5}\)-MLP} & \textbf{DINOv2-S} & \textbf{DINOv2-S + MLP} & \textbf{DINOv2-S + R\(_{5}\)-MLP} \\

        % Row with images
        \rotatebox{90}{\textbf{BSDS300}} &
        \includegraphics[width=\linewidth]{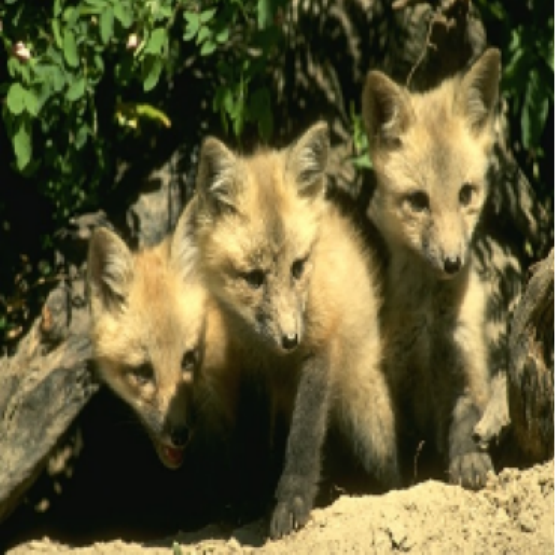} &
        \includegraphics[width=\linewidth]{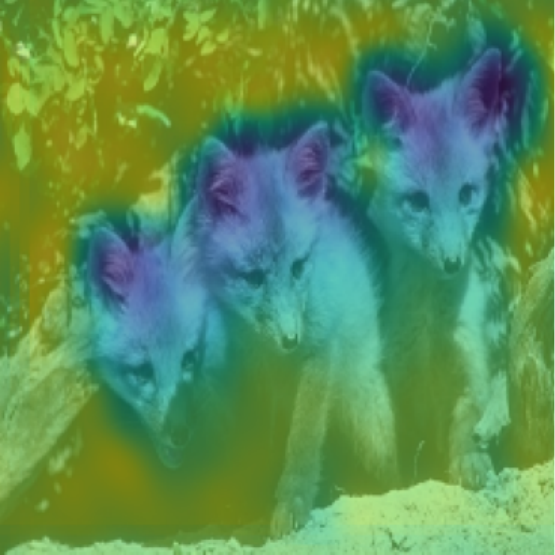} &
        \includegraphics[width=\linewidth]{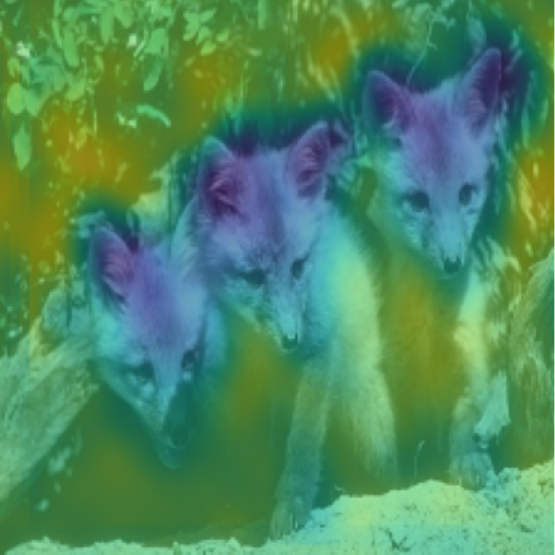} &
        \includegraphics[width=\linewidth]{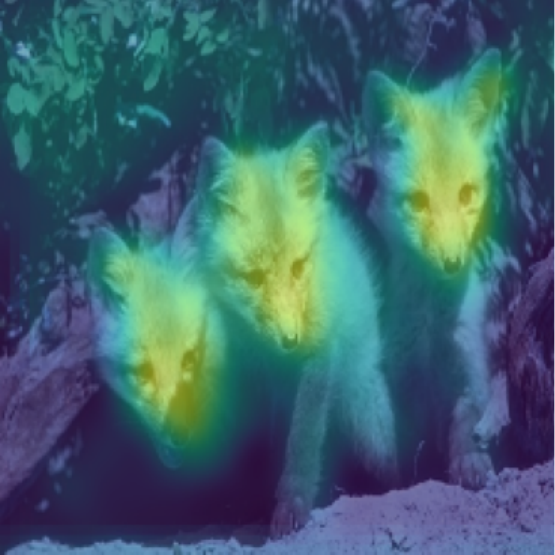} &
        \includegraphics[width=\linewidth]{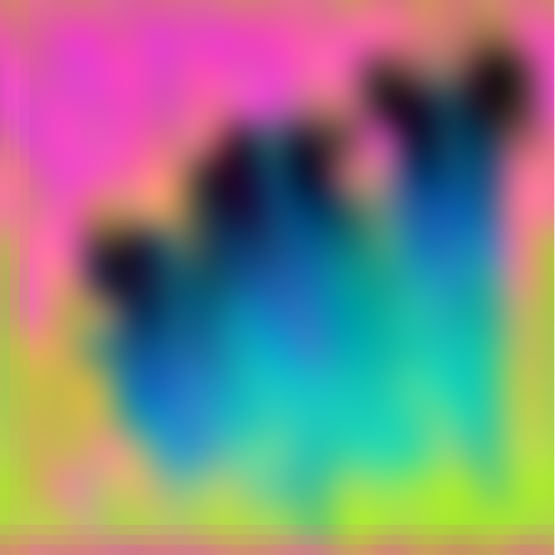} &
        \includegraphics[width=\linewidth]{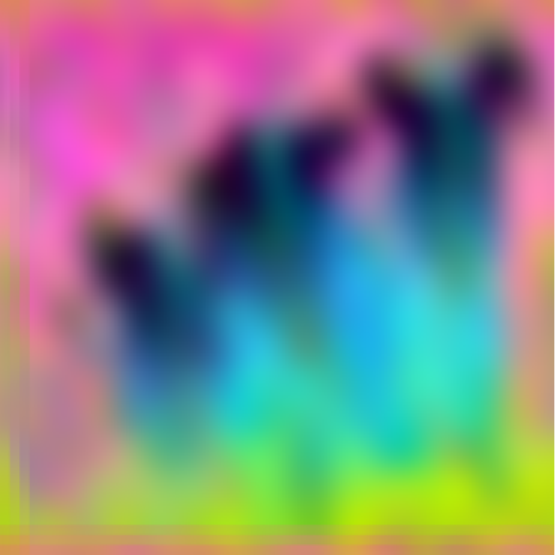} &
        \includegraphics[width=\linewidth]{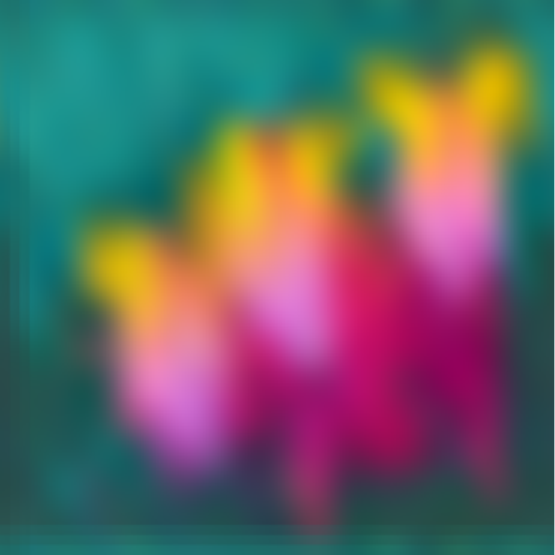} \\

        \rotatebox{90}{\textbf{OCTID}} &
        \includegraphics[width=\linewidth]{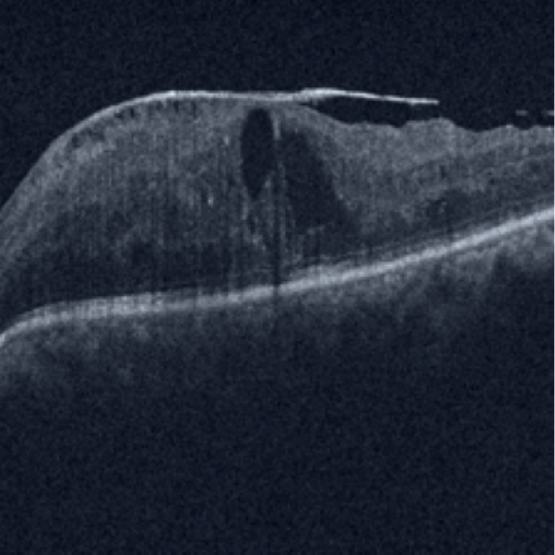} &
        \includegraphics[width=\linewidth]{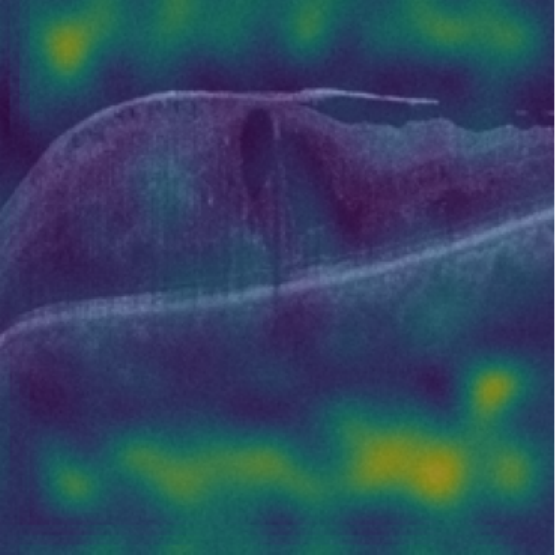} &
        \includegraphics[width=\linewidth]{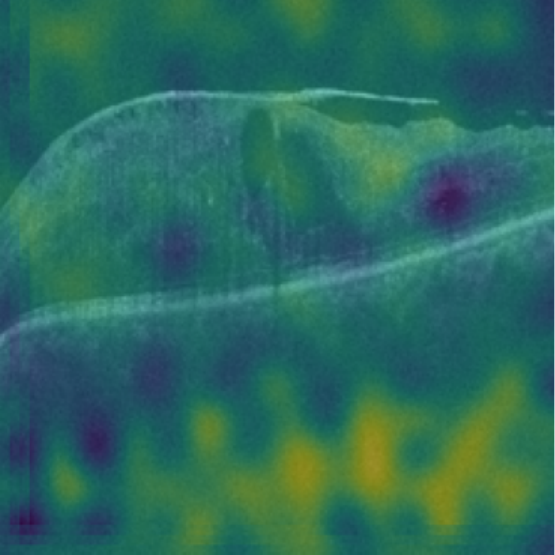} &
        \includegraphics[width=\linewidth]{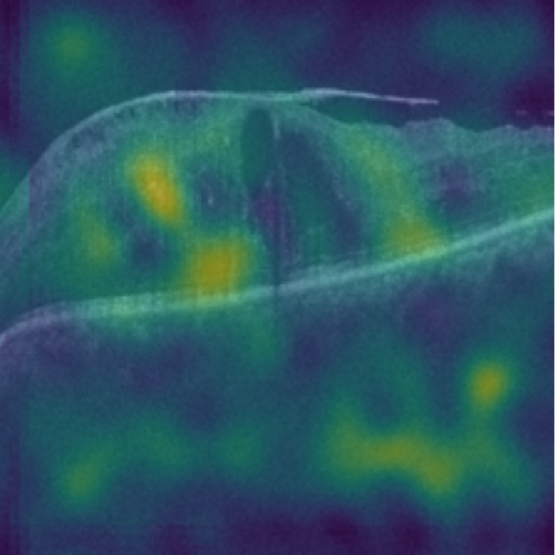} &
        \includegraphics[width=\linewidth]{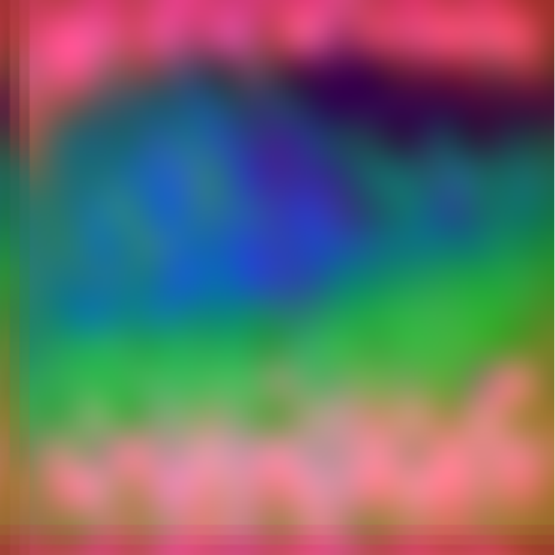} &
        \includegraphics[width=\linewidth]{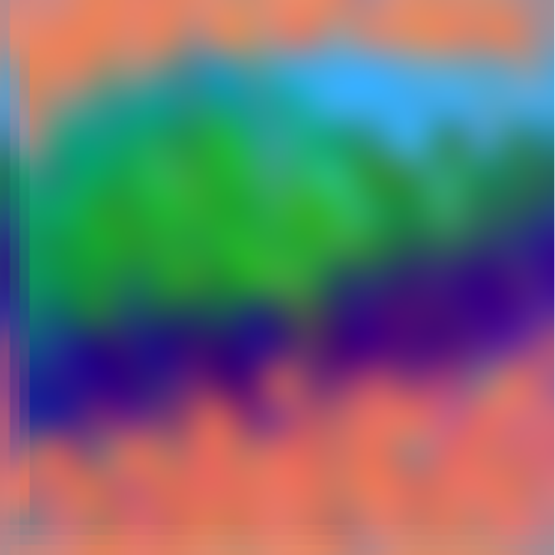} &
        \includegraphics[width=\linewidth]{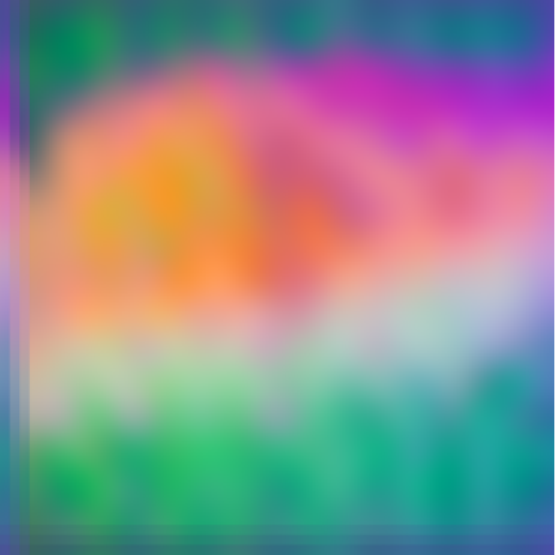} \\

        \rotatebox{90}{\shortstack{\textbf{Glaucoma} \\\textbf{Fundus}}} &
        \includegraphics[width=\linewidth]{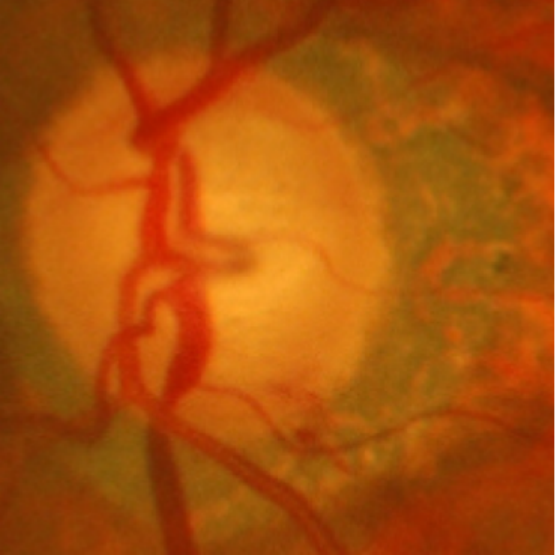} &
        \includegraphics[width=\linewidth]{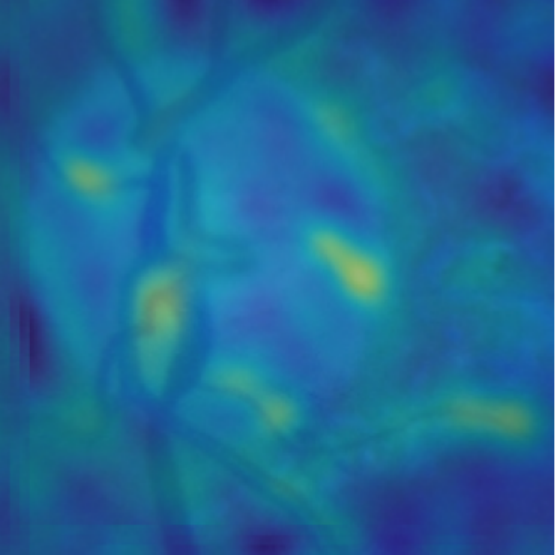} &
        \includegraphics[width=\linewidth]{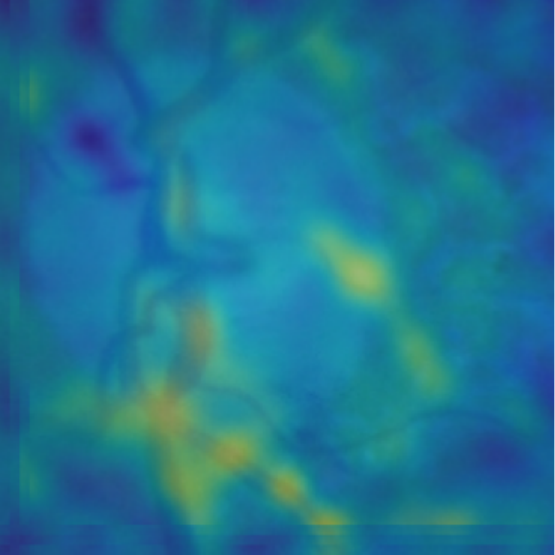} &
        \includegraphics[width=\linewidth]{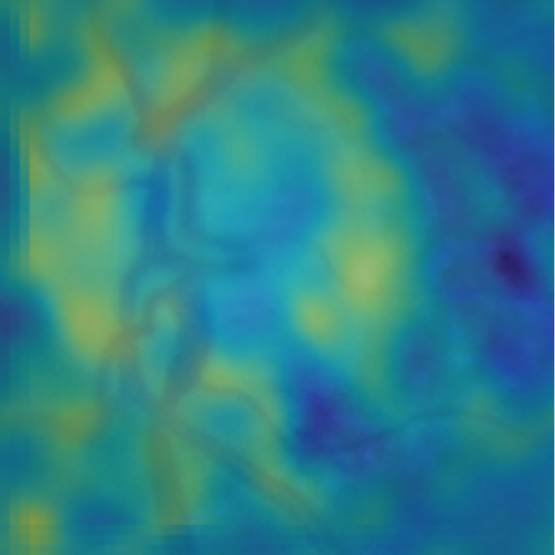} &
        \includegraphics[width=\linewidth]{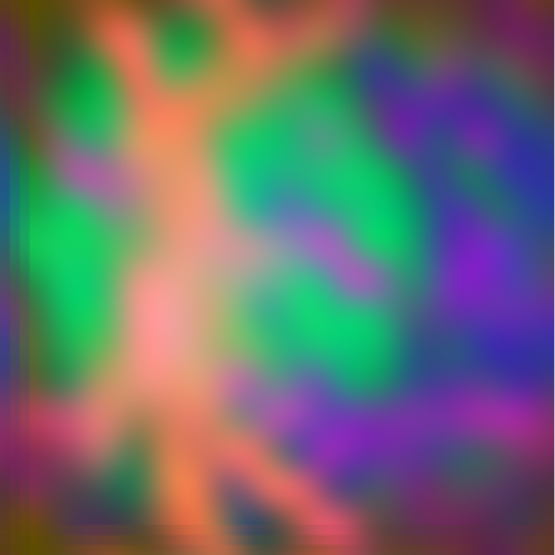} &
        \includegraphics[width=\linewidth]{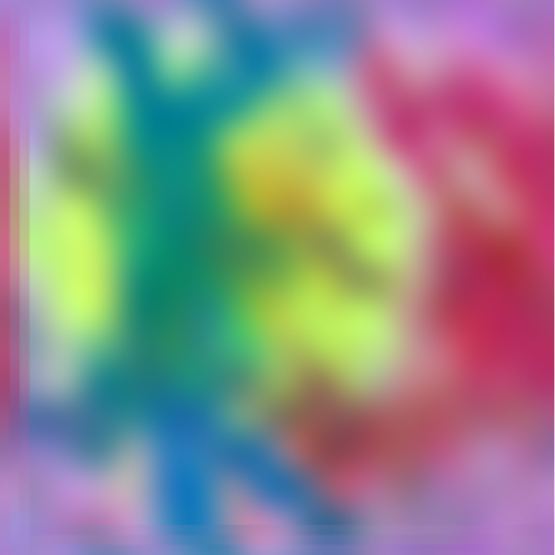} &
        \includegraphics[width=\linewidth]{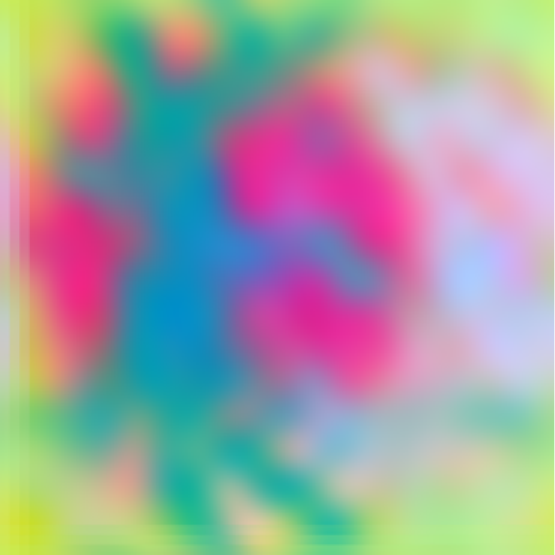} \\
    \end{tabular}

    \caption{\textbf{Left:} Input image from \citet{bsds300,octid} and \citet{glaucoma fundus} datasets from top to bottom. 
\textbf{Middle:} Second- (BSDS300) and first-order (OCTID/Glaucoma Fundus) attention maps (see Section \ref{sec:introduction}) over grayscale image. \textbf{Right:} Visualizations of top-3 PCA of patch tokens using DINOv2-S, DINOv2 fine-tuned on selected modality, and DINOv2 fine-tuned on selected modality using our regularizer. Colors are assigned in the RGB regime as the norm of the principal components. More qualitative results are available in Appendix \ref{sec:suplementary figures}.}
    \label{fig:comparison}
\end{figure}

\section{Introduction}
\label{sec:introduction}

Learning robust visual representations remains a central challenge in computer vision. Transformer-based models, such as Vision Transformers (ViTs) \cite{vit}, have emerged as powerful backbones, especially when trained with self-supervised methods like contrastive or reconstruction-based learning \cite{dinov2, dino, mae, clip, sam}. These approaches yield powerful models, which can be fine-tuned for specialized tasks, particularly in domains with limited labeled data, like medical and biological imaging \cite{dinobloom, unimed-clip, masam,fm in flourescence microscopy}.

However, despite their empirical success, ViTs exhibit persistent issues in how they encode and distribute semantic information. Previous studies have observed that large ViTs often store global context within semantically weak or background regions \cite{attention sinks, massive activations, registers}. These structural artifacts can degrade performance on dense prediction tasks and compromise interpretability, a critical concern in biomedical domains \cite{dinov2 finetuning degradation, dinov2 finetuning degradation 2}.

Our analysis reveals that these artifacts are not limited to large ViTs. Even though the small variant of DINOv2 \cite{dinov2} performs well on natural image tasks (Table~\ref{tab:downstream natural}), it still exhibits anomalous attention behavior. In medical imaging, DINOv2-S struggles to generalize to dense tasks such as segmentation (Table~\ref{tab:medical dense}), despite achieving strong performance on classification tasks (Table~\ref{tab:medical classification}). This supports the hypothesis that it sacrifices local detail in favor of encoding global information through uninformative tokens. By analyzing patch token norms and their principal components, we identify \emph{first-order artifacts}—where high-norm tokens align with background regions in ophthalmological images—and \emph{second-order artifacts}—where the variance captured by the top three principal components is misaligned with semantic relevance in natural images (Figure~\ref{fig:comparison}). First order artifacts are more pronounced in medical images, where DINOv2-S statistically attends more to void or anatomically irrelevant regions than to biologically meaningful tissue (Table~\ref{tab:correlation}).

To address these issues, we propose the \textit{Randomized Multi-Layer Perceptron} (RMLP), a lightweight, theory-driven regularization module that improves semantic alignment in ViT representations. RMLP is motivated by the geometry of contrastive learning \cite{dino,koleo} and enhances interpretability without requiring retraining or architectural changes. When applied to DINOv2, RMLP yields significant improvements in both natural and biomedical domains, achieving state-of-the-art classification and segmentation performance on ophthalmology datasets with minimal computational overhead. Additionally, we demonstrate that RMLP generalizes beyond DINOv2 by fine-tuning SwAV \cite{SwAV}—a model trained under a different self-supervised paradigm— and showing it consistently improves performance across modalities, indicating the broader applicability of our approach.

\section{Related Work}

\paragraph{Artifacts in Transformer Representations.} 
Transformers are known to exhibit uneven attention allocation across input tokens. In NLP, \citet{attention sinks} showed that early-position tokens receive disproportionate attention, regardless of their semantic importance. \citet{massive activations} attributed such behavior to sparse, high-norm activations. Extending these observations to vision, \citet{registers} found that ViTs often produce a small set of high-norm patch tokens concentrated in background regions, which act as ``registers'' for global context. While these patch repurposing have been mostly studied in large-scale ViTs, our work shows it also emerge in smaller models like DINOv2-S.

\paragraph{Mitigating Attention Artifacts.}
Prior attempts to address attention artifacts have focused on heuristic or architectural solutions. \citet{registers} proposed adding learnable tokens and retraining the model on proprietary datasets. \citet{not trained registers} showed that even untrained tokens can mitigate high-norm anomalies by absorbing excess activation. Others introduced auxiliary loss terms \cite{sinder} or attention-smoothing modules \cite{dvt} to encourage more uniform feature distributions. While these methods show empirical improvements, they lack theoretical grounding, require expensive retraining, or cannot be used as domain adaptation techniques. In contrast, our proposed RMLP module is easy to integrate, requires no retraining, and is grounded in the geometry of contrastive representation spaces. It also improves both semantic fidelity and interpretability across modalities.

\section{Randomized-Multi-Layer Perceptron (RMLP)}
\label{sec:rmlp}

Central to our approach is the observation that the structure of the representation heads—namely, the DINO \cite{dino} and iBOT \cite{ibot} heads—play a key role in how information is encoded across patch and class tokens. In DINOv2, the contrastive framework aligns student and teacher global representations via the DINO head (operating on the class token), while the iBOT head introduces masked image modeling by aligning patch tokens through a cross-entropy objective. Both heads are implemented as MLPs.

Building on the findings of \citet{registers}, we hypothesize that the representation heads are a key enabler of this behavior. To counteract this, we replace the learnable MLP heads with a randomized, non-trainable operator designed to preserve the topology of the representation space. This encourages the backbone to learn more robust and interpretable features, while preventing the heads from exploiting token-level classification shortcuts.

To avoid modifying the architecture, we first express the structure of the DINO and iBOT heads. A standard MLP \cite{dinov2} can be written as 
\(f = \phi_r \circ \alpha_{r-1} \circ \dots \circ \alpha_1 \circ \phi_1,
\)
where \(\phi_i\) are linear layers and \(\alpha_i\) activations. We replace each \(\phi_i\) with a randomized map \(\varphi_i:\mathbb{R}^m \rightarrow \mathbb{R}^n\) as
\begin{align}\label{rle layer}
\varphi_i((x_1, \dots, x_m)) = (x_1, \dots, x_n) + \Gamma_i (x_1, \dots, x_m)^\top,
\end{align}
where \((x_1, \dots, x_n)\) can be a truncated or zero-padded version of the input vector, \(\Gamma_i{\in}\mathbb{R}^{n{\times}m}\) is a Gaussian matrix with i.i.d. entries drawn  from \(\mathcal{N}(0, \lambda/n)\) and \(\lambda\) is a tunable \textit{amplitude}. This operator can also be seen as a residual connection.  Thus, we formally define an R\(_\lambda\)-MLP as 
\begin{align}\label{rmlp}
g = \varphi_r \circ \alpha_{r-1} \circ \dots \circ \alpha_1 \circ \varphi_1.
\end{align}
RMLPs introduce no trainable parameters, yet remain fully compatible with end-to-end contrastive training objectives like DINO \cite{dino} and iBOT \cite{ibot}.

\section{Theoretical Analysis}
\label{sec:theorerical analysis}

We now develop a topological framework to analyze how ViT embeddings evolve while passing through transformer blocks (Theorem \ref{summary of vit structure}), the way it generalizes from its training data into a bigger domain (Theorem \ref{t is homeo} and Corollary \ref{tranformers topology}) and how our RMLP regularizer impacts the contrastive learning paradigm (Theorem \ref{distortion control}). This enables the characterization of RMLPs as random operators which turn point embeddings into \textit{probability balls} (Corollary \ref{fundament of rmlp}), improving robustness and promoting sparcity without altering the topology of learned representations when an adequate amplitude is chosen.

\begin{theorem}\label{summary of vit structure}
    Let \(\Omega\) be an image space and \(\Psi\) a latent space. A ViT \(\mathcal{V}:\Omega\rightarrow\Psi\) can be decomposed as a tokenization function \(\mathcal{C}:\Omega\rightarrow\Psi\) followed by a sequence of transformer blocks \(T:\Psi\rightarrow\Psi\), which are defined by local orthonormal bases generated in the attention heads by the queries, keys, and values layers along with the input data itself.
\end{theorem}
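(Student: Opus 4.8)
The plan is to argue constructively, by unfolding the definition of a Vision Transformer and rewriting each layer in the claimed form. First I would fix $\Psi$ to be the token–sequence space $\mathbb{R}^{(N+1)\times d}$ ($N$ patch tokens together with the class token) and recall that a ViT factors as $\mathcal{V} = T_L \circ \dots \circ T_1 \circ \mathcal{C}$, where the tokenizer $\mathcal{C}:\Omega\to\Psi$ is the (fixed, affine) patch-embedding-plus-positional-encoding map, possibly augmented by prepending the class token, and each $T_\ell:\Psi\to\Psi$ is one transformer block. Since a composition of maps of the asserted form again has that form, it suffices to treat a single block $T$ and then set $T = T_L\circ\dots\circ T_1$.

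Second, I would open up one block: multi-head self-attention followed by a token-wise MLP, each wrapped in a normalization and a residual connection. The only token-mixing step is the attention. For head $h$, with projections $W_Q^h, W_K^h, W_V^h$, put $Q = X W_Q^h$, $K = X W_K^h$, $V = X W_V^h$, so the head output is $AV$ with $A = \mathrm{softmax}\!\left(QK^\top/\sqrt{d_h}\right)$ row-stochastic. The key point is that each row of $AV$ is a convex combination of the rows of $V$, hence lies in their linear span; a Gram–Schmidt (or, to remove basis ambiguity, a polar/SVD) factorization of the value block then yields an orthonormal basis $O_h$ of that span which is determined jointly by $W_V^h$ and the input data $X$ — this is the ``local orthonormal basis generated by the values layer''. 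The coordinates of the output tokens in $O_h$ are supplied by $A$, i.e.\ by the queries and keys through the softmax, and the analogous orthonormalization of $QK^\top = X W_Q^h (W_K^h)^\top X^\top$ puts the query/key interaction itself in an orthonormal frame. This establishes the characterization of the attention heads.

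Third, I would reassemble the block: the multi-head concatenation and output projection, the residual additions, layer normalization, and the token-wise MLP are all deterministic functions of the data and of the basis coordinates just extracted, so the whole block — and therefore $T$ and $\mathcal{V}$ — is ``defined by'' these input-dependent bases. I would also record explicitly that the bases move with $X$, which is the sense of ``local'', and note that this per-block frame picture is precisely the object on which the later homeomorphism and distortion results (Theorems \ref{t is homeo} and \ref{distortion control}) operate.

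The main obstacle is not any individual computation but making ``defined by local orthonormal bases'' precise enough that the statement is more than a re-expression of the architecture: choosing a canonical orthonormalization (the polar factor rather than raw Gram–Schmidt) to kill the basis ambiguity, handling rank-deficient value blocks by padding the span with an arbitrary orthonormal completion, and tracking how the softmax normalization and the multi-head output projection interact with the per-head frames when they are glued back together. Once these bookkeeping points are fixed, the proof is a direct unwinding of the transformer block.
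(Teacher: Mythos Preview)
Your proposal is correct and would prove the statement, but it takes a different route from the paper on the central point: where the ``local orthonormal basis'' comes from. You extract it by orthonormalizing the value block $V=XW_V^h$ (via Gram--Schmidt or a polar factor) and separately orthonormalizing $QK^\top$. The paper instead writes the pre-softmax scores as $\mathcal{L}_Q(xx^\top)\mathcal{L}_K^\top$, observes that $xx^\top$ is symmetric, and applies the spectral theorem once: the eigenbasis of $xx^\top$ is the input-dependent orthonormal frame, and the change-of-basis matrices are absorbed into modified operators $\mathcal{L}_Q',\mathcal{L}_K'$. The paper also stresses more explicitly than you do that each block is a \emph{translation} $x\mapsto x+s_\ell(x)$, with the direction $s_\ell$ determined by that frame.

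What each approach buys: the paper's diagonalization is shorter and gives a canonical basis for free, so none of your bookkeeping (choosing the polar factor over Gram--Schmidt, padding rank-deficient spans, gluing per-head frames through the output projection) is needed. Your approach, on the other hand, makes the role of the value vectors more transparent --- the output tokens literally sit in the span of $V$ with coordinates supplied by the softmax --- and would generalize more cleanly to attention variants where the $xx^\top$ structure is not present. Either argument suffices for the informal claim in the theorem; the paper simply chose the one-line spectral route.
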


\begin{proof}
    ViTs use the attention mechanism to decompose the data using queries and keys layers, creating a field that data follows during the representation process. The value layers act as embeddings within the space defined by queries and keys. For a complete proof, refer to Theorem \ref{math model vits} in Appendix \ref{sec:math supplementary}.
\end{proof}

Training with the KoLeo regularizer \cite{koleo} ensures that \(\mathcal{V}\) remains injective since the loss, namely, \[\mathcal{L}_{\text{KoLeo}}\left(\{x_1,\dots,x_n\}\right) = -\frac{1}{n} \sum_{i=1}^n \log\left(\min_{j \neq i} \|x_i - x_j\|_2\right)\] would diverge otherwise. Continuity of \(\mathcal{C}\), along with KoLeo and augmentations, guarantees that \(T\) is locally injective (see Def. \ref{locally injective}) on \(\mathcal{C}[\Omega]\). These properties allow \(T\) to generalize from its training data into \(\Psi\) while preserving its embedding ability as shown in Corollary \ref{tranformers topology}.

\begin{theorem}\label{t is homeo}
    Being an homeomorphism an invertible continuous function with continuous inverse, assuming \(T\) is locally injective and taking \(\Psi\) to be a metric space, there exists \(\varepsilon>0\) such that \(T\) is an homeomorphism on \(\{x\in \Psi : \delta(x, \mathcal{C}[\Omega]) < \varepsilon\}\), where \(\delta\) is the distance on \(\Psi\).
\end{theorem}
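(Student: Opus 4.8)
The plan is to upgrade the pointwise local injectivity of $T$ on $\mathcal{C}[\Omega]$ to injectivity on a uniform metric tube around it, using compactness of $\mathcal{C}[\Omega]$ together with the global injectivity of $T$ that the KoLeo term supplies, and then to convert ``injective and continuous'' into ``homeomorphism onto image'' by a compact-to-Hausdorff argument after shrinking the tube slightly.

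First I would record two preliminary facts. Write $K := \mathcal{C}[\Omega]$. (i) $K$ is compact: $\Omega$ is compact (a bounded, closed set of pixel values) and $\mathcal{C}$ is continuous by Theorem \ref{summary of vit structure}, so $K$ is a compact subset of $\Psi$. (ii) $T$ is injective on $K$: since KoLeo forces $\mathcal{V} = T \circ \mathcal{C}$ to be injective, $T(\mathcal{C}(x_1)) = T(\mathcal{C}(x_2))$ implies $\mathcal{V}(x_1) = \mathcal{V}(x_2)$, hence $x_1 = x_2$ and $\mathcal{C}(x_1) = \mathcal{C}(x_2)$.

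The core step is to produce $\varepsilon > 0$ with $T$ injective on $U_\varepsilon := \{x \in \Psi : \delta(x, K) < \varepsilon\}$. I would argue by contradiction: otherwise, for every $n$ there are $a_n \neq b_n$ in $U_{1/n}$ with $T(a_n) = T(b_n)$; choosing points of $K$ within $1/n$ of each and passing to subsequences by compactness of $K$, we obtain $a_n \to a$ and $b_n \to b$ with $a, b \in K$. Continuity of $T$ gives $T(a) = T(b)$, so $a = b$ by (ii). But local injectivity (Definition \ref{locally injective}) gives an open ball around $a$ on which $T$ is injective, and for large $n$ both $a_n$ and $b_n$ lie in it, contradicting $a_n \neq b_n$ with equal images. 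The two ingredients doing the work here are compactness of $K$ (bad pairs cannot escape to infinity) and the \emph{global} injectivity of $T$ on $K$; without the latter, two far-apart points of $K$ could be identified by $T$ (think of $x \mapsto x^2$ on $\{-1,1\}$) and no tube would be injective, so the order of these arguments is essential.

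Finally, fix $\varepsilon' \in (0,\varepsilon)$. Then $T$ is injective on $\overline{U_{\varepsilon'}}$, which is closed and bounded in $\Psi$ and hence compact once $\Psi$ is taken to be finite-dimensional (as latent spaces are in practice); a continuous injection from a compact space into a Hausdorff space is a homeomorphism onto its image, and restricting such a homeomorphism to the open subset $U_{\varepsilon'}$ again gives a homeomorphism onto its image. Renaming $\varepsilon'$ as $\varepsilon$ concludes. I expect the globalization step to be the main obstacle, precisely because local injectivity alone does not preclude distant collisions; a cleaner alternative that avoids any compactness assumption on $\Psi$ is to read local injectivity as ``local homeomorphism onto image,'' since a globally injective local homeomorphism is automatically a homeomorphism onto its (open) image.
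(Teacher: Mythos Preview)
Your proposal is correct and follows the same route as the paper: compactness of $K=\mathcal{C}[\Omega]$, a contradiction argument upgrading local injectivity to injectivity on a uniform tube (this is exactly the paper's Lemma~\ref{robustness}), and then a compact-to-Hausdorff step to get the homeomorphism (the paper's Lemma~\ref{homeomorphism}). If anything you are more careful than the paper on two points: you make the \emph{global} injectivity of $T$ on $K$ explicit via the KoLeo argument (the paper's Lemma~\ref{robustness} states only local injectivity in its hypothesis yet silently uses global injectivity on $K$ in its proof, so your $x\mapsto x^2$ on $\{-1,1\}$ remark is exactly on point), and you flag the finite-dimensionality needed for $\overline{U_{\varepsilon'}}$ to be compact, which the paper simply asserts.
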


\begin{proof}
    \(T\) is continuous from Theorem \ref{summary of vit structure}. Since images can only take values from a finite set, \(\Omega\) is bounded and by construction, \(\mathcal{C}\) is continuous. Further, taking the codomain of \(\mathcal{C}\) as a metric space and using again that images take discrete values, we can assume \(\mathcal{C}[\Omega]\) is compact because of being a finite union of closed sets, namely singletons of images. Thus, for Lemma \ref{robustness} in Appendix \ref{sec:math supplementary}, an \(\varepsilon>0\) exists such that \(T\) is injective in \(\{x\in \Psi : \delta(x, \mathcal{C}[\Omega]) < \varepsilon\}\), i.e., is invertible in that subset. Furthermore, since \(\Psi\) is metric, the result follows from Lemma \ref{homeomorphism} in Appendix \ref{sec:math supplementary}.
\end{proof}

\begin{corollary}\label{tranformers topology}
    If \(\mathcal{A}\subseteq\Omega\) is the training data, and \(T\) is locally injective on \(\mathcal{A}\), the following holds:
    \begin{itemize}
        \item[\mayadigit{0})] There exists \(\varepsilon>0\) such that \(T\) is an homeomorphism on an \(\varepsilon\)-cloud containing \(\mathcal{A}\) (see Def. \ref{cloud}).
        \item[\mayadigit{1})] Let \(P,Q\subseteq\Omega\). \(\mathcal{C}[P]\cup \mathcal{C}[Q]\) is disconnected in \(\Psi\) if and only if \(\mathcal{V}[P]\cup\mathcal{V}[Q]\) is disconnected in \(\Psi\) (see Def. \ref{disconnection}), which can contribute to the batch effect.
        \item[\mayadigit{2})] If \(D \subseteq \Psi\) is dense in \(\Psi\) (see Def. \ref{density}), then \(\mathcal{V}[D]\) is dense in \(\mathcal{V}[\Psi]\) .
    \end{itemize}
\end{corollary}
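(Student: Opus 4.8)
The plan is to derive all three items from the factorization \(\mathcal{V}=T\circ\mathcal{C}\) of Theorem~\ref{summary of vit structure} together with Theorem~\ref{t is homeo}. First I would collect the ingredients: \(\mathcal{C}\) is continuous and injective (distinct images yield distinct token arrays), \(T\) is continuous, and the KoLeo term forces \(\mathcal{V}\), hence \(T\) on \(\mathcal{C}[\Omega]\), to be injective; continuity of \(\mathcal{C}\) together with the augmentations then supplies exactly the local-injectivity hypothesis of Theorem~\ref{t is homeo}, so \(T\) is actually a homeomorphism on some open neighborhood of \(\mathcal{C}[\Omega]\). Item~\(0\) is then obtained by rerunning the proof of Theorem~\ref{t is homeo} verbatim with the training set \(\mathcal{A}\) in place of \(\Omega\): since images take values in a finite set, \(\mathcal{A}\) is finite, so \(\mathcal{C}[\mathcal{A}]\) is compact as a finite union of singletons; Lemma~\ref{robustness} then yields \(\varepsilon>0\) with \(T\) injective on \(\{x\in\Psi:\delta(x,\mathcal{C}[\mathcal{A}])<\varepsilon\}\) --- which is the \(\varepsilon\)-cloud of \(\mathcal{A}\) in the sense of Definition~\ref{cloud} and contains \(\mathcal{C}[\mathcal{A}]\) --- and Lemma~\ref{homeomorphism} upgrades injectivity to a homeomorphism because \(\Psi\) is metric.

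For item~\(1\) I would use \(\mathcal{V}[P]\cup\mathcal{V}[Q]=\mathcal{V}[P\cup Q]=T[\mathcal{C}[P\cup Q]]\) and \(\mathcal{C}[P]\cup\mathcal{C}[Q]=\mathcal{C}[P\cup Q]\). The forward implication is unconditional: \(T\) is continuous and a continuous image of a connected set is connected, so if \(\mathcal{C}[P\cup Q]\) were connected then \(\mathcal{V}[P\cup Q]\) would be too; contrapositively, disconnectedness of \(\mathcal{V}[P]\cup\mathcal{V}[Q]\) forces disconnectedness of \(\mathcal{C}[P]\cup\mathcal{C}[Q]\). For the converse I would restrict the homeomorphism \(T\) from the first step to the subspace \(\mathcal{C}[P\cup Q]\subseteq\mathcal{C}[\Omega]\): a restriction of a homeomorphism is again a homeomorphism onto its image, and homeomorphisms neither create nor destroy disconnectedness (Definition~\ref{disconnection}), so disconnectedness of \(\mathcal{C}[P]\cup\mathcal{C}[Q]\) transfers to \(\mathcal{V}[P]\cup\mathcal{V}[Q]\). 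Chaining the two implications gives the equivalence, and the batch-effect remark is then just the observation that a separation already present among token representations is invariant under the transformer stack.

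For item~\(2\), since \(\mathcal{V}\) --- concretely its transformer component \(T:\Psi\to\Psi\), on whose domain \(D\) lives --- is continuous by Theorem~\ref{summary of vit structure}, I would invoke the standard fact that a continuous map sends the closure of a set into the closure of that set's image: density of \(D\) gives \(\overline{D}=\Psi\), whence \(\mathcal{V}[\Psi]=\mathcal{V}[\overline{D}]\subseteq\overline{\mathcal{V}[D]}\), i.e.\ \(\mathcal{V}[D]\) is dense in \(\mathcal{V}[\Psi]\) in the sense of Definition~\ref{density}. The main obstacle I anticipate is bookkeeping rather than mathematical depth: item~\(1\) is asserted for arbitrary \(P,Q\subseteq\Omega\), so its converse half genuinely needs \(T\) to be a homeomorphism on a neighborhood of all of \(\mathcal{C}[\Omega]\), not merely near the training data \(\mathcal{A}\) --- this is admissible only because the discussion preceding the corollary promotes local injectivity from \(\mathcal{A}\) to \(\mathcal{C}[\Omega]\) using KoLeo and augmentations, and I would flag that dependence explicitly; a secondary point is checking that \(\mathcal{C}\) is injective and that Definition~\ref{cloud}'s \(\varepsilon\)-cloud is precisely the metric neighborhood used in item~\(0\).
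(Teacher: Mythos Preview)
Your proposal is correct and follows essentially the same route as the paper: the paper's own proof simply cites Theorem~\ref{t is homeo} for item~0, invokes ``homeomorphisms preserve connections and disconnections'' for item~1, and for item~2 passes through \(\mathcal{C}\) continuous and then \(T\) a homeomorphism. Your version supplies the details the paper elides (the compactness of \(\mathcal{C}[\mathcal{A}]\), the two implications in item~1, the closure argument for density) and, for item~2, uses only continuity of \(T\) rather than the full homeomorphism --- a mild simplification, since \(f[\overline{D}]\subseteq\overline{f[D]}\) needs only continuity; your flag that item~1 as stated requires \(T\) to be a homeomorphism on all of \(\mathcal{C}[\Omega]\), not merely near \(\mathcal{A}\), is a genuine observation that the paper's proof also silently relies on.
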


\begin{proof}
     A complete proof is in Appendix \ref{sec:math supplementary} in Theorem \ref{tranformers topology full proof}.
\end{proof}

\begin{theorem}\label{distortion control}
    Let \(\{p_1,\dots,p_N\} \subseteq \mathbb{R}^m\), \(\varepsilon>0\), and \(\lambda>0\), with \(\Gamma\) a matrix of size \(n \times m\) with i.i.d. normal entries \(\mathcal{N}(0, \lambda n^{-1})\). Then, for each \(x \in \mathbb{R}^m\), \(\mathbb{E}[\|\Gamma x\|_2^2] = m\lambda n^{-1}\|x\|_2^2\). Moreover, \(\Gamma\) produces an \(\varepsilon\)-distortion (see Def. \ref{embedding with distortion}) on the set \(E = \left\{\frac{p_i - p_j}{\|p_i - p_j\|} : 1 \leq i < j \leq N \right\}\) with high probability if \[\lambda n^{-1} < \frac{\varepsilon^2}{8 \ln N}.\]
\end{theorem}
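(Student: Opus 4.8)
The claim has two independent pieces — an exact first-moment identity and a high-probability distortion estimate — and the plan is to dispatch them in that order. \textbf{First moment.} Writing $(\Gamma x)_k = \sum_{j=1}^m \Gamma_{kj}x_j$, expand $\|\Gamma x\|_2^2 = \sum_{k=1}^n \sum_{j,l=1}^m \Gamma_{kj}\Gamma_{kl}\,x_j x_l$ and take expectations. Since the entries are i.i.d.\ with mean $0$ the cross terms vanish, and $\mathbb{E}[\Gamma_{kj}^2] = \lambda n^{-1}$, so only the diagonal terms survive and one reads off the stated value. This needs nothing beyond linearity of expectation and independence of the entries; it is the easy half.

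\textbf{Distortion bound.} Here I would run the classical Johnson--Lindenstrauss argument on the finite set $E$. Fix $u \in E$; since $\|u\|_2 = 1$, each coordinate $(\Gamma u)_k = \sum_j \Gamma_{kj}u_j$ is $\mathcal{N}(0,\lambda n^{-1})$, and the coordinates are independent because they involve disjoint rows of $\Gamma$. Hence $\tfrac{n}{\lambda}\|\Gamma u\|_2^2 \sim \chi^2_n$, and I would apply the Laurent--Massart chi-square tail bounds (equivalently, sub-exponential concentration of a sum of independent squared Gaussians) to obtain a single-vector estimate of the form $\mathbb{P}\big(\Gamma \text{ distorts } u \text{ by more than } \varepsilon\big) \le 2\exp(-c\,n\varepsilon^2/\lambda)$ for a universal constant $c$ — precisely the regime in which $\lambda n^{-1} < \varepsilon^2/(8\ln N)$ places us. A union bound over the $|E| = \binom{N}{2} < N^2$ normalized difference vectors then bounds the total failure probability by $2N^2\exp(-c\,n\varepsilon^2/\lambda)$; since $\ln|E| < 2\ln N$, the hypothesis forces this below a quantity tending to $0$, so with high probability $\Gamma$ is an $\varepsilon$-distortion on $E$ in the sense of Definition~\ref{embedding with distortion}.

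\textbf{Where the work is.} The conceptual content is just JL; the real effort is constant-chasing — lining the concentration constant $c$ up with the factor $8$ in the hypothesis and correctly carrying the non-standard variance scaling $\lambda n^{-1}$ (rather than the textbook $1/n$) through the $\chi^2_n$ deviation into the distortion statement, so that a single exceedance has probability at most $N^{-(2+\eta)}$ for some $\eta>0$ and the union bound closes. It is also worth noting how this feeds the rest of the section: together with the residual form $\varphi_i(x) = (x_1,\dots,x_n) + \Gamma_i x^\top$ from \eqref{rle layer}, the bound is exactly what lets one choose $\lambda$ small enough that each R$_\lambda$-MLP layer is a controlled perturbation of the truncation/zero-padding map, which underlies the topology-preservation claim of Corollary~\ref{fundament of rmlp}.
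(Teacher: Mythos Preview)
Your first-moment computation matches the paper's (its Remark~\ref{control over norm} does the same diagonal-term expansion). For the distortion half you take a genuinely different route: you run the classical Johnson--Lindenstrauss argument---per-vector $\chi^2_n$ concentration followed by a union bound over the $\binom{N}{2}<N^2$ elements of $E$---whereas the paper instead invokes the restricted-singular-value concentration theorem from \cite{rla} (stated as Lemma~\ref{radius}), bounding $\sigma_{\min}(\Gamma,E)$ and $\sigma_{\max}(\Gamma,E)$ in one shot via the Gaussian width of $E$ and then checking that the hypothesis $\lambda n^{-1}<\varepsilon^2/(8\ln N)$ pushes the deviation below $\varepsilon$. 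Your approach is more elementary and self-contained (nothing beyond a scalar chi-square tail is needed), and it makes the origin of the constant $8$ transparent; the paper's route is a one-line citation and would extend verbatim to infinite $E$ of bounded Gaussian width, which the union-bound argument cannot do. One small caution: if you actually carry your expansion to the end, the sum over $k=1,\dots,n$ cancels the $n^{-1}$ and yields $\lambda\|x\|_2^2$, not the $m\lambda n^{-1}\|x\|_2^2$ printed in the statement---the paper's Remark~\ref{control over norm} contains the same arithmetic slip, so do not simply ``read off the stated value.''
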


\begin{proof} (\textit{Sketch}) 
    Using concentration results for Gaussian matrices, we follow \cite{rla}. A complete proof can be found at Appendix \ref{sec:math supplementary} in Theorem \ref{distortion control full proof}.
\end{proof}

\begin{corollary}\label{fundament of rmlp}
    If \(E = \{x_1, \dots, x_N\} \subseteq \mathbb{S}^{m-1}\), \(\varepsilon > 0\), and \(\varphi : \mathbb{R}^m \to \mathbb{R}^n\) is defined as in Eq. \ref{rle layer} with \(\lambda n^{-1} < \varepsilon^2 / (8 \ln N)\), then for all \(i \in \{1, \dots, N\}\), \[\| (x_{i,1}, \dots, x_{i,n}) - \varphi(x_i) \|_2 < \varepsilon\] with high probability.
\end{corollary}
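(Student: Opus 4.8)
The plan is to observe that the quantity in the display is literally the perturbation introduced by $\Gamma$, and then to read the bound off from Theorem~\ref{distortion control} with $E$ itself playing the role of its finite set of unit vectors. First, by the definition of $\varphi$ in Eq.~\ref{rle layer}, for every $i$ we have $\varphi(x_i) = (x_{i,1},\dots,x_{i,n}) + \Gamma x_i$, where $(x_{i,1},\dots,x_{i,n})$ is the truncation (if $n<m$) or zero-padding (if $n\ge m$) of $x_i$. Consequently, regardless of the relation between $m$ and $n$,
\[
\bigl\|(x_{i,1},\dots,x_{i,n}) - \varphi(x_i)\bigr\|_2 = \|\Gamma x_i\|_2 ,
\]
so the corollary is equivalent to the uniform estimate $\max_{1\le i\le N}\|\Gamma x_i\|_2 < \varepsilon$ holding with high probability.

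For that estimate I would invoke the concentration argument behind Theorem~\ref{distortion control}. Fix $i$; since $x_i\in\mathbb{S}^{m-1}$, the coordinates of $\Gamma x_i$ are i.i.d.\ $\mathcal{N}(0,\lambda n^{-1})$, so $\|\Gamma x_i\|_2^2$ is $\lambda n^{-1}$ times a $\chi^2_n$ variable --- exactly the random quantity controlled in that proof when $x_i$ is viewed as an element of the unit-vector set appearing there. Applying the Gaussian (sub-exponential/Chernoff) tail bound used there, $\mathbb{P}(\|\Gamma x_i\|_2\ge\varepsilon)$ decays exponentially, with rate of order $n\varepsilon^2/\lambda$, and the hypothesis $\lambda n^{-1} < \varepsilon^2/(8\ln N)$ is precisely the calibration that makes this probability smaller than $1/N$ up to the universal constant; a union bound over $i=1,\dots,N$ then yields $\max_i\|\Gamma x_i\|_2 < \varepsilon$ with high probability, which combined with the identity above is the claim. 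Equivalently, one obtains it from Theorem~\ref{distortion control} directly once Definition~\ref{embedding with distortion} is unpacked: producing an $\varepsilon$-distortion on a finite set of unit vectors means exactly that $\|\Gamma u\|_2 < \varepsilon$ for each $u$ in the set, because the residual form of $\varphi$ makes its deviation from the truncation/padding map equal to $\Gamma u$.

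The main thing to be careful about is bookkeeping rather than substance: Theorem~\ref{distortion control} is phrased for the set of normalized pairwise differences of $N$ prescribed points (of cardinality up to $\binom{N}{2}$), whereas here one wants the estimate for $N$ prescribed unit vectors, so the union bound of that proof should be run over exactly these $N$ events --- this is what keeps the constant at $8\ln N$, and in particular one should not route the argument through an auxiliary point set such as $\{0,x_1,\dots,x_N\}$, which would replace $\ln N$ by $\ln(N+1)$ and just miss the stated hypothesis. The only other point requiring a line of care is the translation between Definition~\ref{embedding with distortion} and the plain inequality $\|\Gamma x_i\|_2 < \varepsilon$ via the residual structure of $\varphi$; everything past that is inherited from the concentration already established for Theorem~\ref{distortion control}.
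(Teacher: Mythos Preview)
Your reduction $\|(x_{i,1},\dots,x_{i,n}) - \varphi(x_i)\|_2 = \|\Gamma x_i\|_2$ and the subsequent $\chi^2$-tail-plus-union-bound argument are correct and are exactly what the paper's one-line proof (``follows from the definition of $\varphi$ and Theorem~\ref{distortion control}'') is gesturing at.

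One point to fix: your alternative route misreads Definition~\ref{embedding with distortion}. An $\varepsilon$-distortion on a set of unit vectors means $(1-\varepsilon)\le\|\Gamma u\|_2\le(1+\varepsilon)$, i.e.\ that $\|\Gamma u\|_2$ is within $\varepsilon$ of $1$, \emph{not} that $\|\Gamma u\|_2<\varepsilon$. So ``unpacking the definition'' does not deliver the inequality you need; only your direct concentration argument (which is really the proof of Theorem~\ref{distortion control} rescaled by $\sqrt{\lambda}$, so that $\|\Gamma x_i\|_2$ concentrates near $\sqrt{\lambda}$ rather than near $1$) gives $\|\Gamma x_i\|_2<\varepsilon$. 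Keep the first route and drop the second sentence about Definition~\ref{embedding with distortion}.
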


\begin{proof}
    This follows directly from the definition of \(\varphi\) and Theorem \ref{distortion control}.
\end{proof}

We now analyze the impact of RMLP heads on image embeddings. Let us take an RMLP \(g\) as in Equation \ref{rmlp} with amplitude \(\lambda\) and GELU activation acting on the embedding \(\mathcal{V}(x) = ((z_{i,j})_{j \leq d})_{i \leq \nu}\). Applying Corollary \ref{fundament of rmlp}, the first random layer \(\varphi_1\) turns each token \(z_i\) into a ball of radius \(\varepsilon\), determined by the embedding’s dimension, \(\lambda\), and the number of tokens \(\nu\).

By Theorem \ref{distortion control}, the output of \(\varphi_1(\mathcal{V}(x))\) lies on a sphere of radius \(d\lambda n_1^{-1}\) on expectation. The contrastive losses encourage similarity between views, while the KoLeo regularizer promotes injectivity. However, the GELU activation restricts the space to a spherical region with mostly positive coordinates, where dispersion is optimized via approximate orthogonality among embeddings. Successive layers \(\varphi_2, \varphi_3, \varphi_4\) expand the radius of probability balls further. Thus, \(g \circ \mathcal{V}(x)\) becomes a point cloud creating a  probability ball, rather than individual points, regularizing the learning by applying cross-entropy over neighborhoods. If the RMLP amplitude \(\lambda\) is too large, the topology of the embedding space may be distorted, as shown in Figure \ref{fig:random embedding}. Remarkably, the topological properties of ViTs remain unaffected whether MLPs or RMLPs heads are used during training, as our analysis is independent of it.

\begin{figure}
    \centering
    \includegraphics[width=0.6\linewidth]{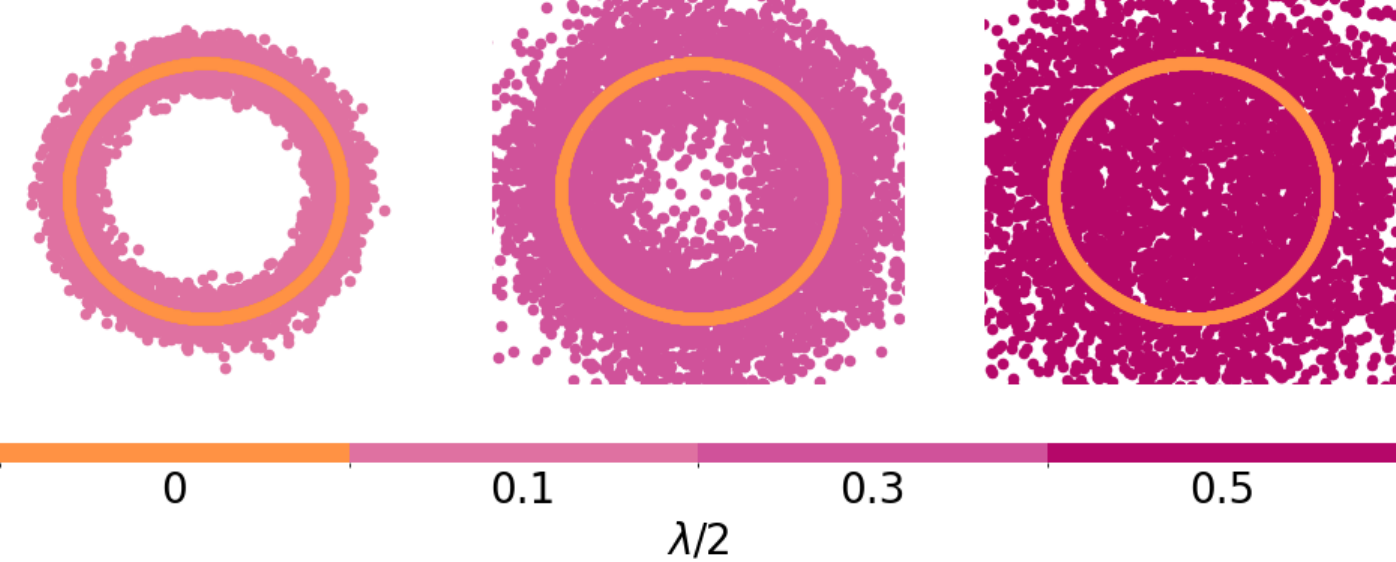}
    \caption{Visualization of a randomized operator following Equation \ref{rle layer}, applied to the unit circle in \(\mathbb{R}^2\) (\(\lambda/2=0\)) for different values of \(\lambda\). For \(\lambda/2 = 0.1\), the point cloud retains a circular structure, preserving the topology of the circle. For \(\lambda/2 = 0.5\), the point cloud no longer exhibits a circular topology, as it transitions to a disk-like shape.
}
    \label{fig:random embedding}
\end{figure}

\section{Implementation and Training Details}
\label{sec:imp and train details}

\paragraph{Model Variants.}
We use two ViT-based self-supervised models: DINOv2-S~\cite{dinov2} and SwAV~\cite{SwAV}. Fine-tuned variants include standard MLP heads and our regularized version using \( R_\lambda \)-MLP. Fine-tuning follows the DINOv2 protocol with the original head architecture retained. Hyperparameters and external code can be found in Tables \ref{tab:hyperparameters}, \ref{tab:external code}.

\paragraph{Training Modalities.}
Each backbone is fine-tuned separately on three modalities: ImageNet-1k (natural), Colour Fundus Photography (CFP), and Optical Coherence Tomography (OCT) (Table \ref{tab:datasets}). Medical datasets are sourced from public subsets aggregated in \citet{retfound}, ensuring geographic diversity (Table \ref{tab:datasets origin}).

\paragraph{Hybrid Architecture for Dense Tasks.}
For pixel-level predictions, we pair the ViT encoder with a UNet-style decoder \cite{unet,nnunet}. Patch and class tokens are projected and fused with early UNet features, enabling multi-scale feature integration. This ViT-UNet hybrid is used for segmentation and depth estimation tasks.

\section{Experimental Setting}
\label{sec:experimental evaluation}

\paragraph{Evaluation Setup.}
We fine-tune DINOv2-S and SwAV with \( R_\lambda \)-MLP at amplitudes \{0.1, 5, 10, 20\}, running 10 trials per modality and setting. Baselines use MLP heads. Downstream tasks employ either linear probes or the hybrid decoder depending on task type. Results are averaged over runs and tested for significance using the Mann--Whitney U test~\cite{statistical significance}, appropriate under our distributional assumptions. Qualitative results for both natural and ophthalmological modalities can be found in Appendix \ref{sec:suplementary figures}.

% \paragraph{Classification Protocol.}
% We evaluate feature quality using linear classifiers, 1-NN, and Random Forests. Input embeddings consist of the class token concatenated with the mean-pooled patch tokens unless otherwise noted.

\paragraph{Natural Image Tasks.}
On ImageNet-1k \cite{imagenet-1k}, we perform image classification training with cross-entropy. For semantic segmentation of \cite{ade} and depth estimation of \cite{depth}, we apply a linear head and the ViT-UNet hybrid. Segmentation is trained with a linear combination of focal~\cite{focal loss} and dice loss~\cite{dice loss}. Depth maps are rescaled to [0, 50] and trained with focal loss. Table \ref{tab:downstream natural} reports performance compared to baselines~\cite{registers, sinder, dvt}.

\paragraph{OCT and CFP Tasks.}
We assess disease classification and retinal layer segmentation. Segmentations emphasize the ophthalmologically relevant Outer Nuclear Layer (ONL). UNet output biases are initialized (-2 for background, 2 for others) for stable training. We benchmark against RetFound \cite{retfound} and a domain-specific model~\cite{augenklinik} (Tables \ref{tab:medical dense}, \ref{tab:medical results}).

\paragraph{Cross-Modal Attention Analysis.}
We visualize attention maps across modalities: second-order for natural images~\cite{bsds300} and first-order for OCT~\cite{octid} and CFP~\cite{glaucoma fundus}. High vs.\ low-information patches are separated using a 2-component Gaussian Mixture Model on smoothed gradient at patch level. %Qualitative results are shown in Figure~\ref{sup_fig:attention maps stats}, and quantitative measures in Table~\ref{tab:interpretability}.

\begin{figure}[t]
    \centering
    \setlength{\tabcolsep}{7pt}  % Adjust as needed

    \begin{tabular}{>{\centering\arraybackslash}m{0.115\textwidth}
                    >{\centering\arraybackslash}m{0.115\textwidth}
                    >{\centering\arraybackslash}m{0.115\textwidth}
                    >{\centering\arraybackslash}m{0.115\textwidth}
                    >{\centering\arraybackslash}m{0.115\textwidth}
                    >{\centering\arraybackslash}m{0.115\textwidth}}

        \textbf{OCT} & 
        \textbf{Ground Truth} & 
        \parbox[c]{\linewidth}{\centering\textbf{RetFound}} & 
        \textbf{DINOv2-S} & 
        \textbf{DINOv2-S + MLP} & 
        \textbf{DINOv2-S + R\(_5\)-MLP} \\

        \includegraphics[width=\linewidth]{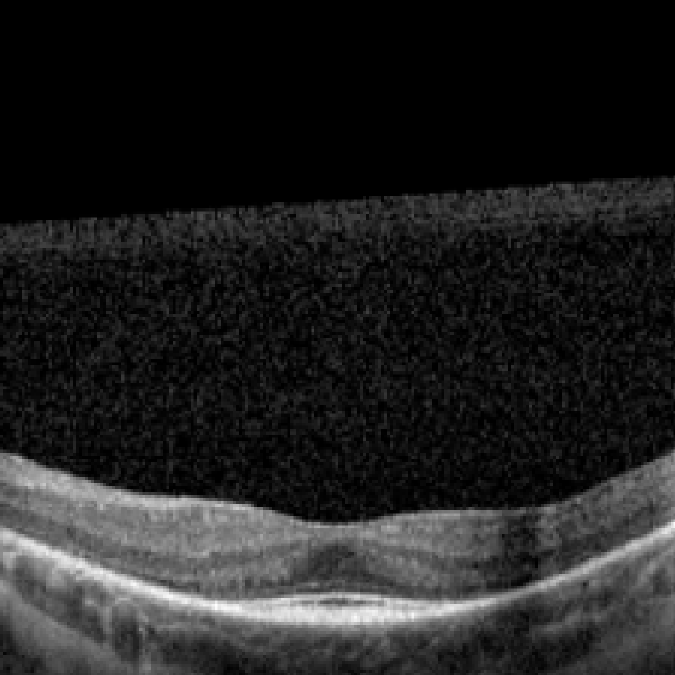} &
        \tikz[baseline=(image.base)]{
            \node[inner sep=0pt] (image) {\includegraphics[width=\linewidth]{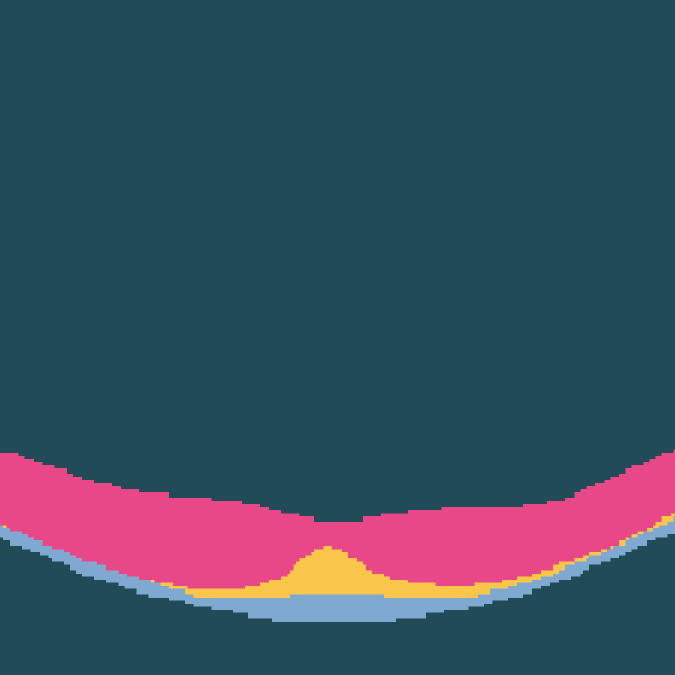}};
            \draw[black, line width=1.5pt, ->, >=Latex, shorten <=0pt, shorten >=0pt] (0.05,-0.1) -- ++(0.4,-0.4);
        } &
        \tikz[baseline=(image.base)]{
            \node[inner sep=0pt] (image) {\includegraphics[width=\linewidth]{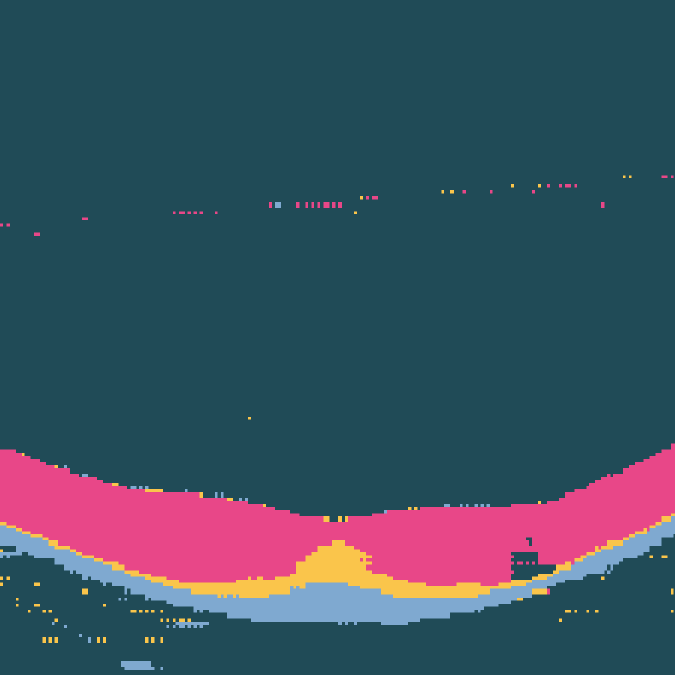}};
            \draw[black, line width=1.5pt, ->, >=Latex, shorten <=0pt, shorten >=0pt] (0.05,-0.1) -- ++(0.4,-0.4);
        }&
        \tikz[baseline=(image.base)]{
            \node[inner sep=0pt] (image) {\includegraphics[width=\linewidth]{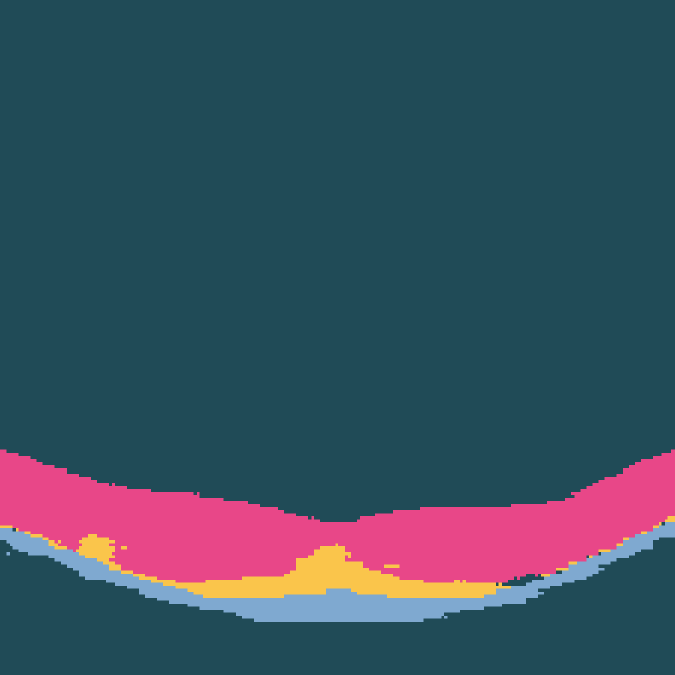}};
            \draw[black, line width=1.5pt, ->, >=Latex, shorten <=0pt, shorten >=0pt] (0.05,-0.1) -- ++(0.4,-0.4);
        }&
        \tikz[baseline=(image.base)]{
            \node[inner sep=0pt] (image) {\includegraphics[width=\linewidth]{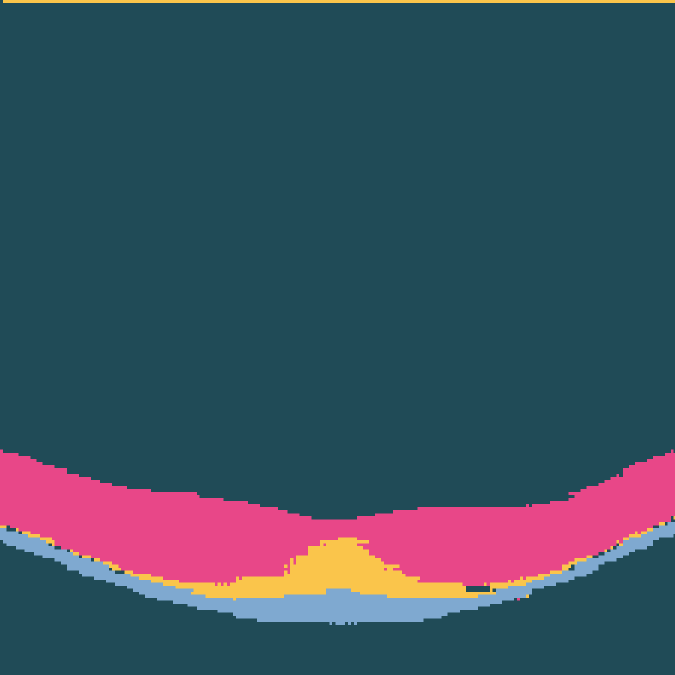}};
            \draw[black, line width=1.5pt, ->, >=Latex, shorten <=0pt, shorten >=0pt] (0.05,-0.1) -- ++(0.4,-0.4);
        }&
        \tikz[baseline=(image.base)]{
            \node[inner sep=0pt] (image) {\includegraphics[width=\linewidth]{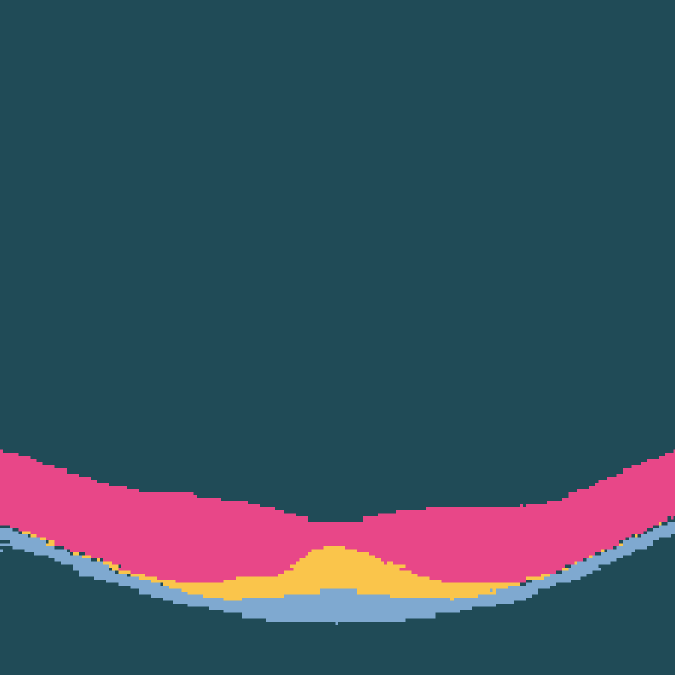}};
            \draw[black, line width=1.5pt, ->, >=Latex, shorten <=0pt, shorten >=0pt] (0.05,-0.1) -- ++(0.4,-0.4);
        }\\

        \includegraphics[width=\linewidth]{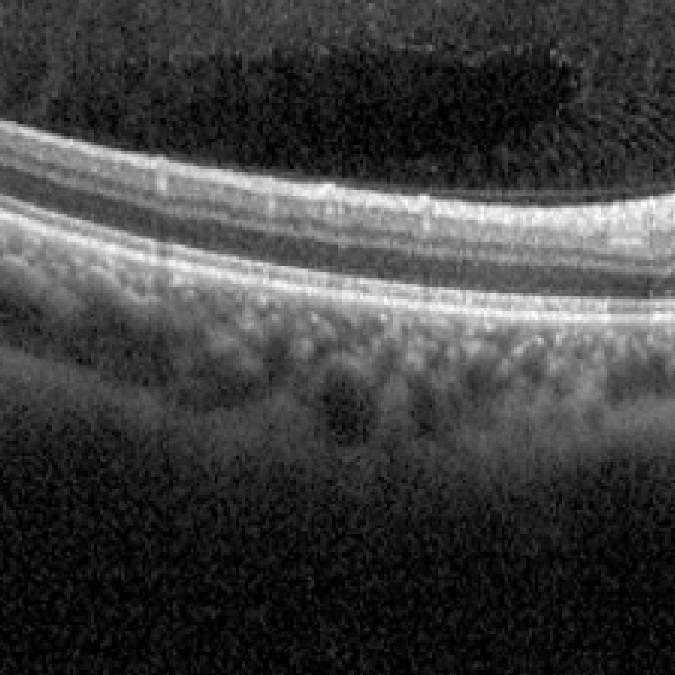} &
        \tikz[baseline=(image.base)]{
            \node[inner sep=0pt] (image) {\includegraphics[width=\linewidth]{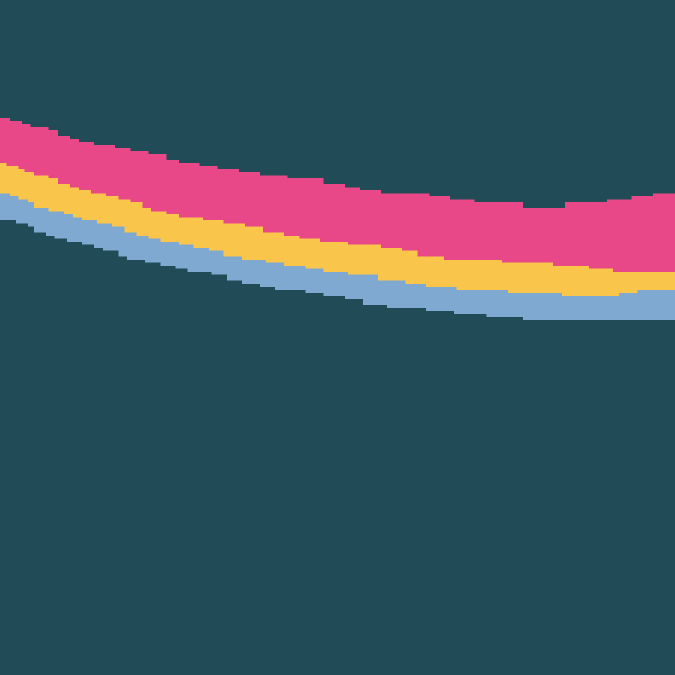}};
            \draw[black, line width=1.5pt, ->, >=Latex, shorten <=0pt, shorten >=0pt] (0.1,-0.1) -- ++(0.4,0.4);
        }&
        \tikz[baseline=(image.base)]{
            \node[inner sep=0pt] (image) {\includegraphics[width=\linewidth]{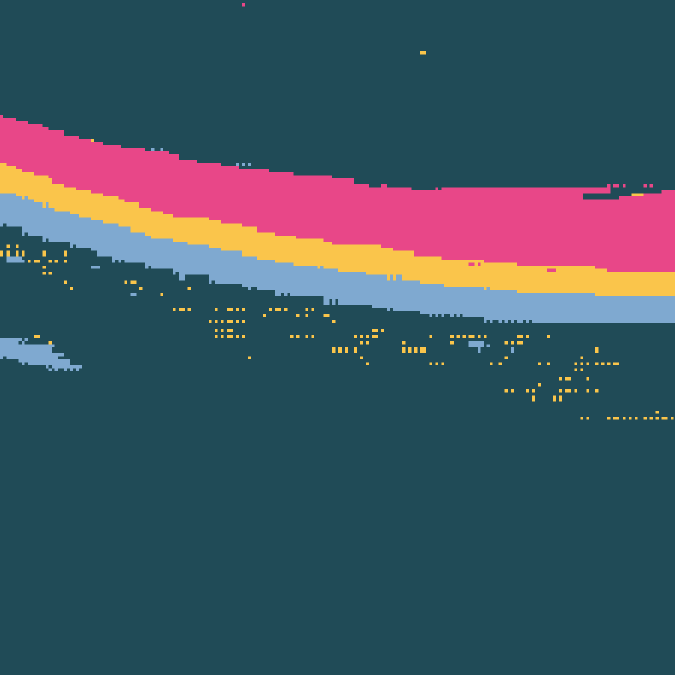}};
            \draw[black, line width=1.5pt, ->, >=Latex, shorten <=0pt, shorten >=0pt] (0.1,-0.1) -- ++(0.4,0.4);
        }&
        \tikz[baseline=(image.base)]{
            \node[inner sep=0pt] (image) {\includegraphics[width=\linewidth]{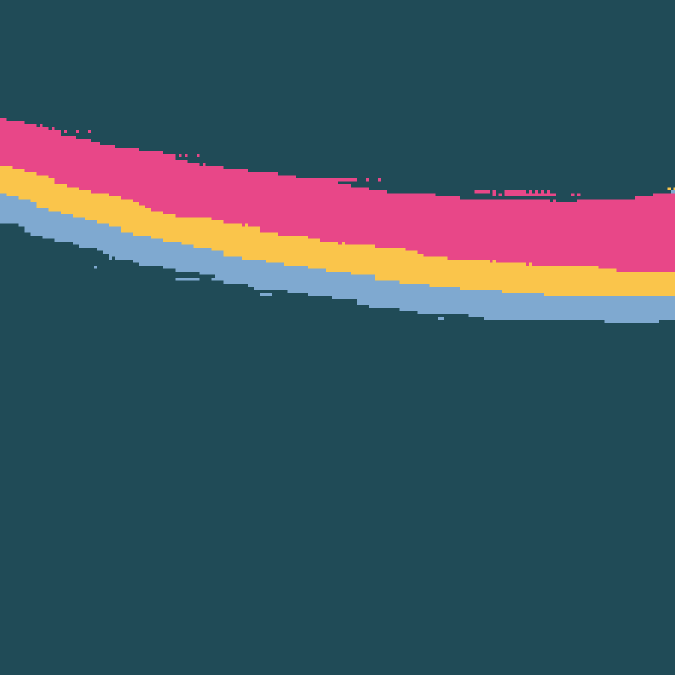}};
            \draw[black, line width=1.5pt, ->, >=Latex, shorten <=0pt, shorten >=0pt] (0.1,-0.1) -- ++(0.4,0.4);
        }&
        \tikz[baseline=(image.base)]{
            \node[inner sep=0pt] (image) {\includegraphics[width=\linewidth]{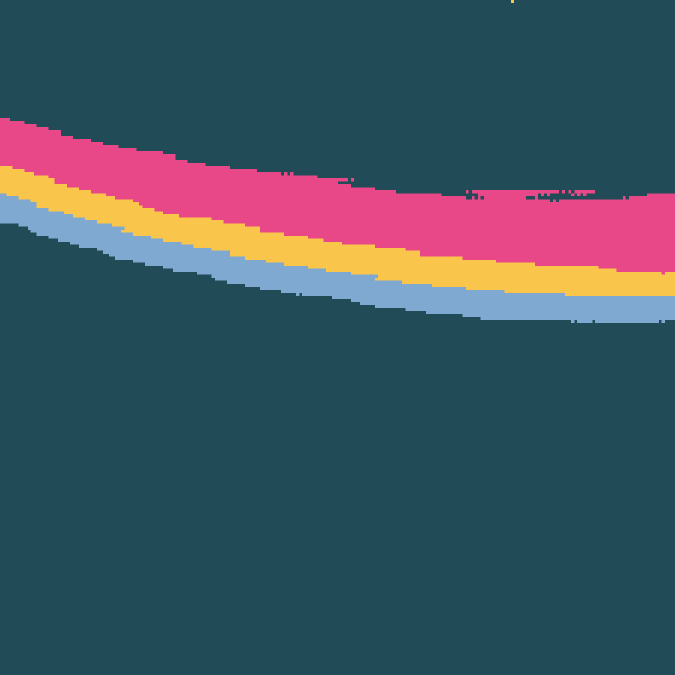}};
            \draw[black, line width=1.5pt, ->, >=Latex, shorten <=0pt, shorten >=0pt] (0.1,-0.1) -- ++(0.4,0.4);
        }&
        \tikz[baseline=(image.base)]{
            \node[inner sep=0pt] (image) {\includegraphics[width=\linewidth]{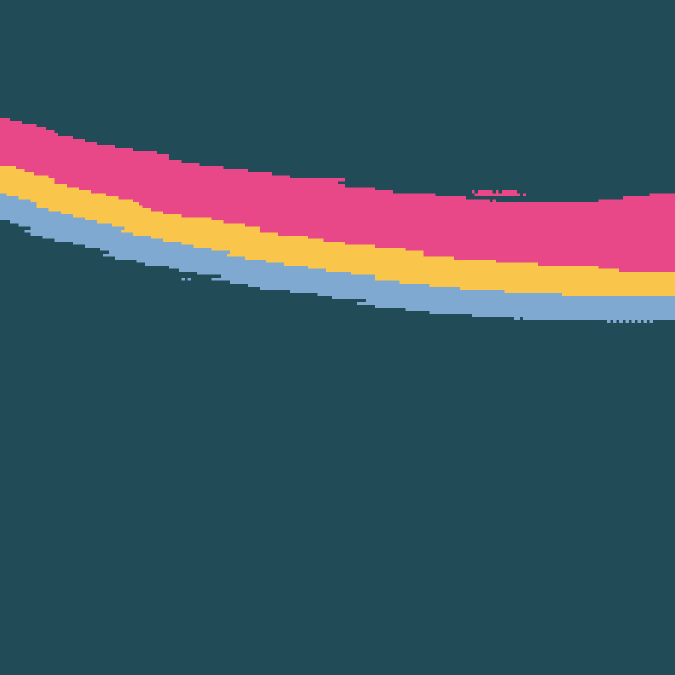}};
            \draw[black, line width=1.5pt, ->, >=Latex, shorten <=0pt, shorten >=0pt] (0.1,-0.1) -- ++(0.4,0.4);
        }

    \end{tabular}

    \caption{Semantic segmentation of retinal layers on the dataset from \citet{augenklinik} (OCT) using different backbones (columns) in a ViT-UNet hybrid. Black arrows highlight cases where fine-tuning DINOv2-S with R\(_5\)-MLP outperforms other models.}
    \label{fig:oct}
\end{figure}

\vspace{5cm}
\FloatBarrier

\begin{figure}[h]
    \centering
    \setlength{\tabcolsep}{2pt} % Optional: reduce spacing between columns
    \renewcommand{\arraystretch}{1.0} % Optional: control vertical spacing

    \begin{tabular}{
            >{\centering\arraybackslash}m{0.03\textwidth}
            >{\centering\arraybackslash}m{0.1\textwidth}  % Column 1
            >{\centering\arraybackslash}m{0.1\textwidth}  % Column 2
            >{\centering\arraybackslash}m{0.1\textwidth}
            >{\centering\arraybackslash}m{0.1\textwidth}
            @{\hspace{0.045\textwidth}}
            >{\centering\arraybackslash}m{0.03\textwidth}
            >{\centering\arraybackslash}m{0.1\textwidth}  % Column 3
            >{\centering\arraybackslash}m{0.1\textwidth}  % Column 4
            >{\centering\arraybackslash}m{0.1\textwidth}
            >{\centering\arraybackslash}m{0.1\textwidth}
            } % Column 8
        % Header row

        % First image row
        \rotatebox{90}{\textbf{Input}}&
        \includegraphics[width=\linewidth]{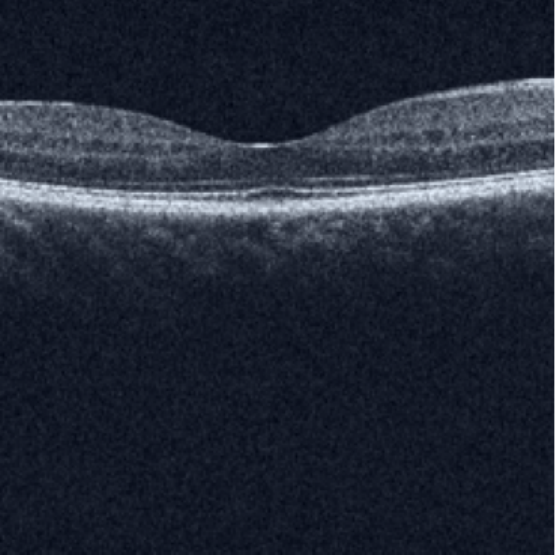} &
        \includegraphics[width=\linewidth]{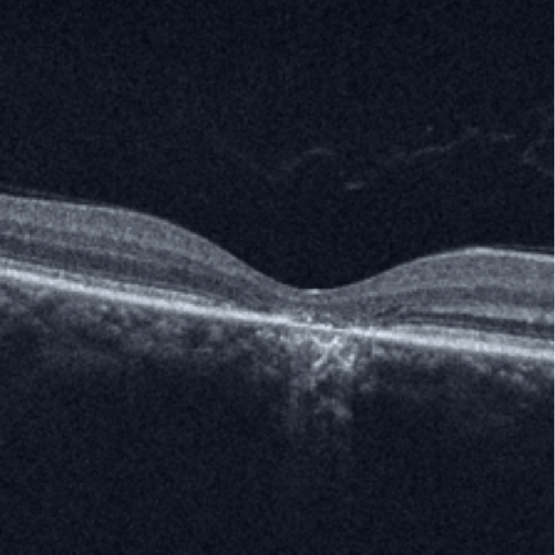} &
        \includegraphics[width=\linewidth]{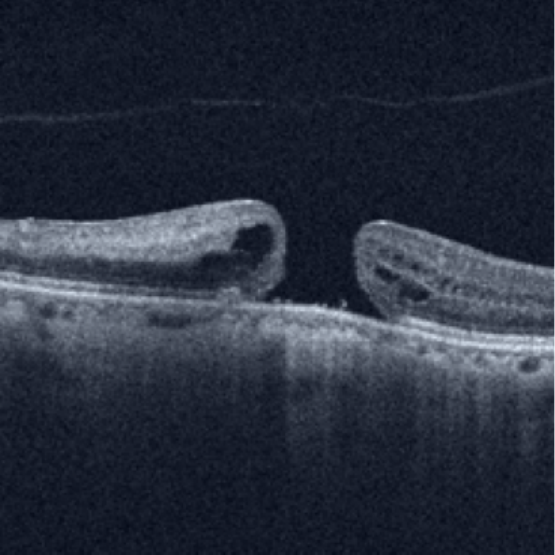} &
        \includegraphics[width=\linewidth]{artifacts/highres_examples/OCTID_Diabetic_retinopathy_DR35_input.pdf} &
        \rotatebox{90}{\textbf{Input}}&
        \includegraphics[width=\linewidth]{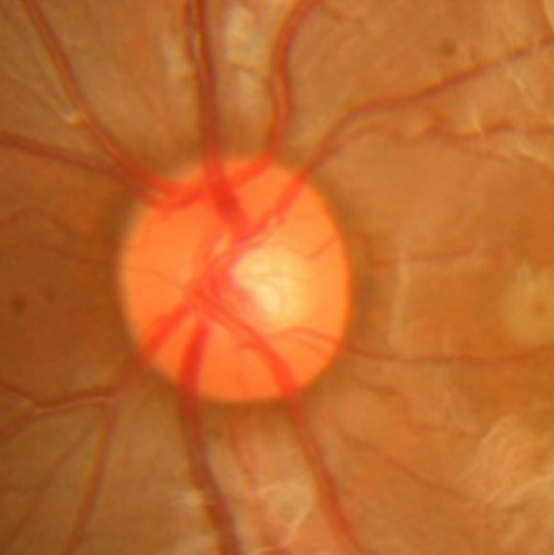} &
        \includegraphics[width=\linewidth]{artifacts/highres_examples/Glaucoma_fundus_bearly_glaucoma_262_input.pdf} &
        \includegraphics[width=\linewidth]{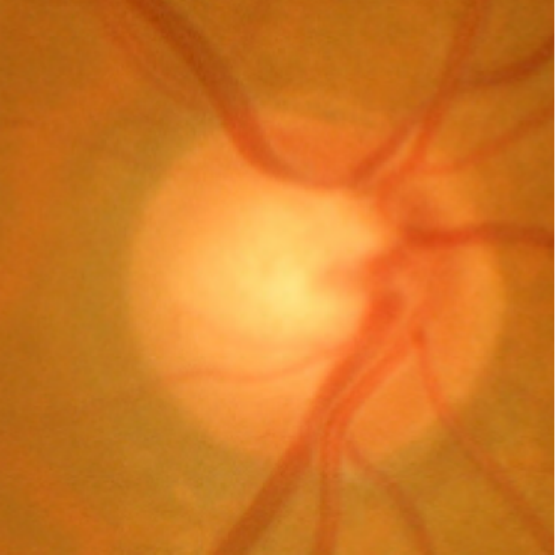} &
        \includegraphics[width=\linewidth]{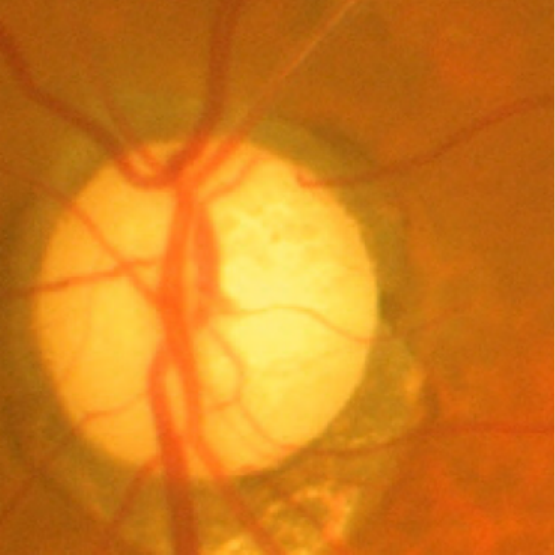}\\

        % Second image row
        \rotatebox{90}{\textbf{R\(_5\)-MLP}}&
        \includegraphics[width=\linewidth]{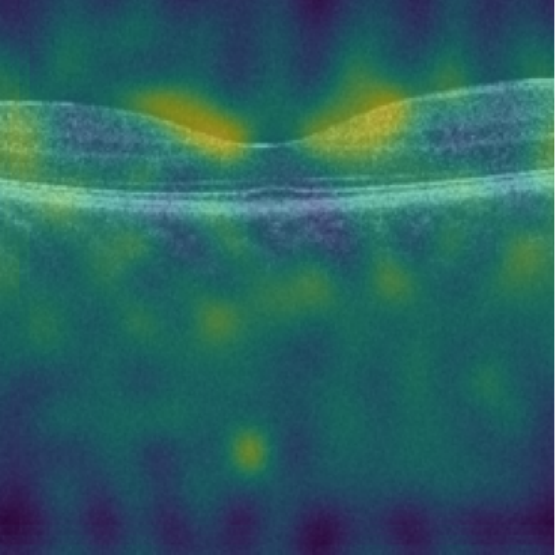} &
        \includegraphics[width=\linewidth]{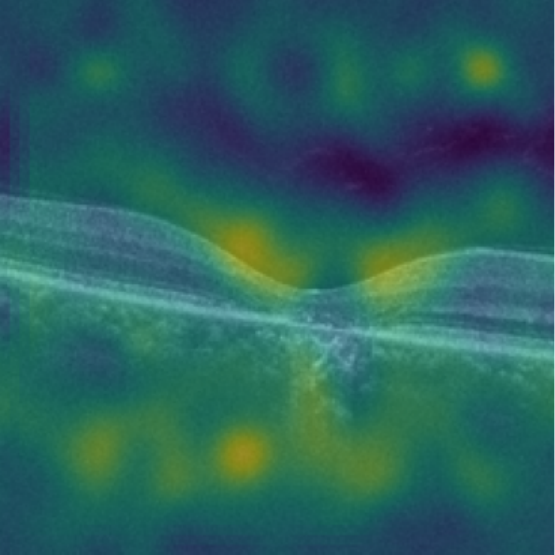} &
        \includegraphics[width=\linewidth]{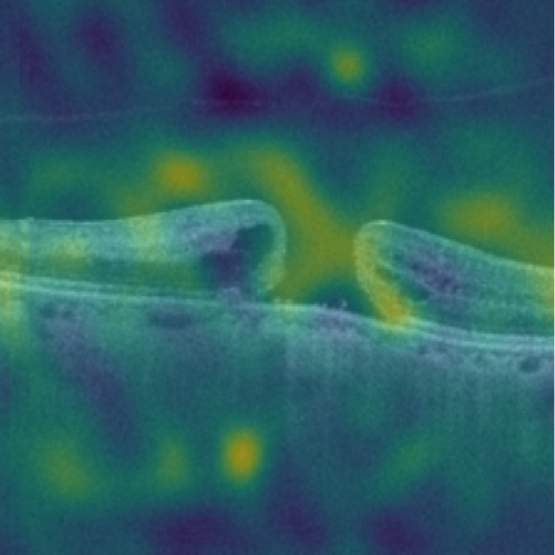} &
        \includegraphics[width=\linewidth]{artifacts/highres_examples/OCTID_rmlp_Diabetic_retinopathy_DR35_norm.pdf} &
        \rotatebox{90}{\textbf{R\(_5\)-MLP}}&
        \includegraphics[width=\linewidth]{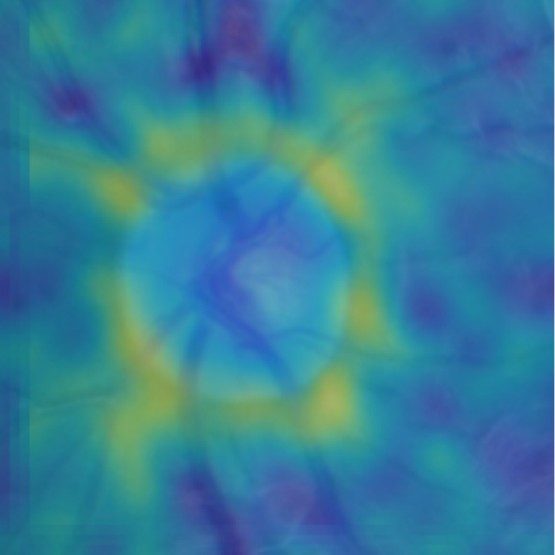} &
        \includegraphics[width=\linewidth]{artifacts/highres_examples/Glaucoma_fundus_rmlp_bearly_glaucoma_262_norm.pdf} &
        \includegraphics[width=\linewidth]{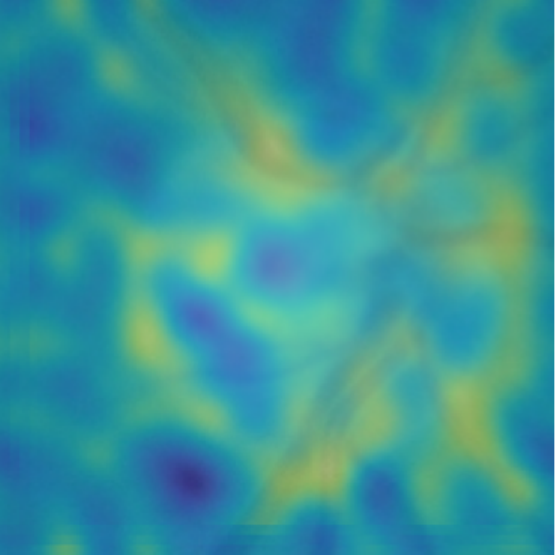}&
        \includegraphics[width=\linewidth]{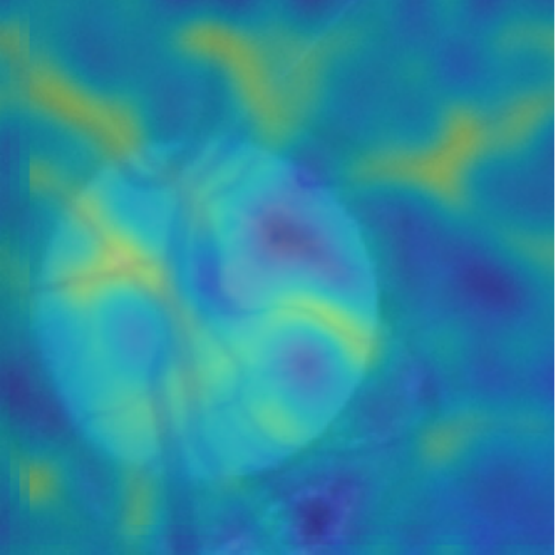}\\

        \rotatebox{90}{\textbf{DINOv2-S}}&
        \includegraphics[width=\linewidth]{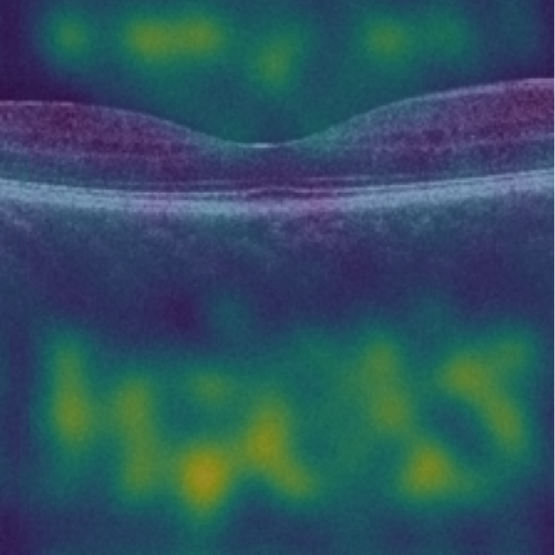} &
        \includegraphics[width=\linewidth]{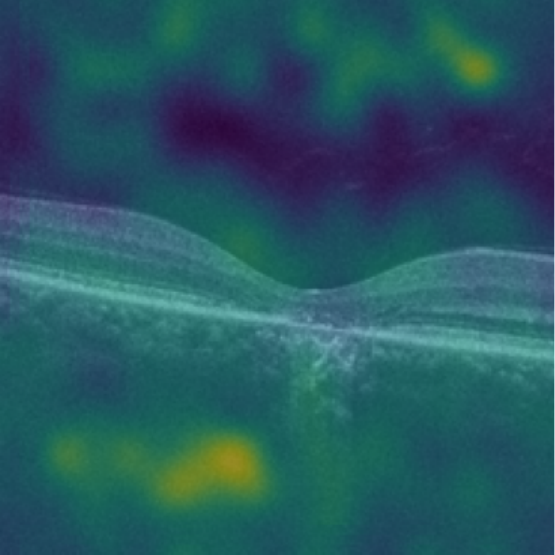} &
        \includegraphics[width=\linewidth]{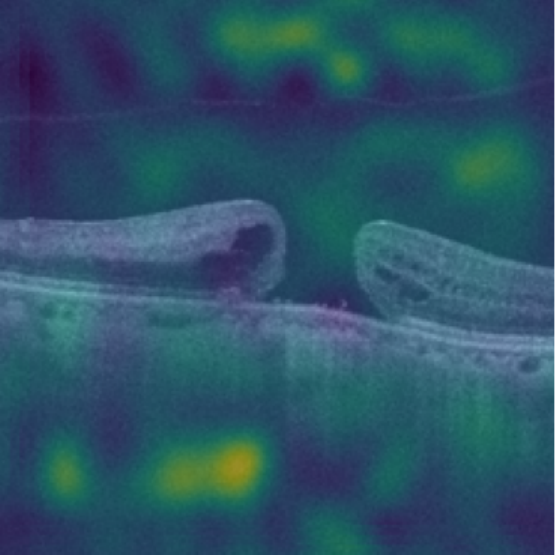} &
        \includegraphics[width=\linewidth]{artifacts/highres_examples/OCTID_dinov2_Diabetic_retinopathy_DR35_norm.pdf} &
        \rotatebox{90}{\textbf{DINOv2-S}}&
        \includegraphics[width=\linewidth]{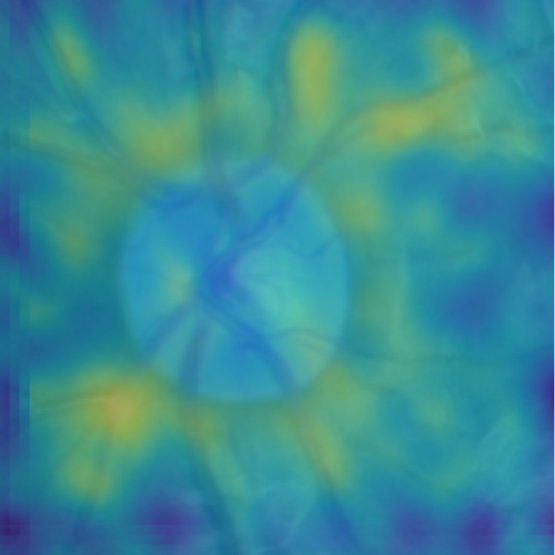} &
        \includegraphics[width=\linewidth]{artifacts/highres_examples/Glaucoma_fundus_dinov2_bearly_glaucoma_262_norm.pdf} &
        \includegraphics[width=\linewidth]{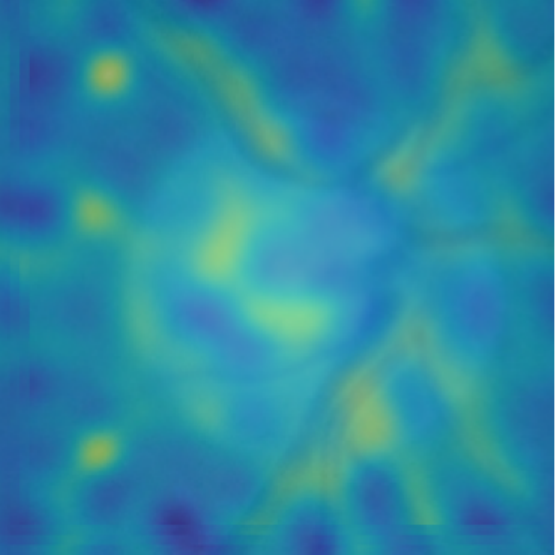}&
        \includegraphics[width=\linewidth]{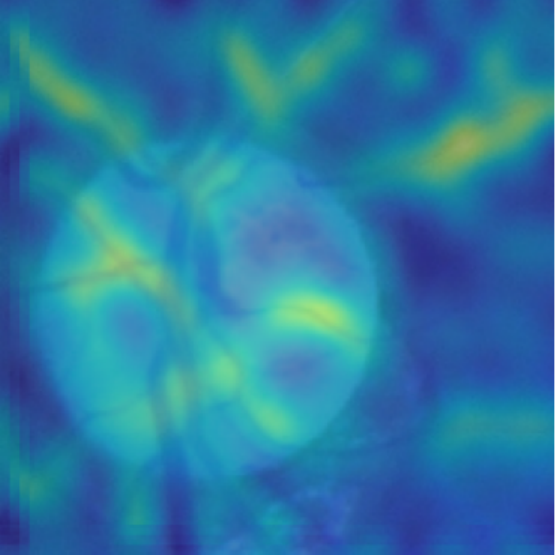}\\
    \end{tabular}

    \caption{
    \textbf{Top}: \citet{octid} (left) and \citet{glaucoma fundus} (right) images. \textbf{Middle}: First-order attention maps from model fine-tuned with R\(_5\)-MLP overlaid on grayscale version of input image.  \textbf{Bottom}: First-order attention maps from DINOv2-S overlaid on grayscale version of input image. Brighter maps indicate more attention. While DINOv2-S focuses on void regions, DINOv2-S with R\(_5\)-MLP attends to key anatomical features in the OCTs—such as the foveal pit, macular hole, and epithelial detachment—and highlights the rim and blood vessels in Fundus images.}
    \label{fig:attention_maps_diseases}
\end{figure}

\section{Results and Discussion}
\label{sec:discussion and conclusions}

\setlength{\tabcolsep}{5pt}
\begin{table}[ht]
\caption{Performance across downstream tasks on natural images. Semantic segmentation on \citet{ade}, depth estimation on \citet{depth} and image classification on ImageNet-1k \cite{imagenet-1k}. Bold indicates best mean, \textsuperscript{\textbf{\textdagger}} second-best on for each base model (DINOv2-S, DINOv2-G and SwAV). Underlined values are significantly better than DINOv2-S ($p < 0.001$); \textsuperscript{*} and \textsuperscript{***} denote significance over base model fine-tuned with MLP at $p < 0.1$ and $p < 0.001$ (Mann–Whitney U test \cite{statistical significance}). Results show mean\(\pm\)std from ten independently fine-tuned backbones.}

\vspace{1em}
\label{tab:downstream natural}
\centering
\scriptsize
\begin{tabular}{lccccccc}
      \toprule
      Model & \multicolumn{2}{c}{Semantic Segmentation \(\uparrow\)} & \multicolumn{2}{c}{Depth Estimation \(\downarrow\)} & \multicolumn{3}{c}{Image Classification \(\uparrow\)} \\
      \cmidrule(r){2-3} \cmidrule(r){4-5} \cmidrule(r){6-8}
             & ViT-UNet & Linear & ViT-UNet & Linear & 1-Nearest  &Random &Linear\\
             & Hybrid  & Head   & Hybrid   & Head  & Neighbor & Forest & Head \\
      \midrule
      &  \multicolumn{7}{c}{DINOv2-S}\\
    \cmidrule(r){2-8}
      DINOv2-S \cite{dinov2}      & \textbf{0.81\(\pm\)0.1} & \textsuperscript{\textbf{\textdagger}}0.76\(\pm\)0.1\textsuperscript{*} & \textbf{7\(\pm\)3}\textsuperscript{***} & \textbf{9\(\pm\)3}\textsuperscript{***} & \textbf{0.70}\textsuperscript{***} & \textbf{0.18}\textsuperscript{***} & \textbf{0.77\(\pm\)0.01}\textsuperscript{***}\\
      DINOv2-S + MLP          & 0.78\(\pm\)0.1 & 0.75\(\pm\)0.1 & \textsuperscript{\textbf{\textdagger}}8\(\pm\)3  & \textsuperscript{\textbf{\textdagger}}10\(\pm\)3 & 0.64\(\pm\)0.01 & 0.14\(\pm\)0.01 & 0.73\(\pm\)0.01\\
      DINOv2-S + R\(_{0.1}\)-MLP&0.61\(\pm\)0.1&0.66\(\pm\)0.1&8\(\pm\)1&9\(\pm\)1&0.64\(\pm\)0.01&0.15\(\pm\)0.01&0.74\(\pm\)0.06\\
      DINOv2-S + R\(_5\)-MLP    & \textsuperscript{\textbf{\textdagger}}0.80\(\pm\)0.1\textsuperscript{***} & \underline{\textbf{0.77\(\pm\)0.1}}\textsuperscript{***} & \textbf{7\(\pm\)3}\textsuperscript{***} & \textbf{9\(\pm\)3}\textsuperscript{***} & \textsuperscript{\textbf{\textdagger}}0.65\(\pm\)0.01\textsuperscript{***} & \textsuperscript{\textbf{\textdagger}}0.16\(\pm\)0.01\textsuperscript{***} & \textsuperscript{\textbf{\textdagger}}0.76\(\pm\)0.01\textsuperscript{***}\\
      DINOv2-S + R\(_{10}\)-MLP & \textsuperscript{\textbf{\textdagger}}0.80\(\pm\)0.1\textsuperscript{***} & \underline{\textbf{0.77\(\pm\)0.1}}\textsuperscript{***} & \textbf{7\(\pm\)3}\textsuperscript{***} & \textbf{9\(\pm\)3}\textsuperscript{***} & \textsuperscript{\textbf{\textdagger}}0.65\(\pm\)0.01\textsuperscript{***} & 0.15\(\pm\)0.01\textsuperscript{***} & \underline{\textbf{0.77\(\pm\)0.01}}\textsuperscript{***} \\
      DINOv2-S + R\(_{20}\)-MLP & \textsuperscript{\textbf{\textdagger}}0.76\(\pm\)0.2 & \underline{\textbf{0.77\(\pm\)0.1}}\textsuperscript{***} & \textbf{7\(\pm\)3}\textsuperscript{***} & \textbf{9\(\pm\)3}\textsuperscript{***} & 0.64\(\pm\)0.01\textsuperscript{***} & 0.14\(\pm\)0.01\textsuperscript{***} & 0.75\(\pm\)0.01\textsuperscript{***} \\
      Registers \cite{registers}&0.72\(\pm\)0.2&0.67\(\pm\)0.1&7\(\pm\)1&9\(\pm\)1&0.28&0.05&0.74\(\pm\)0.01\textsuperscript{**}\\
      DVT \cite{dvt}&0.45\(\pm\)0.2&0.64\(\pm\)0.01&7\(\pm\)1&9\(\pm\)1&\textbf{0.7}&0.11&0.64\(\pm\)0.01\\
        \\
      &  \multicolumn{7}{c}{DINOv2-G}\\
    \cmidrule(r){2-8}
      Sinder \cite{sinder}&0.51\(\pm\)0.2&0.56\(\pm\)0.1&6\(\pm\)3&8\(\pm\)1&0.78&0.12&\textsuperscript{\textbf{\textdagger}}0.76\(\pm\)0.01\textsuperscript{***}\\
      \\
      &  \multicolumn{7}{c}{SwAV}\\
    \cmidrule(r){2-8}
      SwAV \cite{SwAV}&\textbf{0.37\(\pm\)0.1}&--&\textbf{12\(\pm\)1}&--&--&--&\textbf{0.69\(\pm\)0.01}\\
      SwAV + MLP&0.29\(\pm\)0.08&--&\textsuperscript{\textbf{\textdagger}}13\(\pm\)1&--&--&--&\textbf{0.69\(\pm\)0.01}\\
      SwAV + R\(_5\)-MLP&0.34\(\pm\)0.09&--&\textsuperscript{\textbf{\textdagger}}13\(\pm\)2&--&--&--&\textbf{0.69\(\pm\)0.01}\\
      SwAV + R\(_{10}\)-MLP&0.27\(\pm\)0.1&--&\textbf{12\(\pm\)1}&--&--&--&\textbf{0.69\(\pm\)0.01}\\
      SwAV + R\(_{20}\)-MLP&\textsuperscript{\textbf{\textdagger}}0.36\(\pm\)0.1&--&\textsuperscript{\textbf{\textdagger}}13\(\pm\)1&--&--&--&\textbf{0.69\(\pm\)0.01}\\
      \bottomrule
    \end{tabular}
\end{table}

\begin{table}[h]
\caption{Performance metrics for semantic segmentation on \citet{augenklinik} dataset using a ViT-UNet hybrid. Results show mean\(\pm\)std of DICE scores from ten independently fine-tuned backbones using DINOv2 and SwAV as base models. Bold numbers indicate best performance, \textsuperscript{\textbf{\textdagger}} the second-best, and underlined results outperform RetFound (p < 0.001, Mann-Whitney U test). Statistical significance vs. base model fine-tuned with MLP is shown by \textsuperscript{*} (p < 0.1).}
\label{tab:medical dense}
  \begin{minipage}[t]{0.45\textwidth}
  \begin{subtable}[t]{0.95\textwidth}
    \caption{Base model: DINOv2.}
    \label{tab:medical dense dinov2}
    \centering
    \scriptsize
    \begin{tabular}{lcc}
      \toprule
      % & \multicolumn{2}{c}{ViT-UNet hybrid}\\
      % \cmidrule(r){2-3}
      Model & Averaged DICE & DICE on ONL\\
      \midrule
      \citet{retfound} & 0.92\(\pm\)0.06 & 0.59\(\pm\)0.10 \\
      Registers \cite{registers}&0.92\(\pm\)0.1&0.62\(\pm\)0.21\\
      Sinder \cite{sinder}&0.90\(\pm\)0.21&0.61\(\pm\)0.2\\
      DVT \cite{dvt}&0.78\(\pm\)0.25&0.43\(\pm\)0.22\\
      DINOv2-S \cite{dinov2} & 0.79\(\pm\)0.20 & 0.54\(\pm\)0.20\\
      DINOv2 + MLP & 0.86\(\pm\)0.20 & 0.55\(\pm\)0.20\\
      DINOv2 + R\(_{0.1}\)-MLP & \textsuperscript{\textbf{\textdagger}}0.94\(\pm\)0.12&\textsuperscript{\textbf{\textdagger}}0.68\(\pm\)0.21\\
      DINOv2 + R\(_5\)-MLP & \underline{\textbf{0.97\(\pm\)0.03}}\textsuperscript{*} & \textbf{0.72\(\pm\)0.09}\textsuperscript{*}\\
      DINOv2 + R\(_{10}\)-MLP & 0.87\(\pm\)0.20 & 0.61\(\pm\)0.20\\
      DINOv2 + R\(_{20}\)-MLP & 0.70\(\pm\)0.30 & 0.47\(\pm\)0.20\\
      \citet{augenklinik} & 0.92\(\pm\)0.03 & 0.44\(\pm\)0.03\\
      \bottomrule
    \end{tabular}
  \end{subtable}
  \end{minipage}
  \hfill
  \begin{minipage}[t]{0.45\textwidth}
  \begin{subtable}[t]{0.95\textwidth}
    \caption{Base model: SwAV.}
    \label{tab:medical dense swav}
    \centering
    \scriptsize
    \begin{tabular}{lcc}
      \toprule
      % & \multicolumn{2}{c}{ViT-UNet hybrid} \\
      % \cmidrule(r){2-3}
      Model & Averaged DICE & DICE on ONL \\
      \midrule
      SwAV \cite{SwAV} & \textsuperscript{\textbf{\textdagger}}0.89\(\pm\)0.15&0.59\(\pm\)0.14\\
      SwAV + MLP&0.88\(\pm\)0.15&\textsuperscript{\textbf{\textdagger}}0.6\(\pm\)0.18\\
      SwAV + R\(_{0.1}\)-MLP&\textbf{0.92\(\pm\)0.12}&\textbf{0.64\(\pm\)0.1}\\
      SwAV + R\(_{5}\)-MLP&0.83\(\pm\)0.27&0.58\(\pm\)0.2\\
      SwAV + R\(_{10}\)-MLP&0.71\(\pm\)0.33&0.42\(\pm\)0.23\\
      SwAV + R\(_{20}\)-MLP&0.84\(\pm\)0.26&0.58\(\pm\)0.21\\
      \bottomrule
    \end{tabular}
  \end{subtable}
  \end{minipage}
  
\end{table}

\setlength{\tabcolsep}{3pt} % Reduce the space between columns
\setlength{\parskip}{1em}
\begin{table}[h]
  \caption{Performance metrics for pathology classification across OCT and CFP modalities. Results show mean\(\pm\)std from ten independently fine-tuned backbones. Bold numbers indicate best performance, \textsuperscript{\textbf{\textdagger}} the second-best, and underlined results outperform RetFound (p < 0.001, Mann-Whitney U test). Statistical significance vs. base model fine-tuned with MLP is shown by \textsuperscript{*} (p < 0.1), \textsuperscript{**} (p < 0.01), and \textsuperscript{***} (p < 0.001).}
  \label{tab:medical results}
  \centering
  \begin{subtable}[t]{0.95\textwidth}
    \caption{Accuracy for pathology classification using DINOv2-S as base model.}
    \label{tab:medical classification}
    \centering
    \scriptsize
     \begin{tabular}{ccccccc}
  \toprule
    Dataset   &   DINOv2-S  \cite{dinov2}   &DINOv2-S             &DINOv2-S  &DINOv2-S             &DINOv2-S&DINOv2-S\\

       &        & + MLP             & + R\(_{0.1}\)-MLP  & + R\(_5\)-MLP             & + R\(_{10}\)-MLP     & + R\(_{20}\)-MLP \\
    \midrule
    &  \multicolumn{6}{c}{1-Nearest Neighbor classification}\\
    \cmidrule(r){2-7}
    \citet{octid} \hfill(0.8) & \textsuperscript{\textbf{\textdagger}}0.75\textsuperscript{***} & 0.69\(\pm\)0.02 & 0.68\(\pm\)0.01&\textbf{0.78\(\pm\)0.01}\textsuperscript{***} & 0.73\(\pm\)0.01\textsuperscript{***} & 0.68\(\pm\)0.01\textsuperscript{***}\\
    \citet{glaucoma fundus} \hfill(0.78)  & 0.67\textsuperscript{***} & 0.64\(\pm\)0.01  &0.65\(\pm\)0.01 &\textbf{0.71\(\pm\)0.09}\textsuperscript{***} & \textsuperscript{\textbf{\textdagger}}0.69\(\pm\)0.09\textsuperscript{***} & 0.63\(\pm\)0.07\\
    \citet{idrid} \hfill(0.44) & 0.42\textsuperscript{***} & 0.43\(\pm\)0.03 & 0.36\(\pm\)0.01&\underline{\textbf{0.47\(\pm\)0.01}}\textsuperscript{***} & \textsuperscript{\textbf{\textdagger}}0.44\(\pm\)0.02\textsuperscript{***} & 0.43\(\pm\)0.04\textsuperscript{*}\\
    \citet{jsiec} \hfill(0.45) & \underline{0.59}\textsuperscript{***} & \underline{0.51\(\pm\)0.02} &0.51\(\pm\)0.01 &\underline{\textbf{0.67\(\pm\)0.01}}\textsuperscript{***} & \textsuperscript{\textbf{\textdagger}}\underline{0.65\(\pm\)0.07}\textsuperscript{***} & \underline{0.59\(\pm\)0.08}\textsuperscript{***}\\
    \citet{messidor2-1,messidor2-2}\hfill (0.56) & \textsuperscript{\textbf{\textdagger}}0.53\textsuperscript{***}  & 0.47\(\pm\)0.01 &0.43\(\pm\)0.01 &\textbf{0.55\(\pm\)0.01}\textsuperscript{***} & \textsuperscript{\textbf{\textdagger}}0.53\(\pm\)0.08\textsuperscript{***} & 0.52\(\pm\)0.08\textsuperscript{***}\\
    \citet{papila}\hfill (0.63) & \textsuperscript{\textbf{\textdagger}}\underline{0.71}\textsuperscript{***} & 0.65\(\pm\)0.03 &0.65\(\pm\)0.01& \underline{\textbf{0.74\(\pm\)0.02}}\textsuperscript{***} & \textsuperscript{\textbf{\textdagger}}\underline{0.71\(\pm\)0.01}\textsuperscript{***} & \underline{0.65\(\pm\)0.03}\textsuperscript{**}\\
    \citet{retina}\hfill (0.5) & \underline{0.54} & \underline{0.54\(\pm\)0.02} & 0.39\(\pm\)0.01&\underline{\textbf{0.58\(\pm\)0.01}}\textsuperscript{***} & \underline{\textbf{0.58\(\pm\)0.01}}\textsuperscript{***} & \textsuperscript{\textbf{\textdagger}}\underline{0.57\(\pm\)0.01}\textsuperscript{***}\\
    %\midrule
    %\vspace{0.05\textwidth}
    \\
    &  \multicolumn{6}{c}{Random Forest classification}\\
    \cmidrule(r){2-7}
    \citet{octid} \hfill(0.81)& \textbf{0.75}\textsuperscript{***} & 0.54\(\pm\)0.01 & 0.58\(\pm\)0.01&\textsuperscript{\textbf{\textdagger}}0.74\(\pm\)0.01\textsuperscript{***} & 0.73\(\pm\)0.02\textsuperscript{***} & 0.69\(\pm\)0.02\textsuperscript{***}\\
    \citet{glaucoma fundus}\hfill (0.73) & \textbf{0.69}\textsuperscript{***} & 0.65\(\pm\)0.08  & 0.67\(\pm\)0.01& \textsuperscript{\textbf{\textdagger}}0.68\(\pm\)0.08\textsuperscript{***} & 0.67\(\pm\)0.08\textsuperscript{**} & \textsuperscript{\textbf{\textdagger}}0.68\(\pm\)0.09\textsuperscript{**}\\
    \citet{idrid}\hfill (0.4) & \textsuperscript{\textbf{\textdagger}}\underline{0.49}\textsuperscript{***} & \underline{0.46\(\pm\)0.01} & 0.47\(\pm\)0.01&\underline{0.47\(\pm\)0.01}\textsuperscript{***} & \underline{0.46\(\pm\)0.01}\textsuperscript{**} & \underline{\textbf{0.5\(\pm\)0.05}}\textsuperscript{***}\\
    \citet{jsiec} \hfill(0.28) & 0.28\textsuperscript{***} & 0.26\(\pm\)0.08 &0.25\(\pm\)0.01& \underline{\textbf{0.32\(\pm\)0.08}}\textsuperscript{***} & \textsuperscript{\textbf{\textdagger}}\underline{0.29\(\pm\)0.06}\textsuperscript{***} & 0.28\(\pm\)0.06\textsuperscript{***}\\
    \citet{messidor2-1,messidor2-2}\hfill (0.58)& \textbf{0.58} & \textbf{0.58\(\pm\)0.01} &0.58\(\pm\)0.01& \underline{\textbf{0.58\(\pm\)0.01}}\textsuperscript{***} & \underline{\textbf{0.58\(\pm\)0.01}}\textsuperscript{***} & \textbf{0.58\(\pm\)0.01}\\
    \citet{papila}\hfill (0.69) & \underline{\textbf{0.72}}\textsuperscript{***} & 0.68\(\pm\)0.01 & 0.68\(\pm\)0.01&\textsuperscript{\textbf{\textdagger}}\underline{0.71\(\pm\)0.01}\textsuperscript{***} & \textsuperscript{\textbf{\textdagger}}\underline{0.71\(\pm\)0.01}\textsuperscript{***} & \underline{0.69\(\pm\)0.09}\textsuperscript{**}\\
    \citet{retina}\hfill (0.59) & \textbf{0.59}\textsuperscript{***} & 0.55\(\pm\)0.07 & 0.55\(\pm\)0.01 &\underline{\textbf{0.59\(\pm\)0.07}}\textsuperscript{***} & \textsuperscript{\textbf{\textdagger}}0.58\(\pm\)0.06\textsuperscript{***} & \textsuperscript{\textbf{\textdagger}}0.58\(\pm\)0.08\textsuperscript{***}\\
    \\
    &  \multicolumn{6}{c}{Linear classification}\\
    \cmidrule(r){2-7}
    \citet{octid}\hfill (0.93\(\pm\)0.05)& \textbf{0.88\(\pm\)0.08}\textsuperscript{**} & 0.76\(\pm\)0.02 & 0.82\(\pm\)0.01&\textsuperscript{\textbf{\textdagger}}0.87\(\pm\)0.01\textsuperscript{***} & \textsuperscript{\textbf{\textdagger}}0.87\(\pm\)0.01\textsuperscript{***} & 0.86\(\pm\)0.01\textsuperscript{***}\\
    \citet{glaucoma fundus}\hfill (0.83\(\pm\)0.06)& \textbf{0.79\(\pm\)0.06}\textsuperscript{***} & 0.71\(\pm\)0.09  & 0.73\(\pm\)0.02& 0.75\(\pm\)0.01\textsuperscript{***} & 0.75\(\pm\)0.01\textsuperscript{***} & \textsuperscript{\textbf{\textdagger}}0.76\(\pm\)0.01\textsuperscript{***}\\
    \citet{idrid}\hfill (0.44\(\pm\)0.04)& \textbf{0.50\(\pm\)0.04}\textsuperscript{**} & \textsuperscript{\textbf{\textdagger}}0.44\(\pm\)0.03&0.48\(\pm\)0.03 & 0.42\(\pm\)0.03 & 0.40\(\pm\)0.03 & \textsuperscript{\textbf{\textdagger}}0.44\(\pm\)0.03\\
    \citet{jsiec}\hfill (0.72\(\pm\)0.01) & 0.73\(\pm\)0.07\textsuperscript{***} & 0.63\(\pm\)0.01 & 0.68\(\pm\)0.02& \underline{\textbf{0.78\(\pm\)0.01}}\textsuperscript{***} & \textsuperscript{\textbf{\textdagger}}\underline{0.76\(\pm\)0.01}\textsuperscript{***} & 0.73\(\pm\)0.03\textsuperscript{***}\\
    \citet{messidor2-1,messidor2-2}\hfill (0.56\(\pm\)0.03) & 0.46\(\pm\)0.08  & \textsuperscript{\textbf{\textdagger}}0.52\(\pm\)0.03 & 0.51\(\pm\)0.04&\textbf{0.54\(\pm\)0.07}\textsuperscript{*} & \textsuperscript{\textbf{\textdagger}}0.52\(\pm\)0.08 & 0.49\(\pm\)0.09\\
    \citet{papila}\hfill (0.67\(\pm\)0.07) &\textbf{0.71\(\pm\)0.05}&0.66\(\pm\)0.04&0.66\(\pm\)0.03&\textsuperscript{\textbf{\textdagger}}0.69\(\pm\)0.03&0.7\(\pm\)0.03&\textsuperscript{\textbf{\textdagger}}0.69\(\pm\)0.03\\
    \citet{retina}\hfill (0.51\(\pm\)0.03) & 0.52\(\pm\)0.04 & 0.53\(\pm\)0.02 & 0.51\(\pm\)0.02&\textsuperscript{\textbf{\textdagger}}0.54\(\pm\)0.03 & \textsuperscript{\textbf{\textdagger}}0.54\(\pm\)0.03 & \textbf{0.55}\(\pm\)0.02\textsuperscript{*}\\
    \bottomrule
  \end{tabular}
  \end{subtable}

  \vspace{0.25em}

  \begin{subtable}[t]{0.95\textwidth}
    \caption{Accuracy for pathology classification using SwAV as base model.}
    \label{tab:SwAV medical classification}
    \centering
    \scriptsize
     \begin{tabular}{ccccccc}
  \toprule
    Dataset   &   SwAV \cite{SwAV}    &SwAV+  &
    SwAV+           &SwAV+            &SwAV+    &SwAV+ \\
    &&MLP&R\(_{0.1}\)-MLP&R\(_{5}\)-MLP &R\(_{10}\)-MLP&R\(_{20}\)-MLP\\
    \midrule
    \citet{octid}\hfill (0.93\(\pm\)0.05)&\textbf{0.84\(\pm\)0.02}&\textsuperscript{\textbf{\textdagger}}0.83\(\pm\)0.02&\textbf{0.84\(\pm\)0.02}&\textbf{0.84\(\pm\)0.02}&0.82\(\pm\)0.02&\textbf{0.84\(\pm\)0.02}\\
    \citet{glaucoma fundus}\hfill (0.83\(\pm\)0.06)&\textbf{0.76\(\pm\)0.01}&\textbf{0.76\(\pm\)0.01}&\textsuperscript{\textbf{\textdagger}}0.75\(\pm\)0.01&\textbf{0.76\(\pm\)0.01}&\textbf{0.76\(\pm\)0.01}&\textbf{0.76\(\pm\)0.02}\\
    \citet{idrid}\hfill (0.44\(\pm\)0.04)&0.45\(\pm\)0.03&\textbf{0.49\(\pm\)0.02}&0.47\(\pm\)0.02&\textbf{0.49\(\pm\)0.04}&0.47\(\pm\)0.02&\textsuperscript{\textbf{\textdagger}}0.48\(\pm\)0.04\\
    \citet{jsiec}\hfill (0.72\(\pm\)0.01) &\textbf{0.73\(\pm\)0.01}&\textsuperscript{\textbf{\textdagger}}0.72\(\pm\)0.02&\textsuperscript{\textbf{\textdagger}}0.72\(\pm\)0.01&\textbf{0.73\(\pm\)0.01}&\textbf{0.73\(\pm\)0.01}&\textsuperscript{\textbf{\textdagger}}0.72\(\pm\)0.01\\
    \citet{messidor2-1},\cite{messidor2-2}\hfill (0.56\(\pm\)0.03) &\textbf{0.55\(\pm\)0.04}&\textbf{0.55\(\pm\)0.03}&\textbf{0.55\(\pm\)0.02}&\textsuperscript{\textbf{\textdagger}}0.54\(\pm\)0.02&\textbf{0.55\(\pm\)0.02}&\textbf{0.55\(\pm\)0.01}\\
    \citet{papila}\hfill (0.67\(\pm\)0.07) &0.63\(\pm\)0.05&\textsuperscript{\textbf{\textdagger}}0.65\(\pm\)0.04&0.62\(\pm\)0.03&\textbf{0.66\(\pm\)0.04}&\textsuperscript{\textbf{\textdagger}}0.65\(\pm\)0.04&0.64\(\pm\)0.04\\
    \citet{retina}\hfill (0.51\(\pm\)0.03) &0.52\(\pm\)0.03&\textsuperscript{\textbf{\textdagger}}0.53\(\pm\)0.03&\textsuperscript{\textbf{\textdagger}}0.53\(\pm\)0.02&\textbf{0.54\(\pm\)0.02}&0.52\(\pm\)0.02&\textbf{0.54\(\pm\)0.02}\\
    \bottomrule
  \end{tabular}
  \end{subtable}
  \end{table}

\setlength{\tabcolsep}{4pt} % Reduce the space between columns
\setlength{\parskip}{1em}
\begin{table}[h!]
\caption{Evaluation of patch tokens repurposing.}
\vspace{1em}
\label{tab:interpretability}
\centering
\scriptsize

  % \vspace{1em}

  \begin{minipage}[t]{0.35\textwidth}
  \begin{subtable}[t]{0.95\textwidth}
    \caption{Pearson correlation between patch tokens on the top 25\% ranked by norm and presence of retinal layers in \citet{augenklinik}}
    \label{tab:correlation}
    \centering
    \scriptsize
    \begin{tabular}{lc}
      \toprule
      Model & Correlation\\
      \midrule
      \citet{retfound} & 0.0\(\pm\)0.01 \\
      Registers \cite{registers} & -0.19\(\pm\)0.13\\
      Sinder \cite{sinder}& 0.0\(\pm\)0.17\\
      DVT \cite{dvt}&-0.37\(\pm\)0.09\\
      DINOv2-S \cite{dinov2} & -0.13\(\pm\)0.14\\
      DINOv2 + MLP &0.19\(\pm\)0.03\\
      DINOv2 + R\(_{0.1}\)-MLP &0.15\(\pm\)0.02\\
      DINOv2 + R\(_5\)-MLP & \textbf{0.21\(\pm\)0.03\textsuperscript{**}}\\
      DINOv2 + R\(_{10}\)-MLP & \underline{0.2\(\pm\)0.04}\\
      DINOv2 + R\(_{20}\)-MLP & 0.16\(\pm\)0.01 \\
      \bottomrule
    \end{tabular}
  \end{subtable}
  \end{minipage}
  \hfill
  \begin{minipage}[t]{0.6\textwidth}
  \begin{subtable}[t]{0.95\textwidth}
    \caption{CorLoc scores applying LOST \cite{lost} with default parameters on VOC07 \cite{voc07} and accuracy on ImageNet-1k using 1-Nearest Neighbors on class tokens and patch tokens of minimum/maximum norm of second-order attention maps from low and high information patches respectively (see Section \ref{sec:introduction}). Bold and underline indicate best and second-best accuracy per model.}
    \label{tab:lost scores}
    \centering
    \scriptsize
    \begin{tabular}{lcc}
      \toprule
      Model & CorLoc \(\uparrow\) & Classification accuracy \(\uparrow\) \\
      \midrule
      Registers \cite{registers} & \underline{35.20}&\underline{0.71}/0.38/\underline{0.34}\\
      Sinder \cite{sinder} & \textbf{33.97}&\textbf{0.78}/\underline{0.39}/\textbf{0.37}\\
      DVT \cite{dvt} & 31.23&\underline{0.71}/\textbf{0.42}/\textbf{0.37}\\
      DINOv2-S \cite{dinov2} & 34.18& \underline{0.71}/0.31/0.26\\
      DINOv2-S + MLP & 31.18\(\pm\)0.03& 0.63\(\pm\)0.01/0.20\(\pm\)0.01/0.18 \(\pm\)0.01\\
      DINOv2-S + R\(_{0.1}\)-MLP&31.20\(\pm\)0.04& 0.63\(\pm\)0.01/0.20\(\pm\)0.01/0.18 \(\pm\)0.01\\
      DINOv2-S + R\(_{5}\)-MLP& 31.17\(\pm\)0.05& 0.64\(\pm\)0.01/0.21\(\pm\)0.01/0.19\(\pm\)0.01\\
      DINOv2-S + R\(_{10}\)-MLP& 31.17\(\pm\)0.05& 0.64\(\pm\)0.01/0.21\(\pm\)0.01/0.19\(\pm\)0.01\\
      DINOv2-S + R\(_{20}\)-MLP& 31.19\(\pm\)0.07&0.63\(\pm\)0.01/0.20\(\pm\)0.01/0.18\(\pm\)0.01\\
      \bottomrule
    \end{tabular}
  \end{subtable}
  \end{minipage}

\begin{subtable}[t]{0.95\textwidth}
\caption{Pathology classification accuracy on \citet{augenklinik}, \citet{octid}, and aggregated CFP datasets (\cite{glaucoma fundus,idrid,jsiec,messidor2-1,messidor2-2,papila,retina}) using Random Forest with class tokens and patch tokens of minimum/maximum norm from low and high information patches respectively (see Section \ref{sec:introduction}). Bold and underline indicate best and second-best accuracy per model per dataset.}
\label{tab:patch classification ophthalmology}
\begin{tabular}{lccccccccc}
      \toprule
      Model & \citet{augenklinik} &\citet{octid}&CFP\\
      \midrule
      RetFound \cite{retfound} & \textbf{1.0}/\underline{0.92} /0.87 &\textbf{0.77}/0.45/\underline{0.47}&\textbf{0.53\(\pm\)0.17}/\underline{0.51\(\pm\)0.17}/\underline{0.51\(\pm\)0.17}\\
      Registers \cite{registers}&0.83/\textbf{0.87}/\underline{0.86} &\textbf{0.67}/0.55/\underline{0.62} &\textbf{0.54\(\pm\)0.15}/\underline{0.5\(\pm\)0.16}/\underline{0.5\(\pm\)0.17}\\
      Sinder \cite{sinder}&\underline{0.76}/\textbf{0.8}/0.67 &\textbf{0.78}/\underline{0.56}/0.55 &\textbf{0.54\(\pm\)0.16}/\underline{0.5\(\pm\)0.16}/0.48\(\pm\)0.17\\
      DVT \cite{dvt}& \underline{0.77}/0.7/\textbf{0.94} &\textbf{0.76}/\underline{0.61}/0.57 &\textbf{0.55\(\pm\)0.14}/0.51\(\pm\)0.15/\underline{0.52\(\pm\)0.15}\\
      DINOv2-S \cite{dinov2}   & 0.74/\textbf{0.84}/\underline{0.77}&\textbf{0.73}/0.48/\underline{0.49}&\textbf{0.55\(\pm\)0.14}/\underline{0.49\(\pm\)0.17}/\underline{0.49\(\pm\)0.18}\\
      DINOv2 + MLP & \textbf{0.94\(\pm\)0.03}/0.67\(\pm\)0.07/\underline{0.86\(\pm\)0.02}&\textbf{0.55\(\pm\)0.02}/0.36\(\pm\)0.01/\underline{0.38\(\pm\)0.01} & \textbf{0.53\(\pm\)0.15}/\underline{0.47\(\pm\)0.17}/\underline{0.47\(\pm\)0.17}   \\
      DINOv2-S + R\(_{0.1}\)-MLP          & \textbf{0.95\(\pm\)0.02}/0.68\(\pm\)0.08/\underline{0.87\(\pm\)0.03} & \textbf{0.55\(\pm\)0.02}/0.36\(\pm\)0.01/\underline{0.38\(\pm\)0.02}& \textbf{0.53\(\pm\)0.15}/\underline{0.47\(\pm\)0.17}/\underline{0.47\(\pm\)0.17} \\
      DINOv2-S + R\(_5\)-MLP    & \textbf{0.96\(\pm\)0.02}/ 0.67\(\pm\)0.07/\underline{0.86\(\pm\)0.02}&\textbf{0.55\(\pm\)0.02}/0.36\(\pm\)0.01/\underline{0.37\(\pm\)0.01}&\textbf{0.53\(\pm\)0.15}/0.47\(\pm\)0.17/\underline{0.48\(\pm\)0.17} \\
      DINOv2-S + R\(_{10}\)-MLP &\textbf{0.94\(\pm\)0.02}/0.69\(\pm\)0.08/\underline{0.85\(\pm\)0.04}&\textbf{0.55\(\pm\)0.02}/0.36\(\pm\)0.01/\underline{0.38\(\pm\)0.01}&\textbf{0.53\(\pm\)0.15}/0.47\(\pm\)0.17/\underline{0.48\(\pm\)0.17} \\
      DINOv2-S + R\(_{20}\)-MLP & \textbf{0.95\(\pm\)0.02}/0.7\(\pm\)0.07/\underline{0.87\(\pm\)0.02}&\textbf{0.55\(\pm\)0.01}/0.36\(\pm\)0.01/\underline{0.37\(\pm\)0.01}&\textbf{0.53\(\pm\)0.15}/\underline{0.47\(\pm\)0.17}/\underline{0.47\(\pm\)0.17}\\
      \bottomrule
    \end{tabular}
\end{subtable}

\end{table}

\textbf{Effectiveness of RMLP Regularization.} Across natural and medical image domains, fine-tuning with R\(_\lambda\)-MLP consistently outperforms MLP baselines. On natural images, it better preserves performance compared to other regularizers. In the medical domain, despite limited data and single-GPU training, R\(_\lambda\)-MLP achieves state-of-the-art OCT segmentation (Figure \ref{fig:oct}) and improves classification, especially with 1-NN, indicating enhanced representation quality (Table~\ref{tab:medical results}). While linear-head classification with R\(_\lambda\)-MLP slightly trails state-of-the-art or DINOv2-S, this reflects differing latent space structures since RMLPs promote neighborhood-based similarity (Theorem~\ref{distortion control}, Corollary~\ref{fundament of rmlp}), whereas MLPs enforce linear separability. Larger \(\lambda\) values degrade performance by altering representation topology (Tables~\ref{tab:downstream natural}--\ref{tab:medical results}, Figure~\ref{fig:random embedding}), while smaller amplitude values do not consistently improve downstream results, highlighting a tradeoff between retaining information and preserving topological structure ViTs need to overcome during fine-tuning.

\textbf{Enhancement on Cross-Paradigm Learning.} RMLPs enable effective adaptation of models trained under different learning paradigms, such as SwAV~\cite{SwAV}, when combined with DINOv2 learning algorithm (Tables \ref{tab:medical dense swav}, \ref{tab:SwAV medical classification},). While applying DINOv2 to standard MLPs also yields gains, RMLPs consistently deliver superior performance across tasks.

\newpage

\textbf{Attention Artifacts on Natural Images.} Fine-tuning on ImageNet-1k slightly reduces the CorLoc score for both MLP and RMLP heads (Table \ref{tab:lost scores}), a minor drop consistent with other regularization methods and attributable to fine-tuning rather than the RMLPs. While regularized models from literture maintain global context and classification accuracy comparable to DINOv2-S, they degrade patch token quality in dense prediction tasks (Table \ref{tab:downstream natural}). Notably, DINOv2-S encodes global information in patch tokens from low-information regions, a tendency amplified by regularized models. Fine-tuning with MLPs or RMLPs reduces this artifact, equalizing patch token behavior, with RMLPs better preserving downstream performance post fine-tuning (Table \ref{tab:interpretability}).

\newpage
\textbf{Attention Artifacts on Ophthalmological Modalities.} Both RetFound and DINOv2-S, along with its regularized variants from literature, achieve strong classification accuracy on ophthalmology datasets using patch tokens alone (Table~\ref{tab:patch classification ophthalmology}), even from void regions, indicating excessive global information leakage. Moreover, these models exhibit weak or negative correlation with retinal layer presence (Table~\ref{tab:correlation}, Figure \ref{fig:attention_maps_diseases}). In contrast, models fine-tuned with RMLPs (using suitable $\lambda$) show better anatomical alignment and reduced attention artifacts as well as better performance in dense tasks (Table \ref{tab:medical dense}). This is crucial, as interpretable pathology detection relies on attending to anatomically relevant regions.

\textbf{ViT-UNet Hybrids.} Combining ViT backbones with UNet-style decoders consistently improves performance on dense prediction tasks (Tables~\ref{tab:downstream natural},~\ref{tab:complete medical dense}), highlighting the benefit of integrating global context with local spatial detail.

\section{Conclusions}
\label{sec:conclusions}

Our work shows that RMLP regularization enhances the interpretability and robustness of ViT representations across both natural and ophthalmological domains. By inducing sparsity in patch tokens and mitigating first- and second-order artifacts, fine-tuning using R\(_\lambda\)-MLPs produces more structured embeddings. Despite being trained on limited data and using small computational resources, R\(_\lambda\)-MLPs help ViTs achieve strong performance in classification and dense prediction tasks both on natural and ophthalmological images. These results highlight randomized-MLPs as a lightweight and effective approach for regularizing representation geometry, pointing to a promising direction for developing more semantically grounded and interpretable vision transformers.

\textbf{Limitations.} The optimal regularization strength may depend on the data modality or the dimensionality of the ViT’s latent space. However, this work does not propose a principled method for selecting it, relying instead on heuristic tuning. Additionally, our experiments are currently limited to small- and mid-scale datasets. Future work should investigate performance at larger scales and across a broader range of domains.

\begin{ack}
J.V.O. and L.L. received support from the Helmholtz Association under the joint research school "Munich School for Data Science - MUDS". T.P and B.S. received support from the DFG grant 513025799.
We want to thank Salome Kazeminia and Sophia J. Wagner for the valuable inputs and conversations during the making of this paper.
\end{ack}

\vfill

\bibliographystyle{alpha}

\medskip

\newpage

\newpage
\appendix

\section{Technical Appendices and Supplementary Material}

\subsection{Supplementary Figures}
\label{sec:suplementary figures}

Figure \ref{sup_fig:attention maps stats} shows fine-tuning with RMLPs create tokens with smaller norm and that it assigns bigger norms to patch tokens coming from high-information regions unlike when fine-tuning with MLPs.

\begin{figure}[h]
    \centering

    % Titles above first subfigure
    \begin{minipage}{0.9\linewidth}
        \centering
        \hspace*{2.7cm}\makebox[0.3\linewidth][l]{\textbf{BSDS300}}%
        \hspace*{-0.3cm}\makebox[0.3\linewidth][l]{\textbf{OCTID}}%
        \hspace*{-0.7cm}\makebox[0.3\linewidth][l]{\shortstack{\textbf{Glaucoma}\\\textbf{Fundus}}}%
    \end{minipage}

    \vspace{0.3em}

    % First subfigure
    \begin{subfigure}[b]{0.7\linewidth}
        \centering
        \includegraphics[width=\linewidth]{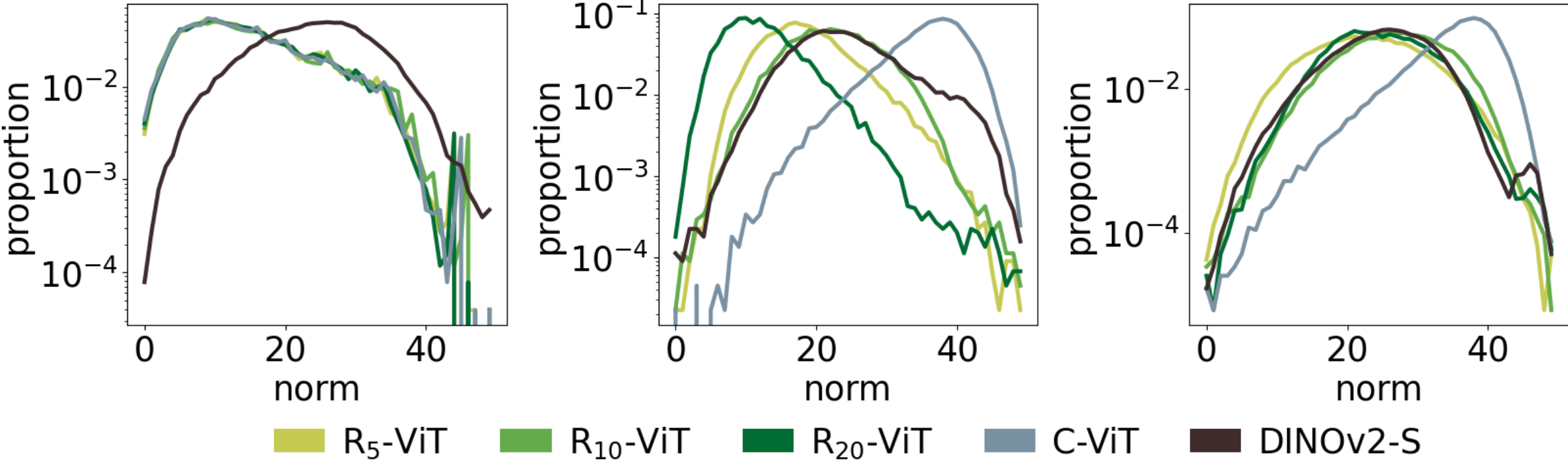}
        \caption{Proportion curves of attention maps' norms.}
        \label{fig:norms}
    \end{subfigure}

    \vspace{1em}  % space between subfigures

    \begin{minipage}{0.9\linewidth}
        %\centering
        \hspace*{2.9cm}\makebox[0.3\linewidth][l]{\textbf{BSDS300}}%
        \hspace*{-1.cm}\makebox[0.3\linewidth][l]{\textbf{OCTID}}%
        \hspace*{-1.1cm}\makebox[0.3\linewidth][l]{\shortstack{\textbf{Glaucoma}\\\textbf{Fundus}}}%
    \end{minipage}
    \vspace{0.2em}

    \begin{subfigure}[b]{0.7\linewidth}
        %\centering
        \begin{minipage}{0.1\linewidth}
            \centering
            \vspace*{-1.2\linewidth}
            \hspace*{0.5\linewidth}
            \rotatebox{90}{\textbf{DINOv2-S}} \\
            \vspace*{0.6\linewidth}
            \hspace*{0.5\linewidth}
            \rotatebox{90}{\textbf{MLP}} \\ 
            \vspace*{1.\linewidth}
            \hspace*{0.5\linewidth}
            \rotatebox{90}{\textbf{R\(_5\)-MLP}}\\
        \end{minipage}%
        \begin{minipage}{0.8\linewidth}
            \includegraphics[width=\linewidth]{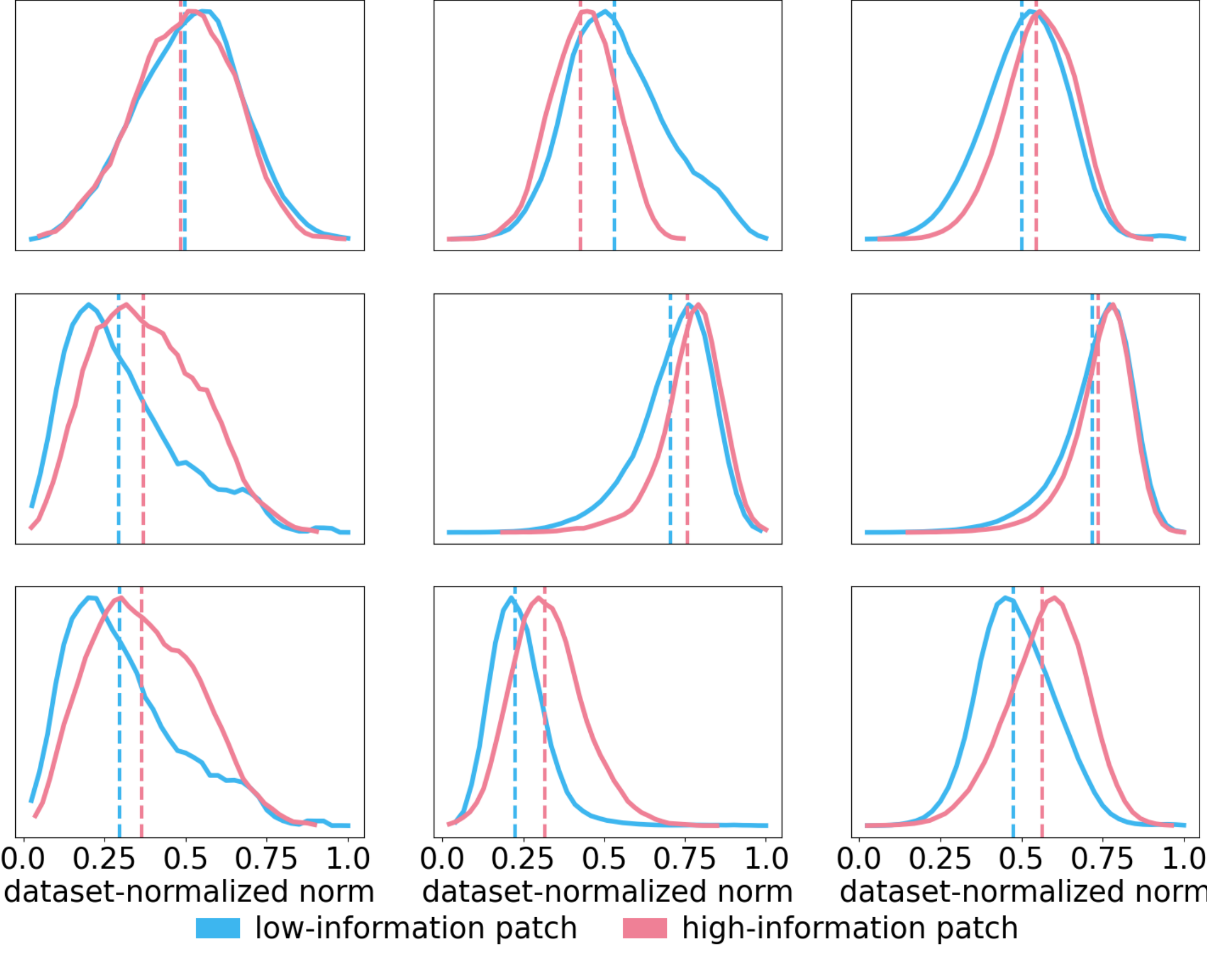}
        \end{minipage}
        \caption{Normalized probability density functions of attention maps' norms for low- and high-information patches across models (rows) and datasets (columns).  Low- and high-content patches are classified by a two-component Gaussian Mixture model fitted on smoothed pixel gradients averaged per patch. Dashed lines stand for expected value.}
        \label{fig:distributions}
    \end{subfigure}

    \caption{Visualization of second- (\citet{bsds300}) and first-order (\citet{octid}/\citet{glaucoma fundus}) attention maps' norm statistics. \textbf{Subfigure (a):} Proportion curves. \textbf{Subfigure (b):} Normalized probability density functions for low- and high-information patches.}
    \label{sup_fig:attention maps stats}
\end{figure}

The following visualizations for the first- and second-order attention maps as well as the PCA visualizations were computed following a sliding window approach.

Here we present some extra examples on the performance of DINOv2-S, DINOv2-S+MLP and DINOv2-S+R\(_\lambda\)-MLP in natural, OCT and CFP modalities, in addition of \citet{retfound} for the ophthalmology modalities mentioned before.

\begin{figure}
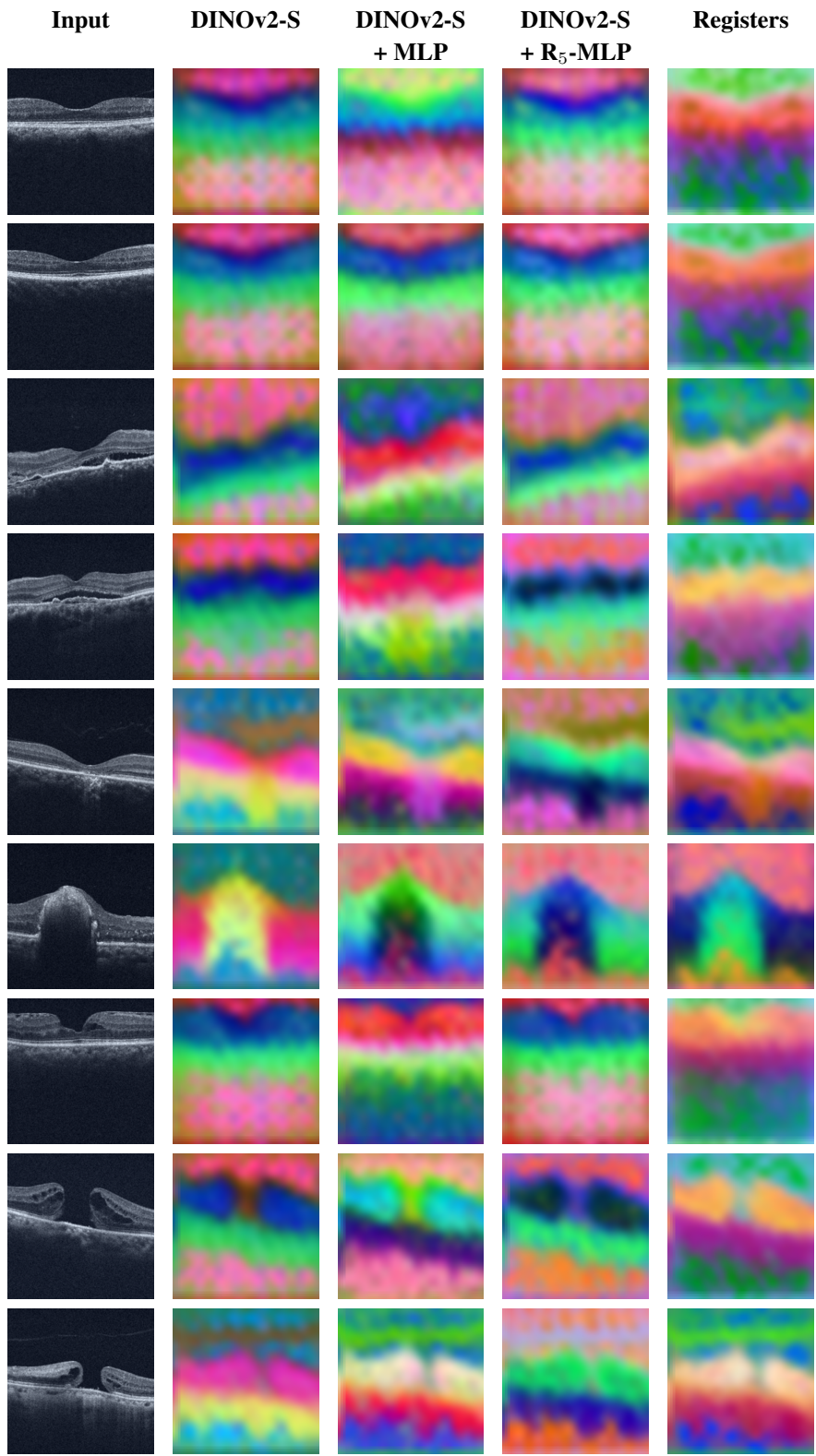

    \centering
    \setlength{\tabcolsep}{7pt}  % Adjust as needed

    % [inline block 0: 9 envs, 52016 chars -> data_tex | \begin{tabular}{>{\centering\arraybackslash}m{0.115\textwidth}                     >{\centering\arraybackslash}m{0.115\t...]

    \caption{PCA visualization of embeddings of \citet{octid} dataset using different backbones (columns).}
    \label{sup_fig:octid pca}
\end{figure}

\clearpage

\subsection{Mathematical Definitions and Proofs}
\label{sec:math supplementary}

Given the interdisciplinary of the theoretical analysis in this paper, not only across scientific fields, but also across mathematical domains, we first provide a list of useful mathematical definitions results to analytically understand the results in this paper.

\subsubsection{General Mathematical Background}
We present now a collection of topological definitions and a result.

\begin{definition}\label{locally injective}
    Given a function \(f:X\rightarrow Y\), we will say that \(f\) is \textit{locally injective} on a subset \(A\subseteq X\) if, \(\forall x\in A,\, \exists U_x\subseteq X\) open such that \(f|_{U_x}\) is injective. 
\end{definition}

\begin{definition}\label{cloud}
    Given a metric space \((X,\delta)\), a subset \(A\subseteq X\) and \(\varepsilon>0\), we call a \textit{\(\varepsilon\)-cloud} to the set \(\{x\in X: \delta(x,A)<\varepsilon\}\).
\end{definition}

\begin{definition}\label{disconnection}
    Given a topological space \(X\), we will say \(X\) is disconnected if there are \(A,B\subseteq X\) such that they are open and non-empty, \(A\cap B=\emptyset\) and \(A\cup B=X\). In addition, we will call \(\{A,B\}\) a disconnection.
\end{definition}

\begin{definition}\label{density}
    Given a topological space \(X\) and \(D\subseteq X\), we will say \(D\) is dense in \(X\) is for whatever open subset \(U\subseteq X\), \(U\cap D\neq \emptyset\)
\end{definition}

\begin{remark}\label{continuos function and disconnections}
    If \(f:X\rightarrow Y\) is continuous and \(Y\) is disconnected, \(X\) is disconnected.
\end{remark}
\begin{proof}
    Let \(\{A,B\}\) be a disconnection for \(Y\). Then \(f^{-1}[A]\) and \(f^{-1}[B]\) are open and non-empty due to \(f\) being continuous and a function resp. Furthermore, \(f^{-1}[A]\cup f^{-1}[B]=X\) since \(f\) is a function and \(\{A,B\}\) a disconnection. Lastly, using that \(\{A,B\}\) is a disconnection and knowing that the pre-image opens finite intersections, \(f^{-1}[A]\cap f^{-1}[B]=f^{-1}[A\cap B]=f^{-1}[\emptyset]=\emptyset\).
\end{proof}

\subsubsection{Topological Analysis of Vision Transformers}

In this subsection, we will present a mathematical model describing ViTs and proving some results on their topological properties. Notably, the only assumption we are making in this subsection is that the corresponding ViT was trained using the KoLeo \cite{koleo} regularizer.

\begin{theorem}\label{math model vits}

    ViTs can be decomposed as a tokenization function followed by a composition of translations. Said translations are defined by local orthonormal bases generated by the modulation of the data via the queries, keys and values layers.

\end{theorem}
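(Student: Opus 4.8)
The plan is to unfold the standard ViT pipeline layer by layer and match each ingredient with the objects named in the statement. First I would fix notation for the tokenization: a ViT partitions an input \(x\in\Omega\) into a grid of patches, flattens and linearly embeds each patch, adds (learned or fixed) positional embeddings, and prepends the class token. Bundling these operations gives a map \(\mathcal{C}:\Omega\rightarrow\Psi\), where \(\Psi=\mathbb{R}^{\nu\times d}\) is the space of token matrices (\(\nu\) tokens, embedding dimension \(d\)). Being a composition of a reshaping and an affine map, \(\mathcal{C}\) is continuous; this is the ``tokenization function.''

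Next I would analyze a single transformer block \(T_i\). Writing \(\mathrm{LN}\) for layer normalization, a block maps a token matrix \(X\) by \(Y = X + \mathrm{MHSA}(\mathrm{LN}(X))\) and then \(Z = Y + \mathrm{MLP}(\mathrm{LN}(Y))\). Both sub-steps have the residual form \(W\mapsto W+u(W)\): each token is translated by a vector \(u(W)\) depending continuously on the current representation. Composing all \(L\) blocks gives \(T = T_L\circ\cdots\circ T_1:\Psi\rightarrow\Psi\), a finite composition of such residual maps, hence continuous, with \(\mathcal{V}=T\circ\mathcal{C}\). This yields the decomposition, with ``translation'' understood in the residual sense — the displacement vectors are state-dependent.

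Then I would make the role of queries, keys and values precise. For one attention head, \(Q=\mathrm{LN}(X)W_Q\), \(K=\mathrm{LN}(X)W_K\), \(V=\mathrm{LN}(X)W_V\) (folding the output projection into \(W_V\)), and the update of token \(i\) is \(\sum_{j\le\nu} A_{ij}v_j\) with \(A=\mathrm{softmax}(QK^\top/\sqrt{d_k})\). The value vectors \(\{v_j\}_{j\le\nu}\) span a subspace of \(\Psi\) that depends on \(X\); applying Gram--Schmidt to a maximal linearly independent subset produces an orthonormal basis \(\{e_k\}\) of that subspace, while the attention row \(A_{i\bullet}\) — a data-dependent convex combination modulated by \(Q\) and \(K\) — supplies the coordinates of the update in the frame \(\{e_k\}\). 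Thus each per-token update is expressed in a local, input-dependent orthonormal basis generated by the query, key and value layers acting on the data, exactly as asserted; the MLP sub-step is handled the same way by QR-factoring its first linear layer, or simply absorbed into the residual update \(u(\cdot)\).

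The main obstacle is the loose word ``translation'': a block is not a rigid translation, since the displacement of a token depends on that token and, through attention, on its neighbors; the honest content is the residual decomposition together with continuity, and I would present it that way while flagging ``translation'' as shorthand for ``residual update.'' A secondary subtlety is pinning down which subspace the orthonormal basis spans and keeping the Gram--Schmidt step well defined under linear dependence among the \(v_j\); this is routine but worth stating for rigor. Since all downstream results (Theorems~\ref{summary of vit structure},~\ref{t is homeo}, Corollary~\ref{tranformers topology}) use only the factorization \(\mathcal{V}=T\circ\mathcal{C}\) and the continuity of \(T\) and \(\mathcal{C}\), I would make sure those are established cleanly and treat the orthonormal-basis description as the structural interpretation.
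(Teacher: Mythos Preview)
Your decomposition \(\mathcal{V}=T\circ\mathcal{C}\), the identification of each transformer block as a residual map \(X\mapsto X+u(X)\), and your care about continuity all match the paper's argument; the paper even defines \(T_l:x\mapsto x+s_l(x)\) in exactly the spirit you describe, and your remark that ``translation'' should be read as ``residual update'' is well taken.

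Where you diverge is in the origin of the orthonormal basis. You build it by Gram--Schmidt on the value vectors \(\{v_j\}\), so your basis lives in the embedding direction and spans the column space of \(V\). The paper instead observes that inside the attention score one has \(QK^\top=\mathcal{L}_Q(xx^\top)\mathcal{L}_K^\top\); since \(xx^\top\) is symmetric, the spectral theorem gives an orthonormal eigenbasis and a diagonal \(D_x\), and the change-of-basis matrices are absorbed into modified query/key maps \(\mathcal{L}_Q',\mathcal{L}_K'\). Thus the paper's orthonormal basis lives in the token direction (\(\mathbb{R}^\nu\)) and is intrinsic to the data Gram matrix, which is what is meant by ``generated by the modulation of the data.'' Your Gram--Schmidt construction is valid but somewhat generic---any spanning set can be orthonormalized---whereas the diagonalization of \(xx^\top\) ties the basis directly to the attention computation itself. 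For the downstream theorems only continuity and the factorization matter, so either reading suffices there; but if you want to reproduce the paper's interpretation, replace the Gram--Schmidt step with the spectral decomposition of \(xx^\top\).
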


\begin{proof}

Let \(\Omega\) be an image domain and \(\Psi:=\bigcup\limits_{n\in\mathbb{N}}\mathbb{R}^{n\times d}\) be the ViT's latent space and \(d\) its token dimension. Further, let \(\mathcal{V}:\Omega\rightarrow \Psi\) denote the ViT as a learnable function and \(T:\Psi\rightarrow \Psi\) the transformer encoder, following the structure from \citet{vit}. We decompose these maps as \(\mathcal{V}=T\circ\mathcal{C}\) and \(T= \mathcal{N}\circ T_{L-1}\circ...\circ T_0\), respectively where \(\mathcal{C}\) is a continuous map from \(\Omega\) to \(\Psi\), \(\mathcal{N}\) is a normalization layer and for each \(l\in\{0,\dots,L-1\}\), we define
\begin{align*}
    T_l:\Psi&\rightarrow \Psi\\
    x&\mapsto x+s_l(x)
\end{align*} with \(s_l:\Psi\rightarrow\Psi\) described below.

Leaving the normalization \(\mathcal{N}\) aside, we can see \(T\) as a translation defined as a composition of the residual steps \(s_l\). Following \citet{attention}, \(s_l\) can be further decomposed as \[s_l:x\mapsto f_l\left(\mathcal{N}_{2,l}(x+h_l(\mathcal{N}_{1,l}(x)))\right)\] where \(h_l\) represents the multi-head self-attention mechanism, \(f_l\) is the MLP block and \(\mathcal{N}_{1,l}\) and \(\mathcal{N}_{2,l}\) are layer normalizations. 

Assuming linear layers have no bias for simplicity, the attention \(h_l\) can be expressed as \[h_l:x\mapsto\sigma\left(\frac{1}{r}\mathcal{L}_{Q,l}(xx^T)\mathcal{L}_{K,l}^T\right)\mathcal{L}_{V,l} x\] where \(\sigma\) is the softmax function, \(r\) is a scaling factor and \(\mathcal{L}_{V,l},\mathcal{L}_{K,l}, \mathcal{L}_{Q,l}\) are learnable linear maps for queries, keys, and values.

Since \(xx^T\) is symmetric, it can be diagonalized, allowing \(h_l\) to be rewritten as \begin{align*}\label{diagonalized head}    
h_l:x\mapsto\sigma\left(\frac{1}{r}\mathcal{L}_{Q,l}'D_x\mathcal{L}_{K,l}'\right)\mathcal{L}_{V,l} x
\end{align*} where \(D_x\) is diagonal and \(\mathcal{L}_Q'\) and \(\mathcal{L}_K'\) incorporate the change of basis.

Thus, we can understand \(T\) as a map creating a  field  for the embeddings to follow, where the self-attention layers dynamically generate local orthonormal bases over \(\Psi\) that the queries, keys and value matrices then modulate while the MLPs \(f_l\) refine the resulting directions.
\end{proof}

\begin{lemma}\label{robustness}
    If a continuous function \(f:X\rightarrow Y\) is locally injective on a compact set \(K\subseteq X\) and the topology on \(X\) is induced by a metric \(\delta\), then \(\exists\varepsilon>0\) such that \(f\) is injective on \[\{x\in X:\delta(x,K)<\varepsilon\}.\]
\end{lemma}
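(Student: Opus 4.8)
The plan is to argue by contradiction, using the sequential compactness of $K$ (a compact subset of a metric space). Suppose no admissible $\varepsilon$ exists. Then for every $n\in\mathbb{N}$ the $\tfrac1n$-cloud $C_n=\{x\in X:\delta(x,K)<\tfrac1n\}$ fails to be a set of injectivity, so it contains points $a_n\neq b_n$ with $f(a_n)=f(b_n)$. Since $\delta(a_n,K)<\tfrac1n$ and $\delta(b_n,K)<\tfrac1n$, I would choose $p_n,q_n\in K$ with $\delta(a_n,p_n)<\tfrac1n$ and $\delta(b_n,q_n)<\tfrac1n$; in particular $\delta(a_n,p_n)\to 0$ and $\delta(b_n,q_n)\to 0$.

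Next I would extract limits. By sequential compactness of $K$, after passing to a subsequence we may assume $p_n\to p\in K$ and $q_n\to q\in K$, and then the triangle inequality gives $a_n\to p$ and $b_n\to q$ as well. Continuity of $f$ yields $f(a_n)\to f(p)$ and $f(b_n)\to f(q)$, while $f(a_n)=f(b_n)$ for all $n$ forces $f(p)=f(q)$ (uniqueness of limits, using that $Y$ is Hausdorff — automatic in our setting, where $Y=\Psi$ is metric). Now split into two cases. If $p=q$, then $a_n\to p$ and $b_n\to p$; by local injectivity there is an open $U_p\ni p$ with $f|_{U_p}$ injective, and since $X$ is metric we may pick $r>0$ with $B(p,r)\subseteq U_p$. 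For $n$ large we get $a_n,b_n\in B(p,r)\subseteq U_p$, so $f(a_n)=f(b_n)$ forces $a_n=b_n$, contradicting $a_n\neq b_n$. If $p\neq q$, then $p$ and $q$ are two distinct points of $K$ with $f(p)=f(q)$, i.e.\ $f$ is not injective on $K$ — so this case is ruled out precisely when $f$ is known to be injective on $K$. Either way we reach a contradiction, so the desired $\varepsilon$ exists.

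\textbf{Main obstacle.} The delicate step is the second case, where $a_n$ and $b_n$ drift toward \emph{different} points of $K$: local injectivity alone does not exclude it. Indeed, an immersed figure-eight loop $f:S^1\to\mathbb{R}^2$ is locally injective on the compact set $K=S^1$ yet is not injective on any neighbourhood of its self-crossing, so the statement really needs $f$ to be injective on $K$, not merely locally so (and a direct Lebesgue-number covering argument stalls at exactly the same point, which is why the contradiction route is cleaner). This is harmless here: in the intended application $K=\mathcal{C}[\Omega]$ and $f=T$, and since the KoLeo regularizer forces $\mathcal{V}=T\circ\mathcal{C}$ to be injective, $T$ is injective on $\mathcal{C}[\Omega]$; with that in hand the second case is vacuous and the argument closes. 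The remaining ingredients — sequential compactness of a compact metric set, uniqueness of limits, and the fact that a compact subset of a metric space is closed so that the clouds $C_n$ decrease to $K$ — are routine.
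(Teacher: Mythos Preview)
Your approach is essentially the paper's: argue by contradiction, build sequences $a_n\neq b_n$ in the shrinking $\tfrac1n$-clouds with $f(a_n)=f(b_n)$, use compactness of $K$ to extract subsequential limits in $K$, and then contradict local injectivity once the two limits coincide. You are also right that the lemma as stated is false without global injectivity of $f$ on $K$ (your figure-eight counterexample is decisive); the paper's own argument silently invokes ``$f$ is \dots\ injective in $K$'' at precisely the step you flag, so your diagnosis and your patch via the KoLeo-induced injectivity on $\mathcal{C}[\Omega]$ match what the paper actually uses.
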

\begin{proof}
    Let \(U_\varepsilon:=\{x\in X:\delta(x,K)<\varepsilon\}\) and let us work by contradiction, i.e. we will assume \(\forall \varepsilon>0,\,\exists\{x_\varepsilon,y_\varepsilon\}\subseteq U_\varepsilon\) such that \(x_\varepsilon\neq y_\varepsilon\) and \(f(x_\varepsilon)=f(y_\varepsilon)\). Given this, we can construct two sequences such that for \(n\in\mathbb{N}^+,\,\{x_n,y_n\}\subseteq U_{1/n},\,x_n\neq y_n\) and \(f(x_n)=f(y_n)\). 

Since \(K\) is compact, it is easy to see that for every positive natural number \(\overline{U_{1/n}}\) is compact too. Thus, given \(X\) is metric and \(\{x_n\}_{n>0},\{y_n\}_{n>0}\subseteq \overline{U_{1}}\), we can extract convergent subsequences \(\{x_{n_i}\}_{i>0},\{y_{n_i}\}_{i>0}\) to \(x\) and \(y\) respectively. 

Let us notice that \(K=\bigcap\limits_{n>0}U_{1/n}\). Therefore \(\{x,y\}\subseteq K\Rightarrow f(x)=f(y)\) since \(f\) is continuous and injective in \(K\), having \(x=y\) as a consequence. 
Also, since \(f\) is locally injective, \(\exists V_x\) open such that \(x\in V_x\) and \(f\) is injective on it and since \(\{x_{n_i}\}_{i>0},\{y_{n_i}\}_{i>0}\) both converge to \(x\), \(\exists N\in \mathbb{N}\) such that \(\forall i>N, \{x_{n_i},y_{n_i}\}\subset V_x\Rightarrow f(x_{n_i})=f(y_{n_i})\,\forall i>N\), reaching this way the contradiction we were looking for.

\end{proof}

\begin{lemma}\label{homeomorphism}
    If a function \(f:X\rightarrow Y\) is injective and continuous with compact domain and Hausdorff codomain, \(f\) is an homeomorphism on \(f[X]\).
\end{lemma}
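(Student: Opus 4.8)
The plan is to view $f$ as a map onto its image $f[X]$, equipped with the subspace topology inherited from $Y$, and to show that this corestriction is a homeomorphism. Injectivity already makes $f:X\to f[X]$ a bijection, and continuity is assumed, so the only thing left is to verify that the inverse $f^{-1}:f[X]\to X$ is continuous. It is cleanest to prove the equivalent statement that $f$ is a \emph{closed} map onto $f[X]$: continuity of $f^{-1}$ is exactly the assertion that $(f^{-1})^{-1}[C]=f[C]$ is closed in $f[X]$ for every closed $C\subseteq X$.

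The key step is the standard compactness--Hausdorff chain. First, if $C\subseteq X$ is closed and $X$ is compact, then $C$ is compact, since a closed subset of a compact space is compact. Second, because $f$ is continuous, $f[C]$ is compact as a continuous image of a compact set. Third, because $Y$ is Hausdorff, the compact subset $f[C]\subseteq Y$ is closed in $Y$; intersecting with $f[X]$, it is then closed in the subspace $f[X]$. Hence $f$ carries closed sets of $X$ to closed sets of $f[X]$, so $f^{-1}$ is continuous, and $f$ is a homeomorphism onto $f[X]$.

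There is no genuine obstacle here; the result is a textbook fact. The one point deserving care is bookkeeping of the subspace topology: $f$ need not be surjective onto $Y$, so ``closed'' throughout must be read relative to $f[X]$, which is why the passage from ``$f[C]$ closed in $Y$'' to ``$f[C]$ closed in $f[X]$'' is spelled out explicitly. This lemma is what closes the argument in Theorem~\ref{t is homeo}: once Lemma~\ref{robustness} produces an $\varepsilon>0$ on which $T$ is injective on (a neighborhood of) the compact set $\mathcal{C}[\Omega]$, one applies the present lemma to $T$ restricted to an appropriate compact piece of that $\varepsilon$-cloud inside the metric (hence Hausdorff) space $\Psi$ to obtain the claimed homeomorphism.
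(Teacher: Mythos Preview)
Your proof is correct and follows essentially the same approach as the paper: both use the chain ``closed in compact $\Rightarrow$ compact $\Rightarrow$ compact image under continuous map $\Rightarrow$ closed in Hausdorff'' to conclude that $f^{-1}$ is continuous. The only cosmetic difference is that the paper phrases this as showing $f$ is \emph{open} (by passing to complements) whereas you show directly that $f$ is \emph{closed}; your version is arguably tidier in its explicit handling of the subspace topology on $f[X]$.
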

\begin{proof}

Assuming the hypothesis from the lemma, the only thing remaining to show is that \(f\) is open.

Thing being this way, let us take an open set \(U\subseteq X\). Then \(X\setminus U\) is closed which turns it into a compact set since \(X\) is compact. Thus, by continuity, \(f[X\setminus U]\) is compact, which makes it closed since \(Y\) is Hausdorff, forcing \(f[U]\) to be open.

\end{proof}

\subsubsection{RMLPs as Stochastic Regularizer for ViTs}

We now introduce some useful definitions and rigorous proofs on the way RMLPs regularize the representation's topology by making the ViT perceive points in the representation space as ball of certain radius with high probability.

\begin{definition}\label{embedding with distortion}\cite{rla}
Let \(E\subset \mathbb{F}^n\) a set and \(\varepsilon\in(0,1)\) be a distortion parameter. We say that a linear map \(S:\mathbb{F}^n\rightarrow\mathbb{F}^m\) produces an \(\varepsilon\)-distortion of \(E\) if, \(\forall x\in E\), \[(1-\varepsilon)\|x\|_2\leq \|Sx\|_2\leq (1+\varepsilon)\|x\|_2.\]
    
\end{definition}

\begin{remark}\label{control over norm}
    If \(x\in\mathbb{R}^d\) and \(\Gamma\) is a \(n\times d\) matrix with iid entries and such that \(\Gamma_{i,j}\sim\mathcal{N}(0,\sigma^2)\), then \(\mathbb{E}[\|\Gamma x\|_2^2]=d\sigma^2\|x\|_2^2\).
\end{remark}
\begin{proof}
    Computing \[\mathbb{E}[\|\Gamma x\|_2^2]=\mathbb{E}[\Sigma_i\left(\Sigma_j\Gamma_{ij}x_j\right)^2]=\Sigma_i\mathbb{E}[(\Sigma_j\Gamma_{ij} x_j)^2].\]
    Since the entries of \(\Gamma\) are independent and \(\Gamma_{i,j}\sim\mathcal{N}(0,\sigma^2)\), we then have \[\mathbb{E}[\|\Gamma x\|_2^2]=\Sigma_i\Sigma_j x_j^2\mathbb{E}[\Gamma_{jj}^2]=d\sigma^2\|x\|_2^2.\]
\end{proof}

\begin{definition}
    Let \(S\in\mathbb{F}^{m\times n}\) be a linear map and \(E\subseteq \mathbb{S}^{n-1}(\mathbb{F})\) a subset of the unit sphere in \(\mathbb{F}^n\). Then, \citet{rla} define the minimum and maximum restricted singular values respectively as \[\sigma_{\min}(S,E):=\min\limits_{x\in E}\|Sx\|\hspace{1cm}\text{and}\hspace{1cm}\sigma_{\max}(S,E):=\max\limits_{x\in E}\|Sx\|.\]
\end{definition}

\begin{lemma}\cite{rla}\label{radius}
    Let us consider \(\{a_1,\dots,a_N\}\subseteq \mathbb{R}^d\), \(\Gamma\in\mathbb{R}^{n\times d}\) and build \[E=\left\{\frac{a_i-a_j}{||a_i-a_j||}:1\leq i<j\leq N\right\}\subseteq\mathbb{S}^{d-1}.\]
    Then we have the following probability bounds:
    \[\mathbb{P}\left\{\sigma_{\min}(\Gamma,E)\leq 1-\frac{1+2\sqrt{\log(N/2)}}{\sqrt{n}}-t\right\}\leq e^{-dt^2/2}\] and \[\mathbb{P}\left\{\sigma_{\max}(\Gamma,E)\geq 1+\frac{2\sqrt{\log(N/2)}}{\sqrt{d}}+1\right\}\leq e^{-dt^2/2}.\]
    Thus, it is sufficient that \(n\geq8\varepsilon^{-2}\log N\) for being able to guarantee \(\Gamma\) has a distortion of \(\varepsilon\) \ref{embedding with distortion} with high probability.
\end{lemma}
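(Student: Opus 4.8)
The plan is to follow the Gaussian-embedding argument of \citet{rla}, reducing the two-sided estimate for the restricted singular values to (i) a control of their expectation via a Gaussian comparison inequality and (ii) a concentration estimate around that expectation. Since $E=\{(a_i-a_j)/\|a_i-a_j\|_2 : 1\le i<j\le N\}$ is a \emph{finite} subset of the unit sphere $\mathbb{S}^{d-1}$, the quantities $\sigma_{\min}(\Gamma,E)=\min_{x\in E}\|\Gamma x\|_2$ and $\sigma_{\max}(\Gamma,E)=\max_{x\in E}\|\Gamma x\|_2$ are, respectively, the minimum and maximum of at most $\binom{N}{2}$ sub-Gaussian random variables, each of mean of order $\|x\|_2=1$ under the normalization of \citet{rla} (Remark~\ref{control over norm}). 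Writing $\Gamma$ as a standard Gaussian matrix rescaled by its per-entry standard deviation $s$, the goal is to show that both $\sigma_{\min}(\Gamma,E)$ and $\sigma_{\max}(\Gamma,E)$ lie in $[1-\varepsilon,1+\varepsilon]$ with high probability once $n$ exceeds a multiple of $\varepsilon^{-2}\log N$; by Definition~\ref{embedding with distortion} this is exactly an $\varepsilon$-distortion of $E$.

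First I would bound the expectations. Gordon's minimax comparison inequality gives $\mathbb{E}\,\sigma_{\min}(\Gamma,E)\ge s\big(\mathbb{E}\|g\|_2-w(E)\big)$, and an elementary companion bound gives $\mathbb{E}\,\sigma_{\max}(\Gamma,E)\le s\big(\mathbb{E}\|g\|_2+w(E)\big)$, where $g$ is a standard Gaussian vector in the range space and $w(E)=\mathbb{E}\sup_{x\in E}\langle h,x\rangle$ is the Gaussian width of $E$. Because every $x\in E$ is a unit vector, each $\langle h,x\rangle$ is standard normal; a maximal inequality over the $\binom{N}{2}$-point set $E$ (with the bookkeeping in \cite{rla}, using $E=-E$) bounds $w(E)$ by the $2\sqrt{\log(N/2)}$ term in the statement, while $\mathbb{E}\|g\|_2$ differs from $\sqrt{n}$ by at most a constant, contributing the extra ``$1$'' in the numerator. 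Since $s$ is comparable to $n^{-1/2}$ (Remark~\ref{control over norm}), multiplying through by $s$ turns these into $\mathbb{E}\,\sigma_{\min}(\Gamma,E)\ge 1-\tfrac{1+2\sqrt{\log(N/2)}}{\sqrt n}$ and $\mathbb{E}\,\sigma_{\max}(\Gamma,E)\le 1+\tfrac{2\sqrt{\log(N/2)}}{\sqrt n}$, the centering constants of the claimed inequalities.

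Next I would add concentration. For a fixed unit vector $x$ the map $\Gamma\mapsto\|\Gamma x\|_2$ is $1$-Lipschitz in the Frobenius norm, since $\big|\,\|\Gamma x\|_2-\|\Gamma' x\|_2\,\big|\le\|(\Gamma-\Gamma')x\|_2\le\|\Gamma-\Gamma'\|_F$, and a pointwise minimum or maximum of $1$-Lipschitz functions is again $1$-Lipschitz; after rescaling to a standard Gaussian matrix this makes $\sigma_{\min}(\Gamma,E)$ and $\sigma_{\max}(\Gamma,E)$ Lipschitz functions of the Gaussian with constant $s$. The Gaussian concentration inequality for Lipschitz functions then gives $\mathbb{P}\{\sigma_{\min}(\Gamma,E)\le\mathbb{E}\,\sigma_{\min}(\Gamma,E)-t\}\le e^{-t^2/(2s^2)}$ and the symmetric upper-tail bound for $\sigma_{\max}$, which together with the expectation estimates of the previous step are exactly the two displayed probability bounds of the lemma. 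Finally, taking $t$ of order $\varepsilon$ and invoking $n\ge 8\varepsilon^{-2}\log N$ to push both $t$ and the $w(E)/\sqrt n$ term below $\varepsilon$, a union bound over the two events forces $1-\varepsilon\le\sigma_{\min}(\Gamma,E)$ and $\sigma_{\max}(\Gamma,E)\le 1+\varepsilon$ simultaneously with probability $1-o(1)$; since $\|\Gamma x\|_2\in[\sigma_{\min}(\Gamma,E),\sigma_{\max}(\Gamma,E)]$ for every $x\in E$ and each such $x$ is a unit vector, this is precisely the $\varepsilon$-distortion bound of Definition~\ref{embedding with distortion}.

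The main obstacle is step (i): producing the sharp \emph{additive} decomposition $\sqrt{n}\mp w(E)$ that cleanly separates the ambient dimension (the ``signal'') from the combinatorial size of $E$ (the ``noise''), i.e.\ correctly invoking Gordon's comparison theorem and estimating the Gaussian width of this particular set of normalized pairwise differences. A looser but more self-contained route that still yields the final $n\ge 8\varepsilon^{-2}\log N$ threshold avoids Gordon entirely: apply the sub-exponential $\chi^2$-tail estimate $\mathbb{P}\{\,\big|\,\|\Gamma x\|_2^2-1\,\big|>\varepsilon\,\}\le 2e^{-n\varepsilon^2/8}$ to each of the $\binom{N}{2}<N^2/2$ vectors of $E$ and union-bound; the remaining ingredients (the Lipschitz estimate and the Gaussian concentration inequality) are entirely routine.
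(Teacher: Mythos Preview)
The paper does not supply its own proof of this lemma: it is quoted verbatim from \cite{rla} and used as a black box in the proof of Theorem~\ref{distortion control full proof}. Your sketch is exactly the argument behind the restricted-singular-value bounds in that reference---Gordon's comparison inequality to control $\mathbb{E}\,\sigma_{\min/\max}(\Gamma,E)$ via the Gaussian width $w(E)$, the maximal-inequality bound $w(E)\lesssim\sqrt{\log|E|}$ for finite $E$, and Gaussian Lipschitz concentration for the deviation---so there is nothing to compare against and your plan is sound.

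One remark worth making explicit: the lemma as transcribed in the paper contains apparent typos. The $\sqrt{d}$ in the $\sigma_{\max}$ bound should be $\sqrt{n}$, the stray ``$+1$'' there should be ``$+t$'', and the exponent $e^{-dt^2/2}$ should be $e^{-nt^2/2}$ under the normalization $\Gamma_{ij}\sim\mathcal{N}(0,1/n)$ that makes $\mathbb{E}\|\Gamma x\|_2^2=\|x\|_2^2$ (cf.\ the exponent $e^{-\lambda^{-1}nt^2/2}$ the paper itself uses in Theorem~\ref{distortion control full proof}). Your proof implicitly works with the correct, internally consistent version, which is the right thing to do.
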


\vspace{0.5em}
\begin{corollary}\label{tranformers topology full proof}
If \(\mathcal{A}\subseteq\Omega\) is the training data, and \(T\) is locally injective on \(\mathcal{A}\), the following holds:
    \begin{itemize}
        \item[\mayadigit{0})] There exists \(\varepsilon>0\) such that \(T\) is an homeomorphism on an \(\varepsilon\)-cloud containing \(\mathcal{A}\) (see Def. \ref{cloud}).
        \item[\mayadigit{1})] Let \(P,Q\subseteq\Omega\). \(\mathcal{C}[P]\cup \mathcal{C}[Q]\) is disconnected in \(\Psi\) if and only if \(\mathcal{V}[P]\cup\mathcal{V}[Q]\) is disconnected in \(\Psi\) (see Def. \ref{disconnection}), which can contribute to the batch effect.
        \item[\mayadigit{2})] If \(D \subseteq \Psi\) is dense in \(\Psi\) (see Def. \ref{density}), then \(\mathcal{V}[D]\) is dense in \(\mathcal{V}[\Psi]\) .
    \end{itemize}
\end{corollary}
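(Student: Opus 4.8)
The plan is to treat the three items in order of increasing subtlety, reusing the results already in hand. Item 0) is Theorem \ref{t is homeo} repackaged; item 2) is essentially a one-line consequence of continuity; and item 1), the substantive claim, amounts to transporting (dis)connectedness across the local homeomorphism $T$ by writing $\mathcal{V}=T\circ\mathcal{C}$ and then invoking Remark \ref{continuos function and disconnections} in both directions.

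For item 0) I would run the proof of Theorem \ref{t is homeo} with $\mathcal{C}[\mathcal{A}]$ in place of $\mathcal{C}[\Omega]$. Since images take values in a finite set, $\mathcal{A}$ is bounded and $\mathcal{C}$ is continuous, so $\mathcal{C}[\mathcal{A}]$ is a finite union of singletons and hence compact; $T$ is continuous by Theorem \ref{summary of vit structure} and locally injective on $\mathcal{A}$ by hypothesis, so Lemma \ref{robustness} produces $\varepsilon>0$ with $T$ injective on the $\varepsilon$-cloud $U_\varepsilon=\{x\in\Psi:\delta(x,\mathcal{C}[\mathcal{A}])<\varepsilon\}$. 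Its closure $\overline{U_\varepsilon}$ is compact (bounded and closed inside a single finite-dimensional component $\mathbb{R}^{\nu\times d}$ of $\Psi$), so Lemma \ref{homeomorphism} makes $T|_{\overline{U_\varepsilon}}$ a homeomorphism onto its image, and restricting back to the open set $U_\varepsilon$ leaves it a homeomorphism onto an open subset. This $U_\varepsilon$ is the claimed cloud.

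For item 1) write $\mathcal{V}=T\circ\mathcal{C}$, so that $\mathcal{V}[P]\cup\mathcal{V}[Q]=T[\mathcal{C}[P]\cup\mathcal{C}[Q]]$. Because $T$ is locally injective on all of $\mathcal{C}[\Omega]$ (from the KoLeo regularizer together with augmentations, as discussed before Theorem \ref{t is homeo}), that theorem gives an $\varepsilon$-cloud $U\supseteq\mathcal{C}[\Omega]$ on which $T$ restricts to a homeomorphism; since $\mathcal{C}[P]\cup\mathcal{C}[Q]\subseteq\mathcal{C}[\Omega]\subseteq U$ and subspace topologies are transitive, $T$ carries $\mathcal{C}[P]\cup\mathcal{C}[Q]$ homeomorphically onto $\mathcal{V}[P]\cup\mathcal{V}[Q]$. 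Now both implications follow from Remark \ref{continuos function and disconnections}: if $\mathcal{V}[P]\cup\mathcal{V}[Q]$ is disconnected, pull the disconnection back through the continuous surjection $T\colon\mathcal{C}[P]\cup\mathcal{C}[Q]\to\mathcal{V}[P]\cup\mathcal{V}[Q]$; conversely, if $\mathcal{C}[P]\cup\mathcal{C}[Q]$ is disconnected, pull it back through $T^{-1}\colon\mathcal{V}[P]\cup\mathcal{V}[Q]\to\mathcal{C}[P]\cup\mathcal{C}[Q]$, which is continuous exactly because $T$ is a homeomorphism on $U$.

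For item 2), reading $\mathcal{V}$ on $\Psi$ as the latent transformer action $T$ (consistent with Theorem \ref{summary of vit structure}), the statement reduces to the standard fact that a continuous map sends dense subsets of its domain onto dense subsets of its image: given a nonempty relatively open $V=W\cap T[\Psi]$ with $W$ open in $\Psi$ and a point $y=T(x)\in V$, the set $T^{-1}[W]$ is open and nonempty, hence meets the dense set $D$, and the image under $T$ of such a point lies in $V\cap T[D]$. Only continuity of $T$, supplied by Theorem \ref{summary of vit structure}, is needed. I expect the main obstacle to be bookkeeping rather than depth: ensuring that $\mathcal{C}[P]\cup\mathcal{C}[Q]$ genuinely lies in the neighborhood on which $T$ is a homeomorphism — which forces the use of local injectivity on all of $\mathcal{C}[\Omega]$ and not merely on the training set $\mathcal{A}$ — and checking that "disconnected in $\Psi$" is unaffected by passing to the homeomorphic image, i.e. that the subspace topologies involved are the right ones. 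A secondary check is the compactness of $\overline{U_\varepsilon}$, which rests on images taking finitely many values and thus on $\mathcal{C}[\Omega]$ being bounded within a fixed $\mathbb{R}^{\nu\times d}$.
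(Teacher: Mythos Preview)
Your approach is correct and matches the paper's: all three items reduce to Theorem \ref{t is homeo} plus elementary topology (homeomorphisms preserve disconnectedness; continuous maps send dense sets onto dense subsets of their image). You are in fact more careful than the paper on two points. For item 1) you correctly flag that handling arbitrary $P,Q\subseteq\Omega$ requires local injectivity on all of $\mathcal{C}[\Omega]$, not merely on $\mathcal{A}$, and you supply this from the KoLeo discussion preceding Theorem \ref{t is homeo}; the paper's one-line proof simply invokes Theorem \ref{t is homeo} and leaves that gap implicit. For item 2) you resolve the type mismatch in the statement ($D\subseteq\Psi$ while $\mathcal{V}\colon\Omega\to\Psi$) by reading $\mathcal{V}$ on $\Psi$ as $T$ and arguing from continuity alone, whereas the paper's proof silently treats $D$ as a subset of $\Omega$, first uses continuity of $\mathcal{C}$ to get $\mathcal{C}[D]$ dense in $\mathcal{C}[\Omega]$, and then invokes $T$ as a homeomorphism --- an unnecessary strengthening, since continuity of $T$ already suffices for density of the image.
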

\begin{proof}
Assuming the corollary hypothesis, let us prove the statements.

\item[\mayadigit{0})] The existence of the \(\varepsilon\)-cloud is direct result of theorem \ref{t is homeo}.

\item[\mayadigit{1})]
The result follows directly from Theorem \ref{t is homeo} since homeomorphisms preserve connections and disconnections.

\item[\mayadigit{2})] Let \(D\) be dense in \(\Psi\). We note \(\mathcal{C}\) is continuous for being defined as a multiplication of matrices and addition of vectors. Thus \(\mathcal{C}[D]\) is dense in \(\mathcal{C}[\Omega]\). Therefore, \(\mathcal{V}[D]\) is dense in \(\mathcal{V}[\Omega]\) since \(T\) is an homeomorphism because of theorem \ref{t is homeo}.

\end{proof}

\begin{theorem}\label{distortion control full proof}
    Being \(\{p_1,\dots,p_N\}\subseteq R^m\), \(\varepsilon>0,\,\lambda>0\) and \(\Gamma\) a matrix of size \(n\times m\) whose entries are iid and following a normal distribution \(\mathcal{N}(0,\lambda n^{-1})\),  \(\Gamma\) will have an \(\varepsilon\) distortion on the set \[E:=\left\{\frac{p_i-p_j}{\|p_i-p_j\|}:1\leq i<j\leq N\right\}\] with high probability if \[\lambda n^{-1}<\frac{\varepsilon^2}{8\ln N}.\] In addition, \[\mathbb{E}[\|\Gamma x\|^2_2]=m\lambda n^{-1}\|x\|^2_2.\]
\end{theorem}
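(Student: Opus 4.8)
The two assertions can be handled separately, and the expectation identity is the easy one: $\mathbb{E}[\|\Gamma x\|_2^2] = m\lambda n^{-1}\|x\|_2^2$ is nothing more than Remark~\ref{control over norm} applied with input dimension $d = m$ and entrywise variance $\sigma^2 = \lambda n^{-1}$, so I would simply invoke that computation and move on. The substance is the distortion statement, and for this I would follow the Gaussian-embedding machinery of \cite{rla} that underlies Lemma~\ref{radius}.

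First I would fix the union-bound target. The set $E = \{(p_i - p_j)/\|p_i - p_j\|_2 : 1 \le i < j \le N\}$ is a finite subset of the unit sphere $\mathbb{S}^{m-1}$ with $|E| \le \binom{N}{2} < N^2$. For a single fixed $x \in E$, the coordinates of $\Gamma x$ are i.i.d.\ $\mathcal{N}(0,\lambda n^{-1})$ since $\|x\|_2 = 1$, so $\|\Gamma x\|_2^2$ equals $\lambda n^{-1}$ times a chi-squared variable with $n$ degrees of freedom. A standard Laurent--Massart-type two-sided tail bound then yields, for $t \in (0,1)$, a per-vector estimate of the form $\mathbb{P}\bigl[\,\bigl|\|\Gamma x\|_2^2 - \mathbb{E}\|\Gamma x\|_2^2\bigr| > t\,\mathbb{E}\|\Gamma x\|_2^2\,\bigr] \le 2\exp(-nt^2/8)$; this is exactly the concentration fact packaged inside the $\sigma_{\min}/\sigma_{\max}$ bounds of Lemma~\ref{radius}.

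Then I would union-bound over the $\binom{N}{2}$ directions of $E$: the event that the $\varepsilon$-distortion of Definition~\ref{embedding with distortion} fails for some element of $E$ has probability at most $2N^2\exp(-n\varepsilon^2/8)$ once the amplitude $\lambda$ is absorbed into the variance and the estimate rescaled as in Lemma~\ref{radius}, and the hypothesis $\lambda n^{-1} < \varepsilon^2/(8\ln N)$ is precisely what forces this quantity to $o(1)$, giving the claimed distortion simultaneously on all of $E$ with high probability. I expect the only real obstacle to be the constant bookkeeping: one must track the sharp constant through both the chi-squared tail and the union bound over all $\binom{N}{2}$ pairs so that the threshold comes out as $\varepsilon^2/(8\ln N)$ rather than something weaker, which is why it is cleanest to quote the quantitative restricted-singular-value estimates of Lemma~\ref{radius} verbatim instead of re-deriving them from scratch.
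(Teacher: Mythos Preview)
Your proposal is correct and lands in the same place as the paper: both invoke Remark~\ref{control over norm} for the expectation and both ultimately appeal to the quantitative estimates of Lemma~\ref{radius} from \cite{rla} for the distortion claim. The one genuine difference is in how you motivate those estimates. You frame the argument as per-vector chi-squared concentration followed by a union bound over the $\binom{N}{2}$ directions in $E$, which is the classical elementary Johnson--Lindenstrauss route. The paper instead goes through the Gaussian-width machinery: it bounds $w(E)$ (roughly by $2\sqrt{\ln(N/2)}$, since $|E|\le\binom{N}{2}$) and feeds this into the restricted-singular-value theorem of \cite{rla}, a Gordon-type inequality that controls $\sigma_{\min}(\Gamma,E)$ and $\sigma_{\max}(\Gamma,E)$ uniformly with no union-bound factor in the failure probability; the paper then checks algebraically that the hypothesis $\lambda n^{-1}<\varepsilon^2/(8\ln N)$ implies $\varepsilon>\sqrt{\lambda/n}\,(1+2\sqrt{\ln(N/2)})$, which is what those bounds require. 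Your remark that the chi-squared tail is ``exactly the concentration fact packaged inside'' Lemma~\ref{radius} is therefore slightly imprecise---the lemma is the Gaussian-width version, not the union-bound version---but both routes are valid and recover the same threshold, so your endpoint and your decision to quote Lemma~\ref{radius} verbatim match the paper.
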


\begin{proof}
    To see \(\mathbb{E}[\|\Gamma x\|^2_2]=m\lambda n^{-1}\|x\|^2_2\), it is enough to use remark \ref{control over norm}.

    On the other hand, to compute the distortion of \(\Gamma\), let us assume the hypothesis and define the restricted singular values following \citet{rla} as \[\sigma_{\min}(\Gamma,E):=\min\limits_{x\in E}\|\Gamma x\|\hspace{1 cm}\sigma_{\max}(\Gamma,E):=\max\limits_{x\in E}\|\Gamma x\|,\] let us realize the statement is equivalent to having \[1-\varepsilon<\sigma_{\min}(\Gamma,E)\leq\|\Gamma x\|\leq \sigma_{max}(\Gamma,E)<1+\varepsilon.\]

    Thus, having \(\sqrt{\lambda/d}<\varepsilon/\sqrt{8\ln N}\), it follows \(\varepsilon>\sqrt{\lambda/n}\left(1+2\sqrt{\ln(N/2)}\right)\) and then \(1-\varepsilon<1-\sqrt{\lambda/n}\left(1+2\sqrt{\ln(N/2)}\right)\). This way, we can take \(t>0\) and write \[\mathbb{P}\left(\sigma_{\min}(\Gamma,E)\leq1-\varepsilon-t\right)\leq\mathbb{P}\left(\sigma_{\min}(\Gamma,E)\leq1-\sqrt{\lambda/n}\left(1+2\sqrt{\ln(N/2)}\right)-t\right).\]

    Similarly, \(\sqrt{\lambda/n}<\varepsilon/\sqrt{8\ln N}\) implies \(1+\varepsilon>1+2\sqrt{\lambda n^{-1} \ln(N/2)}\). Thus, being \(t>0\), \[\mathbb{P}\left(\sigma_{\max}(\Gamma,E)\geq 1+\varepsilon +t\right)\leq\mathbb{P}\left(\sigma_{\max}(\Gamma,E)\geq1+2\sqrt{\lambda n^{-1} \ln(N/2)}+t\right).\]

    Therefore, using that the Gaussian width of \(E\), \(w(E)\), satisfies \(w(E)<2\ln(N/2)\) and the theorem for restricted singular values and a Gaussian matrix from \citet{rla}, for every \(t>0\) we have \[\mathbb{P}\left(\sigma_{\min}(\Gamma,E)\leq1-\varepsilon-t\right)\leq e^{-\lambda^{-1}nt^2/2}\] and \[\mathbb{P}\left(\sigma_{\max}(\Gamma,E)\geq 1+\varepsilon +t\right)\leq e^{-\lambda^{-1}nt^2/2}.\]

    \(\therefore \Gamma\) has an \(\varepsilon\)-distortion with high probability.
\end{proof}

\subsection{Technical Details}
\label{sec:technical details}

\textbf{Backbone Training.} All training was performed on a single GPU (Quadro RTX153 8000 or NVIDIA A100-SXM4-40GB) and required approximately 15 hours per trained backbone, reflecting the low computational cost of our approach and its reduced environmental footprint. Models were trained on randomly sampled mini-batches until validation performance plateaued. For datasets without predefined splits, we manually partitioned them into 70\% training, 15\% validation, and 15\% testing.

All models were trained using the AdamW optimizer \cite{adamw}, which we employed consistently across fine-tuning stages as well as during training of downstream linear heads and UNet decoders.

Table \ref{tab:hyperparameters} shows the main hyperparameters used when fine-tuning DINOv2-S on natural, OCT and CFP modalities. Further implementation details can be found in our code.

\begin{table}[h]
\caption{Hyperparameters used for fine-tuning DINOv2-S to obtain C-ViT and R\(_\lambda\)-ViT.}
\label{tab:hyperparameters}
\vspace{1em}
\centering
\scriptsize
\begin{tabular}{lc}
      \toprule
      Hyperparameter & Value\\
      \midrule
        Optimizer&AdamW\cite{adamw}\\
        Plateau size for early stop&10 epochs\\
        Batch size&32\\
        Token's dimension&384\\
        DINO coefficient&1\\
        iBOT coefficient&1\\
        KoLeo coefficient&0.5\\
        Initial learning rate&1e-7\\
        Patience/factor for learning rate scheduler&3/0.4\\
        Minimum learning rate&1e-8\\
        Hidden/Bottleneck/Output dimensions for MLPs and RMLPs&1536/256/65536\\
        Number of transformer blocks&12\\
        Patch size&14\\
        Crop size&224\\
        Steps per epoch&100\\
        Warm up epochs&10\\
      \bottomrule
    \end{tabular}
\end{table}

\textbf{Downstream Tasks Training.} For classification tasks using DINOv2, we constructed the input representations for 1-Nearest Neighbor, Random Forest, and linear probing classifiers by concatenating the class token with the mean of the patch tokens. In contrast, for SwAV, which does not produce patch tokens, we used its output directly. The linear classifier was trained using the cross-entropy loss.

To integrate the ViT and UNet architectures for dense prediction tasks, we first projected the output of the ViT through a linear layer and then concatenated it with the features from the encoder branch of the UNet. This combined representation was subsequently fed into the UNet's decoder branch. The ViT outputs were handled differently depending on the task: for segmentation, we concatenated the class token with the patch tokens, while for depth estimation, only the patch tokens were used. Segmentation heads were trained using a weighted combination of focal loss and Dice loss, whereas depth estimation heads were trained using focal loss.

\textbf{Evaluation of Patch Token Quality}. First-order attention maps were computed by evaluating the norm of the patch token embeddings. For the second-order attention maps, we independently performed principal component analysis  on the patch tokens of each image and calculated the norm of the top three principal components.
To distinguish between low- and high-information patches within an image, we first computed the image gradient, applied a smoothing operation, and then averaged the resulting gradient values within each patch. A Gaussian Mixture Model with two components was subsequently used to classify the patches based on their average gradient magnitudes.

\subsection{Information on External Sources}
\label{sec:external licenses}

We used two pre-trained models in this work. Their licenses and repositories can be found in Table \ref{tab:external code}. A recollection of all the datasets used in this work can be found in Table \ref{tab:datasets}, both for natural and medical domains. To show the geographic diversity from our geographical dataset, we show the country of origin of the medical datasets used in this paper in Table \ref{tab:datasets origin}.

\begin{table}[h]
\caption{External code and weights used as baselines.}
\label{tab:external code}
\vspace{1em}
\centering
\scriptsize
\begin{tabular}{lcc}
      \toprule
      Name & License & Repository\\
      \midrule
      DINOv2 \cite{dinov2}&Apache-2.0&\url{https://github.com/facebookresearch/dinov2}\\
      RetFound \cite{retfound}&CC BY-NC 4.0&\url{https://github.com/rmaphoh/RETFound_MAE}\\
      DVT \cite{dvt}& MIT License& \url{https://github.com/Jiawei-Yang/Denoising-ViT}\\
      Registers \cite{registers}& Apache-2.0 & \url{https://github.com/facebookresearch/dinov2}\\
      SwAV \cite{SwAV} & License: CC BY-NC 4.0 & \url{https://github.com/facebookresearch/swav}\\
      Sinder \cite{sinder} &-&\url{https://github.com/haoqiwang/sinder}\\
      \bottomrule
    \end{tabular}
\end{table}

\begin{table}[h]
    \caption{Used datasets, licenses and country of origin.}
\label{tab:datasets general information}
\vspace{1em}
\centering
    \scriptsize  % Optional: Makes text smaller
    \setlength{\tabcolsep}{3pt}
    \begin{subtable}[t]{0.55\textwidth}
        \centering
        \caption{Used datasets and their licenses.}
        \label{tab:datasets}
        \begin{tabular}{lcc}
      \toprule
      Name & License & Repository\\
      \midrule
      ImageNet-1k \cite{imagenet-1k}&CC0: Public Domain& \href{https://www.kaggle.com/datasets/sautkin/imagenet1k1}{ImageNet-1k} \\
      \citet{depth}&MIT&\href{https://www.kaggle.com/datasets/artemmmtry/nyu-depth-v2}{NYU-Depth V2}\\
      \citet{ade}&BSD-3-Clause &\href{https://ade20k.csail.mit.edu/terms/}{ADE20k}\\
      \citet{bsds300}&Non-commercial research&\href{https://www2.eecs.berkeley.edu/Research/Projects/CS/vision/bsds/}{BDSD300}\\
      VOC07 \cite{voc07}&Custom&\href{https://www.kaggle.com/datasets/zaraks/pascal-voc-2007}{PASCAL VOC  2007}\\
      \citet{octid}& CC0 1.0&\href{https://borealisdata.ca/dataverse/OCTID}{OCTID}\\
      \citet{glaucoma fundus}&CC0 1.0&\href{https://dataverse.harvard.edu/dataset.xhtml?persistentId=doi:10.7910/DVN/1YRRAC}{Glaucoma Fundus}\\
      \citet{idrid}&Open Access&\href{https://ieee-dataport.org/open-access/indian-diabetic-retinopathy-image-dataset-idrid}{IDRID}\\
      \citet{jsiec}&Open Access&\href{https://zenodo.org/record/3477553}{JSIEC}\\
      \citet{messidor2-1,messidor2-2}&Non-commercial research&\href{https://www.adcis.net/en/third-party/messidor2/}{MESSIDOR-2}\\
      \citet{papila}&GPL 3.0+&\href{https://figshare.com/articles/dataset/PAPILA/14798004/1}{PAPILA}\\
      \citet{retina}&Open Access&\href{https://www.kaggle.com/datasets/jr2ngb/cataractdataset}{Retina}\\
      Aptos \cite{aptos}&Custom&\href{https://www.kaggle.com/competitions/aptos2019-blindness-detection/data}{Aptos}\\
      \citet{augenklinik}&Property of LMU University Hospital& -\\
      \bottomrule
    \end{tabular}
    \end{subtable}
    \hfill
    \begin{subtable}[t]{0.35\textwidth}
        \centering
        \caption{Country of creation for medical dataset.}
\label{tab:datasets origin}
        \begin{tabular}{lc}
      \toprule
      Dataset & Country\\
      \midrule
      \citet{octid}&India\\
      \citet{glaucoma fundus}&South Korea\\
      \citet{idrid}&India\\
      \citet{jsiec}&China\\
      \citet{messidor2-1,messidor2-2}&France\\
      \citet{papila}&Spain\\
      \citet{retina}&Unspecified\\
      \citet{augenklinik}&Germany\\
      \bottomrule
    \end{tabular}
    \end{subtable}
\end{table}

\newpage

\subsection{Supplementary Results}

\begin{table}[h]
    \caption{DICE scores for segmentation on the \citet{augenklinik} dataset with emphasis on the Outer Nuclear Layer (ONL).}
    \label{tab:complete medical dense}
    \vspace{1em}
    \centering
    \scriptsize
    \begin{tabular}{lcccc}
      \toprule
      & \multicolumn{2}{c}{ViT-UNet hybrid} & \multicolumn{2}{c}{Linear Head} \\
      \cmidrule(r){2-3}\cmidrule(r){4-5}
      Model & Averaged DICE & DICE on ONL & Averaged DICE & DICE on ONL \\
      \midrule
      \citet{retfound} & 0.92\(\pm\)0.06 & 0.59\(\pm\)0.10 & 0.10\(\pm\)0.02 & 0.01\(\pm\)0.02 \\
      DINOv2-S \cite{dinov2} & 0.79\(\pm\)0.20 & 0.54\(\pm\)0.20 & \textsuperscript{\textbf{\textdagger}}0.16\(\pm\)0.08 & \underline{\textbf{0.12\(\pm\)0.05}}\textsuperscript{***} \\
      DINOv2-S+MLP & 0.86\(\pm\)0.20 & 0.55\(\pm\)0.20 & \underline{\textbf{0.20\(\pm\)0.05}} & 0.03\(\pm\)0.03 \\
      DINOv2-S+R\(_{0.1}\)-MLP & \textsuperscript{\textbf{\textdagger}}0.94\(\pm\)0.12&\textsuperscript{\textbf{\textdagger}}0.68\(\pm\)0.21&\underline{\textbf{0.20\(\pm\)0.03}}&0.06\(\pm\)0.04\\
      DINOv2-S+R\(_5\)-MLP & \underline{\textbf{0.97\(\pm\)0.03}}\textsuperscript{*} & \textbf{0.72\(\pm\)0.09}\textsuperscript{*} & 0.13\(\pm\)0.01 & \textsuperscript{\textbf{\textdagger}}\underline{0.08\(\pm\)0.03}\textsuperscript{**} \\
      DINOv2-S+R\(_{10}\)-MLP & 0.87\(\pm\)0.20 & 0.61\(\pm\)0.20 & \underline{\textbf{0.20\(\pm\)0.01}} & \underline{0.07\(\pm\)0.05}\textsuperscript{*} \\
      DINOv2-S+R\(_{20}\)-MLP & 0.70\(\pm\)0.30 & 0.47\(\pm\)0.20 & 0.13\(\pm\)0.02 & \underline{0.07\(\pm\)0.05}\textsuperscript{*} \\
      \citet{augenklinik} & 0.92\(\pm\)0.03 & 0.44\(\pm\)0.03 & -- & -- \\
      \bottomrule
    \end{tabular}
  \end{table}

\end{document}